\documentclass[conference]{IEEEtran}
\usepackage{times}

\usepackage[numbers]{natbib}
\usepackage{multicol}
\usepackage[bookmarks=true]{hyperref}
\usepackage{multirow}
\usepackage{etoc}
\usepackage{microtype}
\usepackage{graphicx}
\usepackage{subfigure}
\usepackage{booktabs} 
\usepackage{float}
\usepackage{hyperref}
\usepackage{amsmath}
\usepackage{amssymb}
\usepackage{amsthm}
\usepackage{graphicx}
\usepackage{float}
\newtheorem{assumption}{Assumption}
\usepackage{xcolor}
\usepackage{colortbl}
\usepackage{algorithm}
\usepackage{algpseudocode}
\let\STATE\State
\let\FOR\For
\let\ENDFOR\EndFor
\let\IF\If
\let\ELSE\Else

\let\ENDIF\EndIf

\usepackage{microtype}
\usepackage{newtxtext,newtxmath}

\usepackage{booktabs}
\usepackage{multirow}
\usepackage{enumerate}
\usepackage{makecell}
\usepackage[table]{xcolor}   
\usepackage{threeparttable}
\usepackage{amsmath,amssymb}
\newenvironment{aligned*}{\begin{aligned}}{\end{aligned}}
\usepackage{caption}
\usepackage{array}
\usepackage{tabularx}

\usepackage{amsmath}
\makeatletter
\def\maketag@@@#1{\hbox{\m@th\normalsize#1}}
\makeatother

\definecolor{citeTeal}{HTML}{1976D2} 

\hypersetup{
  colorlinks=true,
  linkcolor=citeTeal, 
  citecolor=citeTeal, 
  urlcolor=citeTeal,  
  filecolor=citeTeal  
}
\definecolor{HeadGray}{HTML}{4B5563} 

\newlength{\benchw}
\newlength{\envimgh}
\newcommand{\benchcell}[2]{%
  \begin{minipage}[t]{\benchw}
    {\raggedright #1\par}
    \vspace{0.25em}
    \centering
    \includegraphics[width=\linewidth,height=\envimgh,keepaspectratio]{#2}
  \end{minipage}
}
\definecolor{MuninnBlue}{HTML}{2563EB}
\definecolor{nord10}{HTML}{1F3A5F} 

\newcommand{\nordy}[1]{#1}
\newcolumntype{L}[1]{>{\raggedright\arraybackslash}p{#1}}
\newcolumntype{Y}{>{\raggedright\arraybackslash}X} 
\newcolumntype{C}[1]{>{\centering\arraybackslash}p{#1}}
\newcolumntype{D}[1]{>{\columncolor{MuninnBlue!6}\centering\arraybackslash}p{#1}}

\newcommand{\theadb}[1]{\textcolor{MuninnBlue}{\bfseries #1}}

\newcommand{\upar}{\textcolor{MuninnBlue}{\(\uparrow\)}}
\newcommand{\dnar}{\textcolor{MuninnBlue}{\(\downarrow\)}}

\renewcommand{\theadb}[1]{\textcolor{HeadGray}{\bfseries #1}}
\renewcommand{\upar}{\textcolor{HeadGray}{\(\uparrow\)}}
\renewcommand{\dnar}{\textcolor{HeadGray}{\(\downarrow\)}}

\DeclareRobustCommand{\iconNo}{\textcolor{HeadGray}{\ensuremath{\times}}}            
\DeclareRobustCommand{\iconYes}{\textcolor{HeadGray}{\ensuremath{\checkmark}}}       
\DeclareRobustCommand{\iconOften}{\textcolor{HeadGray}{\ensuremath{\circlearrowright}}} 
\DeclareRobustCommand{\iconMod}{\textcolor{HeadGray}{\ensuremath{\triangle}}}         
\DeclareRobustCommand{\iconHigh}{\textcolor{HeadGray}{\ensuremath{\blacktriangle}}}   
               
\newcommand{\plat}[1]{\rotatebox[origin=c]{90}{\textcolor{HeadGray}{\bfseries #1}}}

\newcommand{\sectionrow}[1]{%
  \addlinespace[0.35em]
  \rowcolor{MuninnBlue!7}\multicolumn{10}{@{}l@{}}{\theadb{#1}}\\
  \addlinespace[0.15em]
}

\definecolor{Denim}{HTML}{1F4E79} 
\colorlet{tbmcolor}{Denim}
\newcommand{\tbm}[1]{\textcolor{tbmcolor}{#1}}

\newcommand{\Muninnlogo}{%
  \raisebox{-0.15em}{\includegraphics[height=1.1em]{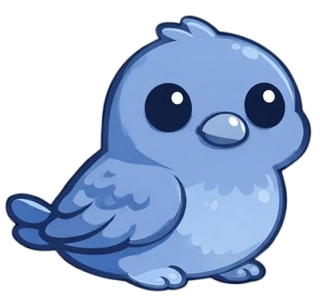}}%
  \hspace{4pt}%
}
\newcommand{\Muninlogo}{\Muninnlogo}

\newlength{\taskimgh}
\newcommand{\taskhead}[3]{%
  \begin{minipage}[t]{\linewidth}
    \centering
    {\theadb{#1}\par}
    \vspace{0.15em}
    {\textcolor{HeadGray}{\scriptsize #2}\par}
    \vspace{0.35em}
    \includegraphics[width=0.95\linewidth,height=\taskimgh,keepaspectratio]{#3}
  \end{minipage}
}

\algrenewcommand\algorithmicindent{1.0em}
\algrenewcommand\algorithmiccomment[1]{\hfill{\scriptsize\textcolor{HeadGray}{// #1}}}
\algrenewcommand\algorithmicrequire{\textcolor{HeadGray}{\bfseries Input:}}
\algrenewcommand\algorithmicensure{\textcolor{HeadGray}{\bfseries Output:}}
\algrenewcommand\algorithmicforall{\textcolor{HeadGray}{\bfseries for all}}
\algrenewcommand\algorithmicdo{\textcolor{HeadGray}{\bfseries do}}
\algrenewcommand\algorithmicend{\textcolor{HeadGray}{\bfseries end}}
\algrenewcommand\algorithmicif{\textcolor{HeadGray}{\bfseries if}}
\algrenewcommand\algorithmicthen{\textcolor{HeadGray}{\bfseries then}}
\algrenewcommand\algorithmicelse{\textcolor{HeadGray}{\bfseries else}}
\algrenewcommand\algorithmicfor{\textcolor{HeadGray}{\bfseries for}}
\algrenewcommand\algorithmicwhile{\textcolor{HeadGray}{\bfseries while}}
\algrenewcommand\algorithmicreturn{\textcolor{HeadGray}{\bfseries return}}

\begin{document}

\title{\texorpdfstring{\Muninnlogo \nordy{Muninn}: Your Trajectory Diffusion Model But Faster}{\nordy{Muninn}: Your Trajectory Diffusion Model But Faster}}




%
\author{\authorblockN{
Gokul Puthumanaillam\authorrefmark{1},
Hao Jiang\authorrefmark{1},
Ruben Hernandez\authorrefmark{1},
Jose Fuentes\authorrefmark{2},\\
Paulo Padrao\authorrefmark{3},
Leonardo Bobadilla\authorrefmark{2}, and
Melkior Ornik\authorrefmark{1}}
\authorblockA{\authorrefmark{1}University of Illinois Urbana-Champaign. Email: \texttt{\{gokulp2, haoj5, rubenjh2, mornik\}@illinois.edu}}
\authorblockA{\authorrefmark{2}Florida International University. Email: \texttt{jfuen099@fiu.edu, bobadilla@cs.fiu.edu}}
\authorblockA{\authorrefmark{3}Providence College. Email: \texttt{ppadraol@providence.edu}}
}

\maketitle

\begin{abstract}
Diffusion-based trajectory planners can synthesize rich, multimodal robot motions, but their iterative denoising makes online planning and control prohibitively slow. 
Existing accelerations either modify the sampler or compress the network--sacrificing plan quality or requiring retraining without accounting for downstream control risk. 
We address the problem of making diffusion-based trajectory planners fast enough for real-time robot use without retraining the model or sacrificing trajectory quality, and in a way that works across diverse state-space diffusion architectures.
Our key insight is that diffusion trajectory planners expose two signals we can exploit: a cheap probe of how their internal trajectory representation changes across steps, and analytic coefficients that describe how denoiser errors affect the sampler's state update. By calibrating the first signal against the second on offline runs, we obtain a per-step score that upper-bounds how far the final trajectory can deviate when we reuse a cached denoiser output, and we treat this bound as an uncertainty budget that we can spend over the denoising process.
Building on this insight, we present \Muninnlogo \nordy{Muninn}, a training-free caching wrapper that tracks this uncertainty budget during sampling and, at each diffusion step, chooses between reusing a cached denoiser output when the predicted deviation is small and recomputing the denoiser when it is not.
Across standard benchmarks spanning offline RL planning (D4RL), configuration-space motion planning, and visuomotor diffusion policies, \nordy{Muninn} delivers up to $4.6\times$ wall-clock speedups across several trajectory diffusion planners and diffusion policies by reducing denoiser evaluations, while preserving task performance and safety metrics.
\nordy{Muninn} further certifies---at a user-chosen deviation tolerance and risk level---that cached rollouts remain within a specified distance of their full-compute counterparts, and we validate these gains in real-time closed-loop navigation and manipulation hardware deployments.
Code, dataset and supplementary videos available at: \hyperlink{https://github.com/gokulp01/Muninn}{https://github.com/gokulp01/Muninn}.
\end{abstract}

\IEEEpeerreviewmaketitle

\section{Introduction}
\label{submission}
Autonomous robots are increasingly steered by diffusion-based trajectory models \cite{a85, a86} that generate entire motion sequences \cite{a38, a41}, capturing rich, multimodal futures \cite{a39} in cluttered and uncertain environments. These planners are typically run as fixed-depth reverse diffusion processes \cite{a47}: at every control cycle, the robot draws noise, rolls out dozens of denoising steps \cite{a42}, and only then extracts a trajectory. On powerful servers this is acceptable; on embedded platforms or high-rate control loops, the cost is often prohibitive \cite{a48}. Simply truncating the sampler, shrinking the network, or running at a lower planning rate can help \cite{a46}, but risks brittle failures: the model's behavior changes in opaque ways, and there is no clear link between ``saved compute" and ``lost plan quality." This paper, therefore, poses a precise question: how can we speed up diffusion-based trajectory planners by reusing internal computations, while keeping their behavior close to the original full-compute model and without retraining?
\begin{figure} [t]
    \centering
    \includegraphics[width=\linewidth]{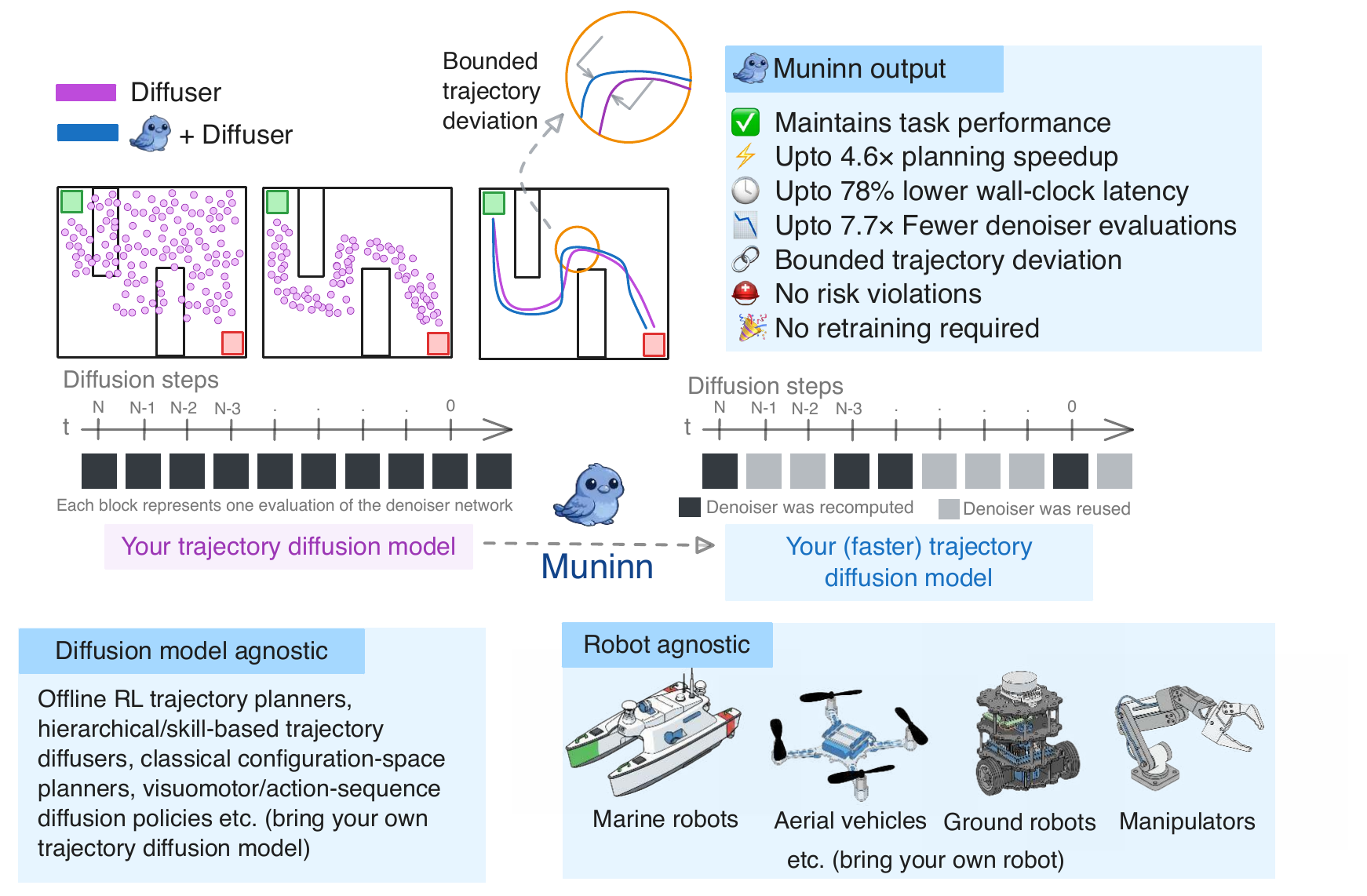}
    \caption{High-level overview of \Muninlogo Muninn applied to Diffuser. Diffuser recomputes the denoiser at every diffusion step, while Muninn wraps the same model and selectively reuses cached denoiser outputs at some steps, reducing compute while leaving the overall trajectory generation unchanged.
}
    \label{fig:teaser}
\end{figure}
A large body of work bypasses this issue by relying on model-based planning \cite{a53} and trajectory optimization \cite{a49, a51} with handcrafted costs and dynamics. These approaches can be fast \cite{a50}, but tend to struggle when dynamics, human interactions, or multi-goal behaviors induce strongly multimodal solution sets \cite{a54, a52}. Diffusion-based planners address this \cite{a55} by learning complex trajectory distributions from data, yet they usually treat the denoiser as a black box and accept its inference cost as fixed. Efforts to reduce this cost follow several paths. One direction distills the diffusion process \cite{a56} into shallower networks \cite{a57} or coarse-to-fine samplers \cite{a58}, but requires retraining for each architecture and sometimes each task. Another direction trains early-exit or policy networks \cite{a59} to decide when to stop denoising or which subnet to run; these methods can adapt computation per sample, but demand large-scale reinforcement learning or supervision \cite{a60} and tie the acceleration logic to a specific model. 

Recently, training-free accelerations \cite{a61, a66} have emerged that operate at inference time \cite{a64}. These approaches are attractive because they require no new data \cite{a62}, works on existing architectures \cite{a63}, and can deliver large wall-clock gains \cite{a65}. However, they are typically calibrated on perceptual or pixel-space metrics and developed for vision backbones. Little is known about how their approximation errors translate into trajectory deviations, and almost nothing is said about the consequences of those deviations for closed-loop control. 

We make two observations about diffusion-based planners: (i) every denoising step passes the current noisy trajectory and diffusion timestep through a relatively cheap input-processing stage to produce a representation that feeds the deeper network blocks. Changes in this representation across timesteps provide a natural, low-cost signal for how stable the trajectory is becoming, (ii) the sampler used to map denoiser outputs into new trajectories has a known analytic form: for a given schedule, we can write down exactly how a perturbation in the predicted noise at step $t$ changes the state at step $t-1$. Taken together, these two facts suggest that the planner itself contains enough structure to estimate, for each step, how risky it would be to reuse a past denoiser output instead of recomputing a new one. 
These observations motivate the central idea of this work.

\noindent \tbm{Statement of contributions: }We introduce (i) \Muninnlogo \nordy{Muninn}\footnote{``For Huginn I fear lest he come not home, / But for Muninn my care is more ..."-- \textit{Norse mythology}.
\Muninnlogo Muninn is Odin's raven of \textit{memory}.
We borrow the name as a metaphor--relying on memory to move faster, while being careful not to stray far from the path we would have taken.}, a training-free wrapper for trajectory diffusion models that uses a cheap, timestep-wise probe of the internal representation to predict how much the final trajectory would change if a cached denoiser output were reused; (ii) a trajectory-level error budget that aggregates these predictions into decisions about when to reuse past denoiser outputs and when to recompute them, trading off planning speed against fidelity; and (iii) an evaluation on several diffusion-based planners across simulated and hardware navigation and manipulation benchmarks. We also release the code, dataset and models.

\textit{Note (scope).}
Diffusion planners span many architectures \cite{a85, a86}; we focus on \emph{state-space} trajectory diffusion models that denoise future state or state--action sequences conditioned on observations and goals.
Extending \nordy{Muninn} to other families mainly requires redefining the probe; the caching framework is unchanged, though the adaptation can be nontrivial.

\section{Related work}
\noindent \textit{\tbm{Trajectory diffusion models:}}
Early learning-based trajectory planners used imitation learning \cite{a1, a2} and variational generative models \cite{a3} to clone expert demonstrations. Conditional VAEs \cite{a4}, normalizing flows \cite{a5}, and graph latent models \cite{a6} capture diverse behaviors, but emit a single trajectory and rely on delicate training heuristics. Diffusion-based planners instead learn a denoising process over trajectories, achieving SOTA coverage of multimodal motions and have been applied to visual robot control \cite{a8}, motion planning \cite{a7}, and multi-agent behavior forecasting \cite{a9}. By sampling different noise realizations, these models naturally explore distinct homotopy classes \cite{a10} and avoid hand-crafted reward engineering. Yet standard samplers still require tens of denoising steps per plan \cite{a11}, leading to second-level latencies and motivating recent work on accelerated trajectory diffusion \cite{a12}.

\noindent \textit{\tbm{Accelerating Diffusion via Model Design:}}
In this vein, architectures compress the sampling chain while preserving multimodal coverage \cite{a13, a14}. One line of work distills multi-step samplers into shallow or one-step networks \cite{a15}, or learns coarse-to-fine denoisers that jump directly to high-quality trajectories in a few iterations \cite{a16}. Other works bias the generative process with structured priors, trajectory libraries, or policy heads \cite{a17}, so that sampling is guided toward task-feasible plans and fewer refinement steps are needed \cite{a18}. A complementary direction aggregates a set of diffusion rollouts into graphs, reusing evaluations across branches to reduce latency without sacrificing robustness \cite{a19}. These approaches demonstrate that careful parameterisation can shrink diffusion horizons, bringing trajectory diffusion closer to real-time operation \cite{a21, a22}. 
However, these accelerations still rely on sequential denoising, task-specific heuristics and typically operate by hard-coding new samplers or compressing the backbone network \cite{a23}. They degrade trajectory quality, require retraining for each architecture, and offer no guarantees on how the modified sampling process affects downstream control performance \cite{a24}. 

\noindent \textit{\tbm{Adaptive Inference and Early Exiting:}}
Beyond architectural changes, a parallel line of work accelerates diffusion models via dynamic inference \cite{a25}, using learned early-exit policies \cite{a26} or step-skipping heuristics \cite{a27}to reduce test-time compute. In diffusion models, such methods typically monitor simple signals along the sampling chain to decide when to stop, skip, or jump ahead in the denoising schedule \cite{a29, a30, a31}. Related approaches in sequence modeling and control reuse cached network outputs across time or across similar inputs, treating repeated evaluations as an optimization lever rather than changing the underlying architecture \cite{a28, a33}. These strategies are usually trained in a task-specific, data-driven fashion and lack explicit bounds on how shortcutting or cache reuse perturbs the final sample \cite{a32, a34}. They provide little formal guidance on how much an accelerated diffusion run can deviate from its full-compute counterpart, and offer no guarantees on the quality of the executed trajectory \cite{a35, a36}.

In contrast to prior works, we place Muninn at the intersection of trajectory diffusion and sampling-time acceleration, focusing on making existing trajectory planners fast enough without modifying or retraining their underlying networks.

\section{Preliminaries and challenges}
A pretrained diffusion planner operates over trajectories $\mathcal{T}\subseteq\mathbb{R}^{H\times n}$ and contexts $c\in\mathcal{C}$ via a fixed reverse-diffusion sampler: $\tau^{\mathrm{full}}_{t-1}
=\Phi_t\!\big(\tau^{\mathrm{full}}_t,\;\varepsilon_\theta(\tau^{\mathrm{full}}_t,t,c),\;\xi_t\big), t=T,\dots,1,$
where $\varepsilon_\theta$ is the denoiser and $\Phi_t$ is a known sampler update (DDPM (Denoising Diffusion Probabilistic Models)/DDIM (Denoising Diffusion Implicit Models)-style; $\xi_t$ captures any sampler noise, with $\xi_t\equiv 0$ for deterministic samplers).
Let $\tau^{\mathrm{full}}_0(c,\xi)$ be the resulting full-compute trajectory (evaluating $\varepsilon_\theta$ at every step).
Each denoiser call has cost $\ell_t(c,\xi)\ge \ell_{\min}>0$, so a full planning call costs
$\mathcal{P}_c^{\mathrm{full}}(c,\xi)=\sum_{t=1}^T \ell_t(c,\xi)$.

\noindent \textit{\tbm{Challenges of caching in diffusion-based planners:}}
The repeated denoising structure makes diffusion planners an appealing target for caching: the same network is invoked at every step, and consecutive timesteps often process very similar noisy trajectories. However, the same properties that make these models expressive, make safe caching surprisingly hard!

First, denoising is globally coupled over the horizon: each evaluation acts on the full trajectory $\tau_t$, and the sampler mixes the predicted noise back into \emph{all} time steps of $\tau_{t-1}$, so a single approximation can bias the entire plan. 
Second, errors accumulate in a history-dependent way (see Fig. \ref{fig:chal}): reverse diffusion is iterative, so errors injected early are propagated and transformed by later updates; under nonlinear, contact-rich, or unstable dynamics, the same local denoiser error can have very different downstream effects depending on context and when it occurs. 
Third, control objectives are often discontinuous in trajectory space: small trajectory changes can flip outcomes from success to collision or constraint violation, so controlling only smooth surrogates can yield highly variable task impact.
\begin{figure}[H]
    \centering
    \includegraphics[width=\linewidth]{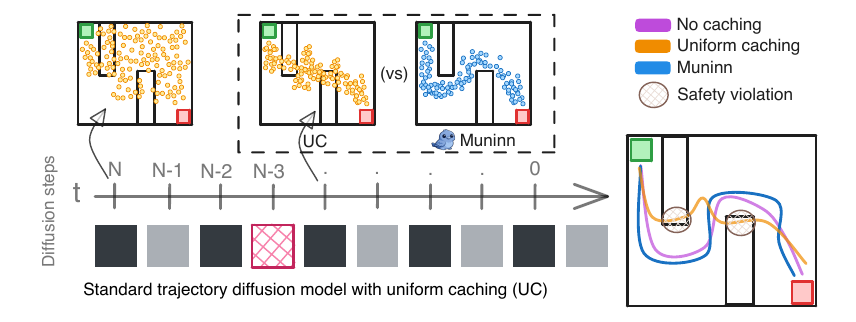}
    \caption{\textbf{Challenges in caching trajectory diffusion models.} A mistaken reuse at one timestep (red) is not local: the reused output is mixed into all subsequent updates, causing errors to accumulate and shift the entire final trajectory.}
    \label{fig:chal}
\end{figure}

Together, these effects create an all-or-nothing reuse regime: safe reuse requires confidence not only that the local approximation is small, but that its contribution remains acceptable after all subsequent sampler updates. 
Coarse strategies (e.g., fixed step skipping) oscillate between being overly conservative (limited speedup) and overly aggressive (hard-to-predict failures). 
Since practical diffusion planners vary in architecture, schedules, and conditioning, approaches that rely on retraining lose generality. 
This motivates caching mechanisms that (i) cheaply detect when reuse is risky, (ii) account for sampler-mediated error propagation, and (iii) expose interpretable knobs for trading compute against trajectory deviation.

\section{\texorpdfstring{\Muninnlogo \nordy{Muninn}}{\nordy{Muninn}}}
Motivated by these challenges, \nordy{Muninn} is a \emph{training-free} caching policy that, at each diffusion step, either recomputes $\varepsilon_\theta(\tilde{\tau}_t,t,c)$ or reuses a cached denoiser output to produce a lower-cost trajectory $\tilde{\tau}_0^{\pi}(c,\xi)$.
It exposes two user-facing knobs: a deviation tolerance $\eta_{\mathrm{traj}}$ and risk level $\alpha$, and aims to minimize expected planning
 subject to $\mathbb{P}\!\left(d(\tau^{\mathrm{full}}_0,\tilde{\tau}^{\pi}_0)>\eta_{\mathrm{traj}}\right)\le\alpha$.

\subsection{Trajectory diffusion and sampler sensitivity}
\label{subsec:sensitivity}

We begin by making precise how local errors in the denoiser's output at each diffusion step affect the final trajectory. 

\noindent \textit{\tbm{Reverse diffusion as a paired process:}}
Recall that $\tau^{\mathrm{full}}_0(c,\xi)$ is produced by evaluating the denoiser at every step.
Caching policies $\pi$ modify this process by \emph{reusing} denoiser outputs at selected timesteps instead of recomputing them.
To isolate the effect of reuse, we analyze a paired run: the full and cached chains share the same initial sample $\tau_T$ and the sampler noise sequence $\xi=(\xi_T,\dots,\xi_1)$, and differ only in the denoiser outputs $\Phi_t$.
Let $\tilde{\tau}_t$ denote the cached trajectory at step $t$ and let $\tilde{\varepsilon}_t$ be the noise prediction actually used at step $t$ (recomputed or reused). The cached chain evolves as
$\tilde{\tau}_{t-1}\;:=\;\Phi_t\big(\tilde{\tau}_t,\, \tilde{\varepsilon}_t,\, \xi_t\big)$.
\nordy{Muninn} will choose $\tilde{\varepsilon}_t$ by \textit{recomputing} $\tilde{\varepsilon}_t \;=\; \varepsilon_\theta(\tilde{\tau}_t, t, c)$ or by \textit{reusing} a cached prediction.

\noindent \textit{\tbm{Error variables:}}
We define two per-step errors: the trajectory drift $\Delta_t := \tau_t^{\mathrm{full}}-\tilde{\tau_t}$ and the denoiser mismatch $e_t \;:=\; \varepsilon_\theta(\tilde{\tau}_t,\, t,\, c)\;-\;\tilde{\varepsilon}_t.$, capturing divergence in trajectory space and in the noise prediction, respectively.
Unless stated otherwise, $\|\cdot\|$ denotes the Frobenius norm on $\mathbb{R}^{H\times n}$.
Our only assumption on the sampler is a {local Lipschitz property}:

\begin{assumption}[Sampler Lipschitzness]
For timestep $t$, there exist nonnegative constants $K_t$ and $L_t'$ such that, for any trajectories $\tau,\tau' \in \mathcal{T}$, noise predictions $\varepsilon,\varepsilon'$, and fixed sampler noise $\xi_t$,
    $\big\lVert \Phi_t(\tau,\varepsilon,\xi_t) - \Phi_t(\tau',\varepsilon',\xi_t) \big\rVert
    \;\le\;
    K_t \lVert \tau - \tau' \rVert + L_t' \lVert \varepsilon - \varepsilon' \rVert.$
\end{assumption}

For DDPM/DDIM-style samplers, $\Phi_t$ is affine in $\tau_t$ and $\varepsilon_t$, so $K_t$ and $L_t'$ follow directly from the update coefficients. Conceptually, $K_t$ scales trajectory differences, while $L_t'$ scales sensitivity to noise-prediction error.
Applying Assumption~1 to the full-compute and cached updates at step $t$ gives
\begin{equation}
\footnotesize
\begin{aligned}
    \lVert \Delta_{t-1} \rVert
    = \big\lVert \Phi_t(\tau_t^{\mathrm{full}}, \varepsilon_\theta(\tau_t^{\mathrm{full}},t,c), \xi_t)
      - \Phi_t(\tilde{\tau}_t, \tilde{\varepsilon}_t, \xi_t) \big\rVert \\
    \le K_t \lVert \tau_t^{\mathrm{full}} - \tilde{\tau}_t \rVert
      + L_t' \big\lVert \varepsilon_\theta(\tau_t^{\mathrm{full}},t,c) - \tilde{\varepsilon}_t \big\rVert = K_t \lVert \Delta_t \rVert + L_t' \lVert e_t \rVert.
\end{aligned}
\label{eq:local-recursion}
\end{equation}

This recursion decomposes the trajectory error at step $t-1$ into a part inherited from the existing trajectory mismatch $\Delta_t$ and a new contribution from the denoiser error $e_t$.

\begin{figure} [H]
    \centering
    \includegraphics[width=0.8\linewidth]{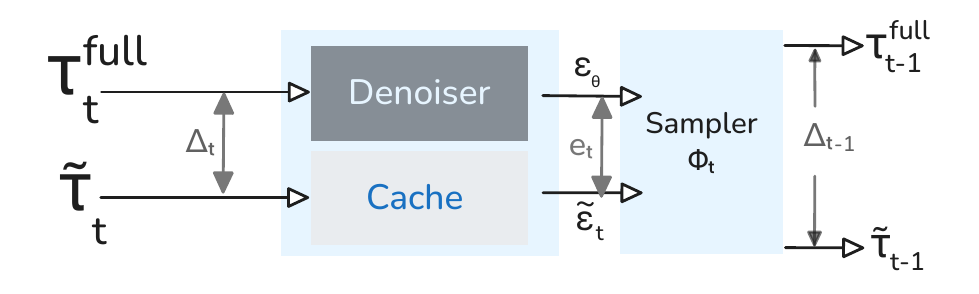}
\caption{\textbf{Paired reverse-diffusion step.}
$e_t$ is the denoiser error from reusing $\tilde{\varepsilon}_t$ and $\Delta_t$ is the induced trajectory mismatch; a Lipschitz bound splits $\Delta_{t-1}$ into propagated mismatch and injected error.}
\label{fig:paired}
\end{figure}

\noindent \textit{\tbm{Unrolling the recursion:}}
Both processes are initialized from the same noisy trajectory, so $\Delta_T = 0$.
Repeatedly applying the local bound~\eqref{eq:local-recursion} from $t = T$ down to $t = 1$ yields a closed-form bound on $\Delta_0$.
For $t = T$ we have $\lVert \Delta_{T-1} \rVert \;\le\; K_T \lVert \Delta_T \rVert + L_T' \lVert e_T \rVert \;=\; L_T' \lVert e_T \rVert$.
Continuing, a simple induction shows,
\begin{equation}
    \lVert \Delta_t \rVert
    \;\le\;
    \sum_{s=t+1}^{T}
    (
        L_s' \prod_{j=t+1}^{s-1} K_j
    ) \lVert e_s \rVert \quad \text{for } t \in \{0,\dots,T-1\}.
    \label{eq:deltat-bound}
\end{equation}
Defining the {pathwise sensitivity coefficients}, $L_s \;:=\; L_s' \prod_{j=1}^{s-1} K_j$, we can rewrite~\eqref{eq:deltat-bound} at $t=0$ compactly as
\begin{equation}
    \lVert \Delta_0 \rVert
    \;\le\;
    \sum_{t=1}^{T} L_t \lVert e_t \rVert.
\label{eq:delta0-bound}
\end{equation}
Each step's denoiser error $|e_t|$ contributes additively to the final trajectory deviation, weighted by $L_t$, capturing how strongly errors at $t$ are amplified through subsequent diffusion updates.

\noindent \textit{\tbm{From norm difference to trajectory deviation:}}
The guarantee in our problem statement is expressed in terms of a user-chosen trajectory metric $d : \mathcal{T} \times \mathcal{T} \to \mathbb{R}_{\ge 0}$.
We assume that $d$ is Lipschitz-equivalent to the norm $\lVert\cdot\rVert$ in the sense that:

\begin{assumption}[Metric compatibility]
There exists a constant $\Gamma \ge 1$ such that for all $\tau,\tau' \in \mathcal{T}$, $d(\tau,\tau') \;\le\; \Gamma \lVert \tau - \tau' \rVert$.
\end{assumption}
Combining Assumption~2 with~\eqref{eq:delta0-bound} yields:
\begin{equation}
    d\big(\tau_0^{\mathrm{full}}(c,\xi),\, \tilde{\tau}_0(c,\xi)\big)
    \;\le\;
    \Gamma \lVert \Delta_0 \rVert
    \;\le\;
    \sum_{t=1}^{T} \Gamma L_t \lVert e_t \rVert.
\label{eq:traj-bound}
\end{equation}
Equation~\eqref{eq:traj-bound} is the {key trajectory deviation bound}. It exposes that an error $e_t$ injected at step $t$ is scaled by $L_t$, which depends on all later sampler sensitivities $\{K_j\}_{j < t}$.
Early steps can therefore have a larger impact than late steps. In our experiments we use $d(\tau,\tau') := \frac{1}{\sqrt{H}}\|\tau-\tau'\|_F$, with $\Gamma=\frac{1}{\sqrt{H}}$.

We define the {per-step trajectory cost} associated with a $\lVert e_t \rVert$ as $c_t(e_t) := \Gamma L_t \lVert e_t \rVert$.
Then~\eqref{eq:traj-bound} simply states that the total trajectory deviation is bounded by the sum of per-step costs,
\begin{equation}
    d\big(\tau_0^{\mathrm{full}}(c,\xi),\, \tilde{\tau}_0(c,\xi)\big)
    \;\le\;
    \sum_{t=1}^{T} c_t(e_t).
\label{eq:budget-form}
\end{equation}

\nordy{Muninn} uses~\eqref{eq:budget-form} as a budget rule: if each reuse step has a high-probability upper bound on $c_t(e_t)$ and the summed bounds stay within $\eta_{\mathrm{traj}}$, then the final cached trajectory remains within $\eta_{\mathrm{traj}}$ of the full-compute one with the same high probability.


Now, the challenge is that, during deployment, the actual denoiser error $\lVert e_t \rVert$ is {unknown} for a reuse decision.

\subsection{Probe features and per-step error scores}
\label{subsec:probe}

The sensitivity bound~\eqref{eq:traj-bound} quantifies how costly a step-$t$ error $\lVert e_t \rVert$ would be, but not how large that error would be under reuse. In order to do that, we introduce (i) a lightweight \emph{probe} $F_t$, and (ii) \emph{per-step score} $s_t$. 
Fig. \ref{fig:probe} summarizes this subsection.

\noindent \textit{\tbm{Probe features:}}
At step $t$, the denoiser $\varepsilon_\theta(\cdot, t, c)$ is evaluated on the current noisy $\tau_t$ and context $c$.
In almost all modern architectures, this evaluation can be decomposed into: (i) an {input processing or ``stem'' stage} that maps $(\tau_t, t, c)$ into an intermediate representation $h_t$, and (ii) a deeper, {expensive ``core''} that refines $h_t$ into the final noise prediction $\varepsilon_\theta(\tau_t,t,c)$.

We exploit this decomposition by defining a probe feature $F_t$ that depends only on the cheaper part of the computation and is therefore inexpensive to compute at every step.

Formally, we posit a probe function $\Psi : \mathcal{T} \times \{1,\dots,T\} \times \mathcal{C} \to \mathbb{R}^{d_F}$ and define $F_t := \Psi(\tilde{\tau}_t, t, c)$.
We require $\Psi$ to satisfy two design goals: (i) The cost of evaluating $\Psi$ at a single step should be negligible compared to a full pass through $\varepsilon_\theta$.
(ii) {Architecture-agnostic:} Any trajectory diffusion model provides an intermediate representation between its input and its final noise prediction.
    By defining $\Psi$ in terms of this representation, we ensure that \nordy{Muninn} can be applied to a wide range of state-space diffusion planners without architectural changes.

\noindent \textit{\tbm{Per-step error scores from probe dynamics:}}
Intuitively, reuse is safe when the denoiser's output would not have changed much when recomputed.
We expect this to happen when $F_t$ has stabilized: if the probe at step $t$ is similar to the nearby steps, the model is likely to produce similar noise predictions, and using a cached output should incur a small error $\lVert e_t \rVert$.

To capture this intuition, \nordy{Muninn} maps the sequence of probe features $\{F_t\}$ into a scalar \emph{error score} $s_t \in \mathbb{R}_{\ge 0}$ at each step $t$.
This score should be: (i) {cheap to compute} from probe features we have, (ii) {monotonically informative} in the sense that smaller scores tend to correspond to smaller denoiser errors.

Since reverse diffusion runs from $t=T$ to $1$, the reuse decision at step $t$ can use both the current probe $F_t$ and the previously computed $F_{t+1}$. A effective choice of score is the magnitude of the probe change between these two steps:
%
%
\begin{equation}
    s_t := \phi(F_t, F_{t+1}, t)
    = \frac{\lVert F_t - F_{t+1} \rVert_1}{\lVert F_{t+1} \rVert_1 + \omega},
\quad t \in \{1,\dots,T-1\}, 
\label{eq:score-definition}
\end{equation}
where $\omega > 0$.
This design reflects two intuitions: (i) when $h_t$ stabilizes, $F_t$ changes slowly and the denoiser output changes little, so reuse yields small $\|e_t\|$; (ii) as sensitivity varies with timestep (via $L_t$), calibrating score-to-error bounds per $t$ adapts reuse decisions to high-impact regions of the diffusion chain.

\begin{figure}[H]
    \centering
    \includegraphics[width=\linewidth,trim=10 10 10 10,clip]{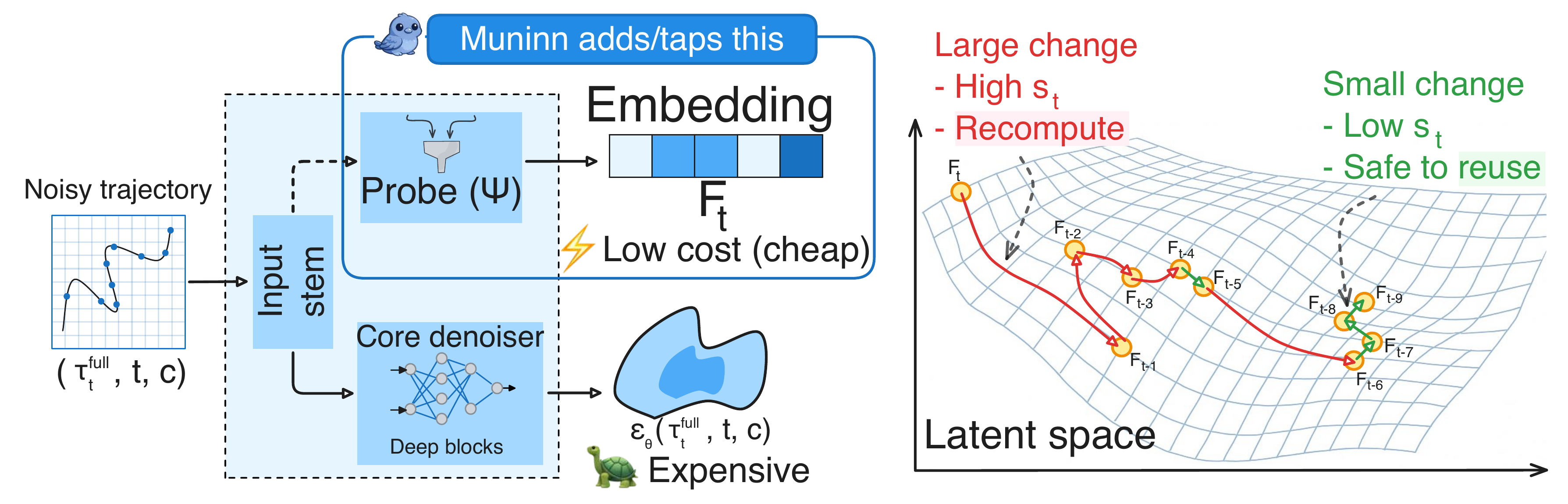}
\caption{\textbf{Probe features and per-step error scores.}
\textbf{(Left)} A denoiser evaluation decomposes into a cheap stem and an expensive core; \nordy{Muninn} runs only the stem to compute a lightweight probe.
\textbf{(Right)} Large step-to-step changes ($s_t$) trigger recomputation, while small $s_t$ indicate stability and make reuse safe.}
    \label{fig:probe}
\end{figure}


\subsection{Conformal calibration of reuse error}
\label{subsec:conformal}

Section~\ref{subsec:sensitivity} bounds trajectory deviation by a sum of per-step costs.
To use this bound online, \nordy{Muninn} needs a high-probability upper bound on the (unknown) reuse error $\|e_t\|$ at each step.
We now show how to obtain these bounds.


\noindent \textit{\tbm{Calibration data:}}
Recall that the trajectory deviation bound~\eqref{eq:traj-bound} depends only on its {magnitude} $\lVert e_t \rVert$.
We therefore define the scalar error $\epsilon_t := \lVert e_t \rVert \;\in\; \mathbb{R}_{\ge 0}$ and aim to predict $\epsilon_t$ from the score $s_t$ introduced in~\eqref{eq:score-definition}.

At deployment, $\tilde{\varepsilon}_t$ is the noise prediction actually {reused} by \nordy{Muninn} at step $t$, which depends on the entire history of recompute vs.\ reuse decisions.
For calibration, we fix a policy-agnostic reuse rule that defines a {potential reuse output} at every step, independent of the budget.
We define a deterministic map that, specifies which earlier denoiser output would be reused if we chose to reuse at $t$.
For e.g., we can always reuse the prediction from the previous step that evaluated $\varepsilon_\theta$.

Given this potential reuse rule, we build a calibration set by running the full-compute chain alongside a paired {ghost} reuse chain on $N$ i.i.d.\ episodes from the deployment distribution. 
For each episode $i$ and each $t\in\mathcal{T}_{\text{cache}}$, we record (i) the probe-based score $s_t^{(i)}$ computed from the ghost chain, and (ii) the corresponding {potential reuse error} magnitude $\epsilon_t^{(i)}$.
The resulting calibration dataset is the collection of score--error pairs $\mathcal{S}_{\mathrm{cal}}
\;:=\; \big\{(s_t^{(i)}, \epsilon_t^{(i)}) : i = 1,\dots,N,\; t \in \mathcal{T}_{\text{cache}}\big\}$, where $\mathcal{T}_{\text{cache}}$ is the set of timesteps at which reuse is allowed.
Appendix~\ref{calib_data} and Figure~\ref{fig:calib} provide the full construction details.



Since the reuse pattern is fixed and independent of $s_t$, we can view each $(s_t^{(i)}, \epsilon_t^{(i)})$ as a draw from a common, unknown joint distribution over scores and potential reuse errors induced by the base planner and environment.
Under the {exchangeability assumption}---calibration and deployment episodes are sampled i.i.d. from $\mathcal{D}$---this is the setting where conformal prediction provides finite-sample coverage guarantees. 

\noindent \textit{\tbm{Split-conformal regression:}}
Given $\mathcal{S}_{\mathrm{cal}}$, we construct a function
$U:\mathbb{R}_{\ge 0}\to\mathbb{R}_{\ge 0}$ that maps each score $s$ to a high-probability upper bound on the corresponding potential reuse error $\epsilon$.
Concretely, we fit a regression model $m(s)$ on a training split of $\mathcal{S}_{\mathrm{cal}}$ and apply split conformal prediction on a held-out split to obtain an additive residual quantile $q_{1-\alpha_{\text{step}}}$.
This results in the bound $U(s)\;:=\;\max\{m(s)+q_{1-\alpha_{\text{step}}},\,0\}$; Appendix~\ref{calib_data} provides the full construction.

By split conformal prediction, assuming exchangeability between calibration and deployment, we obtain per-step coverage: for a new step with score $s_t$ and potential reuse error $\epsilon_t$,
\begin{equation}
\mathbb{P}\big(\epsilon_t \le U(s_t)\big)\;\ge\;1-\alpha_{\text{step}}.
\label{eq:per-step-coverage}
\end{equation}

This guarantee is \emph{distribution-free}: it holds for any joint distribution of $(s_t,\epsilon_t)$ as long as calibration and deployment samples are exchangeable.
The regressor $m$ only tightens $U$; even if $m$ is misspecified,~\eqref{eq:per-step-coverage} holds by construction.


\noindent \textit{\tbm{From per-step coverage to global risk:}}
The per-step coverage~\eqref{eq:per-step-coverage} applies to the latent potential reuse error $\epsilon_t$ at any timestep $t$.
Combining $c_t(e_t)$ with the upper bound $U_t(s_t)$ gives, for any step $t$, $\Gamma L_t \epsilon_t \;\le\; \Gamma L_t U_t(s_t)
    \quad\text{whenever}\quad \epsilon_t \le U_t(s_t)$.
If we define the {upper-bounded per-step cost} at score $s_t$ as $\hat{c}_t(s_t) := \Gamma L_t U_t(s_t),$
then with probability at least $1 - \alpha_{\text{step}}$,
\begin{equation}
    c_t(e_t) = \Gamma L_t \epsilon_t \;\le\; \hat{c}_t(s_t).
\end{equation}
Crucially, the event $\{\epsilon_t > U_t(s_t)\}$ depends only on the underlying data-generating process and the calibration procedure; it is {independent of the caching policy} we will deploy later.

\begin{figure}[t]
    \centering
    \includegraphics[width=0.8\linewidth,trim=0pt 0pt 0pt 0pt,clip]{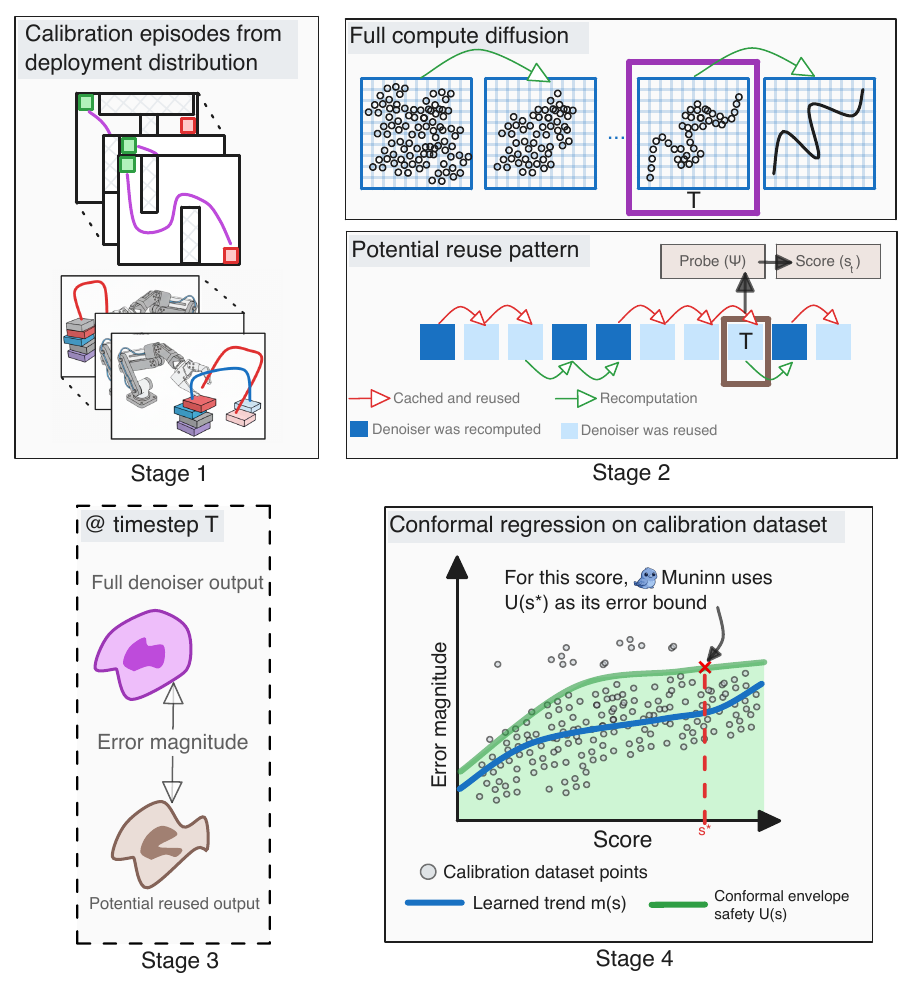}
\caption{
\textbf{(Stages 1--3)} From offline rollouts, we build a calibration set of score--error pairs by running the full planner with a ghost reuse chain.
\textbf{(Stage 4)} A regressor $m(s)$ models the typical score--error relationship, and split conformal prediction yields $U(s)$.
At test time, \nordy{Muninn} maps each score to $U(s_t)$, converts it to a trajectory-level cost, and spends it from the budget.}
    \label{fig:calib}
\end{figure}



\subsection{\texorpdfstring{\Muninnlogo \nordy{Muninn} caching policy}{\nordy{Muninn} caching policy}}
\label{subsec:policy}

\nordy{Muninn} uses the per-step bounds $\hat{c}_t(s_t)$ to decide online, per sample, whether to reuse or recompute.
The idea is to interpret~\eqref{eq:budget-form} as a budget constraint: user specifies a maximum allowable $\eta_{\mathrm{traj}}$ which is spent across reverse-diffusion steps.


\noindent \textit{\tbm{Budgeted reuse rule:}}
At deployment time, the true error $\lVert e_t \rVert$ at $t$ is unknown.
Let $\mathcal{T}_{\text{cache}} \subseteq \{1,\dots,T\}$ denote the set of timesteps where reuse is permitted.
We maintain a remaining budget $B_{\text{rem}}$ and, for clarity, we summarize the algorithm below.

\begin{algorithm}[H]
    \caption{\Muninnlogo \nordy{Muninn} caching policy for one planning call}
    \label{alg:Muninn}
    \vspace{0.35em}

    \scriptsize
    \color{HeadGray}

    \begin{algorithmic}[1]
        \STATE Sample initial noisy trajectory $\tau_T \sim p_T$.
        \STATE Initialize: (i) $B_{\mathrm{rem}} \leftarrow \eta_{\mathrm{traj}}$, (ii) $\tilde{\tau}_T \leftarrow \tau_T$.
        \FOR{$t = T, T-1, \dots, 1$}
            \STATE Compute probe $F_t = \Psi(\tilde{\tau}_t, t, c)$.
            \IF{$t < T$}
                \STATE Retrieve previous probe $F_{t+1}$ // stored at step $t+1$.
            \STATE Compute score $s_t = \phi(F_t, F_{t+1}, t)$ 
            \STATE Compute upper-bounded cost $\hat{c}_t(s_t) = \Gamma L_t U_t(s_t)$.
            \ENDIF
\IF{$t = T$ \textbf{or} $t \notin \mathcal{T}_{\text{cache}}$ \textbf{or} ($t < T$ \textbf{and} $\hat{c}_t(s_t) > B_{\mathrm{rem}}$)}
                \STATE \textbf{Recompute:}
                \STATE \hspace{0.5em} $\varepsilon_t^{\text{new}} \leftarrow \varepsilon_\theta(\tilde{\tau}_t, t, c)$
                \STATE \hspace{0.5em} Update cache with $(t, \varepsilon_t^{\text{new}})$ as most recent recompute.
                \STATE \hspace{0.5em} $\tilde{\varepsilon}_t \leftarrow \varepsilon_t^{\text{new}}$
            \ELSE
                \STATE \textbf{Reuse:}
                \STATE \hspace{0.5em} Set $\tilde{\varepsilon}_t$ to the cached denoiser output specified by the potential reuse rule.
                \STATE \hspace{0.5em} $B_{\mathrm{rem}} \leftarrow B_{\mathrm{rem}} - \hat{c}_t(s_t)$
            \ENDIF
            \STATE Apply sampler update: $\tilde{\tau}_{t-1} \leftarrow \Phi_t(\tilde{\tau}_t, \tilde{\varepsilon}_t, \xi_t)$
            \STATE Store $F_t$ // for use at step $t-1$.
        \ENDFOR
        \STATE \textbf{return} $\tilde{\tau}_0$
    \end{algorithmic}
\end{algorithm}

This policy is: (i) {budget-safe by construction}: enforcing $B_{\text{rem}} \ge 0$ ensures that the sum of upper-bounded costs over reuse steps satisfies $\sum_{t \in \mathcal{R}} \hat{c}_t(s_t) \;\le\; \eta_{\mathrm{traj}}$;
(ii) {sample-adaptive}: reuse locations and counts depend on the realized scores $s_t$ (and thus on the context, noise, and trajectory).

\noindent \textit{\tbm{Global risk guarantee:}}
Under the conformal calibration guarantee~\eqref{eq:per-step-coverage}, for each $t$ in a planning episode we have $\mathbb{P}\big( \epsilon_t \le U_t(s_t) \big) \;\ge\; 1 - \alpha_{\text{step}}$. Define the failure event at step $t$ as
$\mathcal{F}_t := \{\epsilon_t > U_t(s_t)\}$.
By construction of $U_t$,
\begin{equation}
    \mathbb{P}(\mathcal{F}_t) \;\le\; \alpha_{\text{step}}
    \quad\text{for each } t.
    \label{eq:Ft-bound}
\end{equation}

Recall that $\epsilon_t$ is defined w.r.t. the potential reuse rule, and at reuse, \nordy{Muninn} uses the same rule to choose which cached output to apply.
Thus, whenever \nordy{Muninn} reuses at step $t$, the realized denoiser error magnitude $\lVert e_t \rVert$ coincides with the latent potential reuse error $\epsilon_t$ for that step.

Now consider a single planning episode.
If {none} of the failure events $\mathcal{F}_t$ occur at any timestep in $\mathcal{T}_{\text{cache}}$, then for all reuse steps $t \in \mathcal{R}$ we have $c_t(e_t) = \Gamma L_t \epsilon_t \le \Gamma L_t U_t(s_t) = \hat{c}_t(s_t)$,
and thus the total trajectory deviation satisfies
{
\makeatletter
\def\maketag@@@#1{\hbox{\m@th\normalsize#1}}
\makeatother

\begin{equation}
{\footnotesize
\begin{aligned}
    d\big(\tau_0^{\mathrm{full}}, \tilde{\tau}_0\big)
    &\le \sum_{t \in \mathcal{R}} c_t(e_t)
    &&\text{(from (5))} \\
    &\le \sum_{t \in \mathcal{R}} \hat{c}_t(s_t)
    &&\text{(since no failure event occurs)} \\
    &\le \eta_{\mathrm{traj}}
    &&\text{(by the budget rule)}.
\end{aligned}}
\label{eq:good-episode}
\end{equation}
}
Therefore, the only way for the trajectory deviation to exceed $\eta_{\mathrm{traj}}$ is if at least one failure event occurs:
    $\big\{ d(\tau_0^{\mathrm{full}}, \tilde{\tau}_0) > \eta_{\mathrm{traj}} \big\}
    \;\subseteq\;
    \bigcup_{t \in \mathcal{T}_{\text{cache}}} \mathcal{F}_t.$
Taking probabilities and applying a union bound over timesteps yields
\begin{equation}
\begin{aligned}
    \mathbb{P}\big( d(\tau_0^{\mathrm{full}}, \tilde{\tau}_0) > \eta_{\mathrm{traj}} \big)
    &\le
    \mathbb{P}\Big( \bigcup_{t \in \mathcal{T}_{\text{cache}}} \mathcal{F}_t \Big) \\
    &\le
    \sum_{t \in \mathcal{T}_{\text{cache}}} \mathbb{P}(\mathcal{F}_t) 
    &\le
    |\mathcal{T}_{\text{cache}}| \,\alpha_{\text{step}}.
\end{aligned}
\label{eq:global-risk-bound}
\end{equation}
If we choose $\alpha_{\text{step}} := \frac{\alpha}{|\mathcal{T}_{\text{cache}}|}$, then~\eqref{eq:global-risk-bound} simplifies to
\begin{equation}
    \mathbb{P}\big( d(\tau_0^{\mathrm{full}}, \tilde{\tau}_0) > \eta_{\mathrm{traj}} \big)
    \;\le\;
    \alpha,
    \label{eq:global-risk-final}
\end{equation}
which is exactly the risk guarantee from the problem statement.


This guarantee does not require timestep independence: it follows from the per-step bounds~\eqref{eq:Ft-bound} and a union bound.
It holds for any adaptive reuse set $\mathcal{R}$, provided the reuse rule used to define $\epsilon_t$ in calibration matches the rule used at deployment.

\subsection{Practical considerations and engineering choices}
In practice, \nordy{Muninn} is implemented with a few simple engineering choices: (i) \textit{forbidden regions}---we disallow reuse in a short prefix (large $t$, highly noisy and sensitivity-amplifying) and suffix (small $t$, where small changes in $\tau_t$ can directly affect executed motion); (ii) \textit{batched sampling}---when multiple trajectories are denoised in parallel, we maintain an independent remaining budget $B_{\mathrm{rem}}$ per trajectory and apply the same reuse rule independently, so the guarantee~\eqref{eq:per-step-coverage} and bound~\eqref{eq:global-risk-final} apply per trajectory; (iii) \textit{model instantiations}---for Transformer trajectory denoisers we take $\Psi$ as a short prefix of attention/MLP blocks and mean-pool tokens to obtain $F_t\in\mathbb{R}^{d_F}$, while for MLP denoisers we use the penultimate hidden layer as $F_t$, in all cases computing $(F_t,s_t)$ \emph{before} the reuse decision at step $t$ and making them available during both calibration (to label $\|e_t\|$) and deployment (when $\|e_t\|$ is unknown).


\section{Experiments}

We evaluate \nordy{Muninn} on two trajectory-diffusion families covering the dominant uses of diffusion in robotics.

\noindent \tbm{Trajectory diffusion planners (offline RL and motion planning)} \cite{a37, a87, a7, a88, a89, a43, a91, a92} use diffusion as a planner over full state-action or configuration-space trajectories, implemented as an iterative reverse-diffusion process (optionally with guidance).

\noindent \tbm{Diffusion policies (imitation and visuomotor control)} \cite{a8, a90} use diffusion as a policy that generates short-horizon action or pose segments, conditioned on high-dimensional observations, and executed repeatedly in a receding-horizon control loop.
%
%
%
%



\subsection{Quantitative Results: Simulation and Benchmarks}  
\label{subsec:envs}

We evaluate \nordy{Muninn} on trajectory centric benchmarks and report standard metrics as defined in the prior literature for each benchmark.
See Appendix \ref{app:envs} for details on the environments, datasets, training, and metrics (Appendix \ref{subsec:metrics}). All experiments are run on an NVIDIA A10 GPU (24GB VRAM) and averaged on 150 previously unseen episodes.

\tbm{Offline RL / trajectory planning.}  
We first consider the D4RL offline RL suite \cite{a96} for continuous control, where models act as trajectory optimizers/conditional policies over state-action.

\begin{table}[H]
  \centering
\scriptsize
\setlength{\tabcolsep}{2pt}      
  \renewcommand{\arraystretch}{1.12}
  \setlength{\benchw}{0.15\columnwidth}

  \begin{threeparttable}
  \begin{tabularx}{\columnwidth}{@{}%
    L{0.15\columnwidth}%
    Y%
    C{0.073\columnwidth}%
    D{0.073\columnwidth}%
    C{0.073\columnwidth}%
    D{0.073\columnwidth}%
    C{0.050\columnwidth}%
    D{0.050\columnwidth}%
    C{0.078\columnwidth}%
    C{0.073\columnwidth}%
  @{}}

    \arrayrulecolor{HeadGray}\toprule\arrayrulecolor{black}

    \rowcolor{MuninnBlue!10}
    \theadb{Benchmark} & \theadb{Model} &
    \multicolumn{2}{c}{\theadb{Task metr.} \upar} &
    \multicolumn{2}{c}{\theadb{Latency} \dnar} &
    \multicolumn{2}{c}{\makecell[c]{\theadb{\#Evals/t}\hspace{1.4pt}\dnar}} &
    \makecell[c]{\textcolor{HeadGray}{$\mathbb{E}[d]$}\dnar} &
    \makecell[c]{\textcolor{HeadGray}{$\hat{p}_{\mathrm{viol}}$}\dnar} \\

    \rowcolor{MuninnBlue!6}
    & &
    \bfseries Full & +\Muninnlogo &
    \bfseries Full & +\Muninnlogo &
    \bfseries Full & +\Muninnlogo & 
    & \\

    \midrule

    \sectionrow{MuJoCo locomotion (D4RL)}
    \multirow[t]{4}{*}{\benchcell{HalfCheetah}{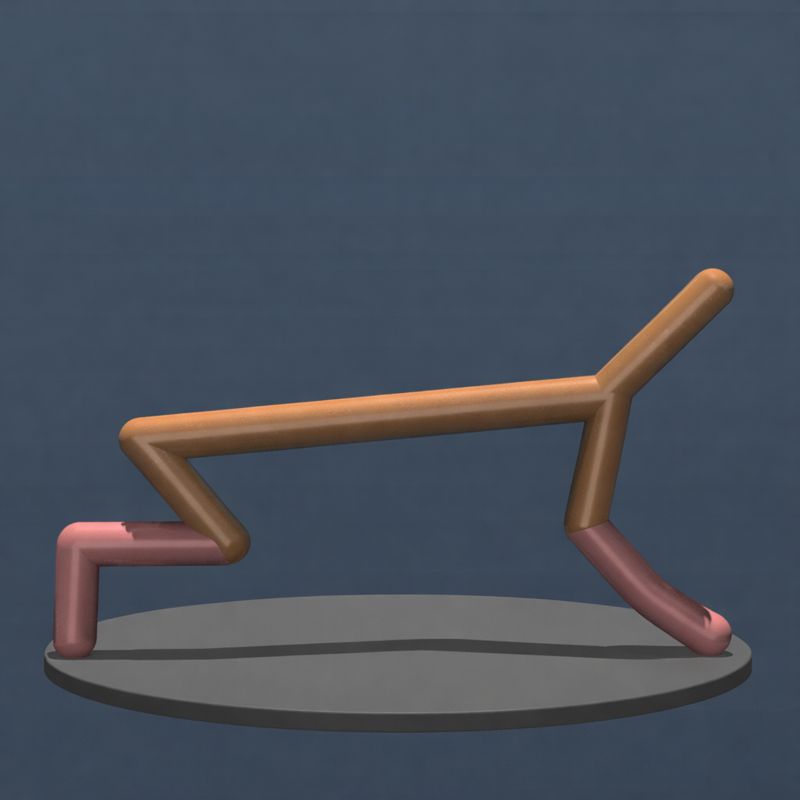}}
& Diffuser \cite{a7}    & 59.3 & 58.8 & 580 & 145 & 100 & 17.0 & 0.056 & 0.045 \\
& Dec.\ Diff. \cite{a88}  & 62.9 & 61.9 & 116 & 40  & 20  & 5.4  & 0.042 & 0.038 \\
& Diff.-QL  \cite{a89}   & 63.7 & 63.6 & 20  & 10  & 10  & 4.4  & 0.030 & 0.028 \\
& AdaptDiff\cite{a43}  & 59.6 & 59.5 & 353 & 101 & 60  & 12.4 & 0.050 & 0.041 \\
    \addlinespace[0.35em]
    \multirow[t]{4}{*}{\benchcell{Hopper}{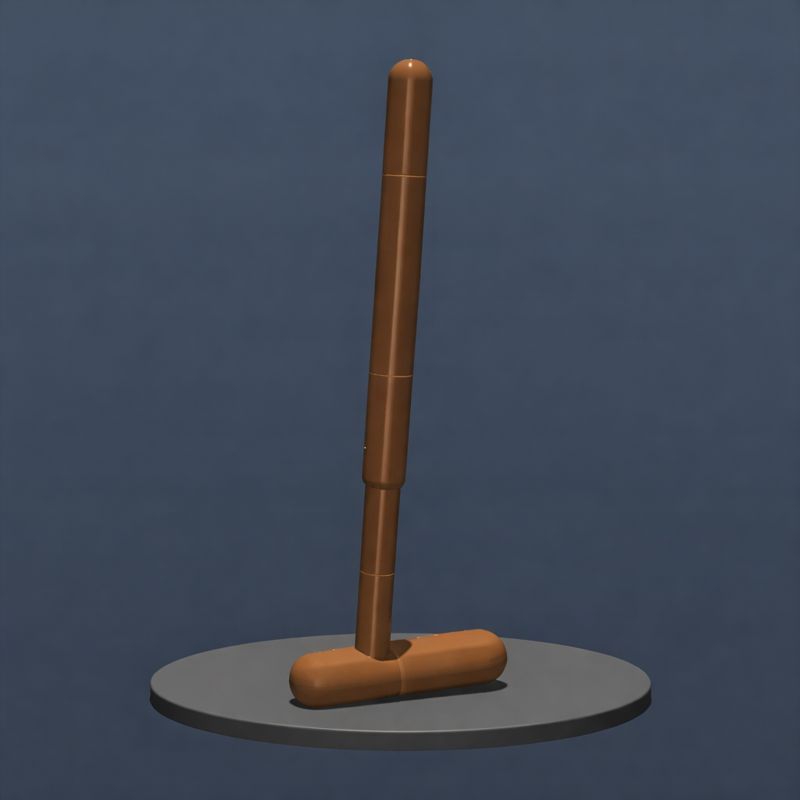}}
& Diffuser \cite{a7}    & 65.1 & 64.7 & 523 & 138 & 100 & 18.0 & 0.058 & 0.046 \\
& Dec.\ Diff. \cite{a88}  & 70.6 & 69.8 & 105 & 39  & 20  & 6.0  & 0.045 & 0.040 \\
& Diff.-QL \cite{a89}    & 75.7 & 75.6 & 20  & 11  & 10  & 4.7  & 0.032 & 0.030 \\
& AdaptDiff\cite{a43}  & 63.8 & 63.6 & 310 & 97  & 60  & 14.2 & 0.052 & 0.043 \\
    \addlinespace[0.75em]
    \multirow[t]{4}{*}{\benchcell{Walker2d}{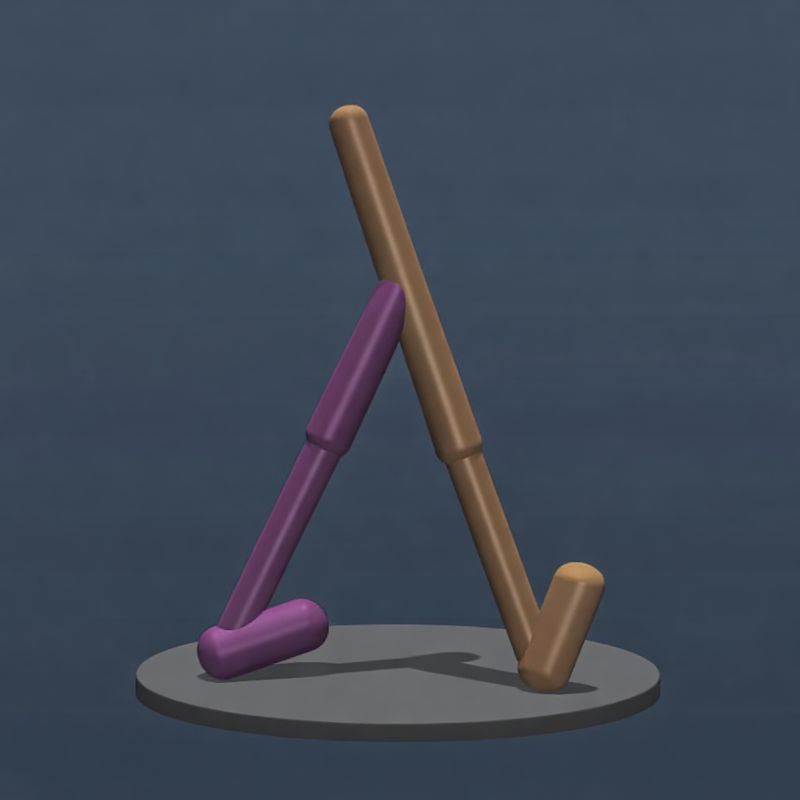}}
& Diffuser \cite{a7}    & 77.8 & 77.2 & 783 & 224 & 100 & 20.6 & 0.062 & 0.048 \\
& Dec.\ Diff. \cite{a88} & 84.0 & 83.7 & 157 & 63  & 20  & 6.7  & 0.048 & 0.042 \\
& Diff.-QL  \cite{a89}   & 80.8 & 80.7 & 25  & 14  & 25  & 12.7 & 0.034 & 0.032 \\
& AdaptDiff\cite{a43}  & 83.3 & 83.0 & 403 & 119 & 60  & 12.9 & 0.057 & 0.045 \\
    \addlinespace[0.35em]

    \midrule

    \sectionrow{Goal-reaching navigation (D4RL)}
    \multirow[t]{5}{*}{\benchcell{Maze2D}{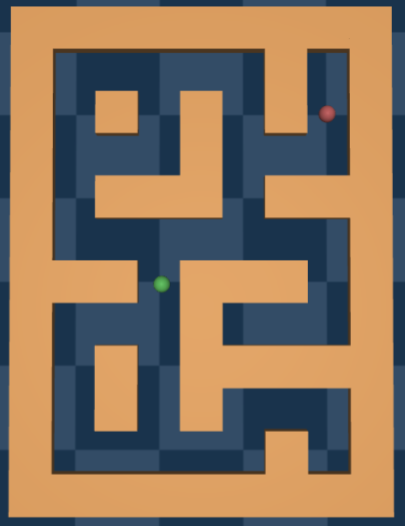}}
& Diffuser \cite{a7}    & 119.5 & 119.0 & 733  & 175 & 100 & 15.3 & 0.065 & 0.047 \\
& Dec.\ Diff. \cite{a88} & 130.8 & 129.6 & 147  & 46  & 20  & 4.7  & 0.055 & 0.044 \\
& Diff.-QL \cite{a89}    & 105.6 & 104.8 & 25   & 12  & 10  & 4.2  & 0.040 & 0.035 \\
& AdaptDiff\cite{a43}  & 144.3 & 143.9 & 347  & 99  & 60  & 12.4 & 0.060 & 0.046 \\
& CompDiff\cite{a91}   & 160.2 & 159.2 & 1100 & 275 & 150 & 25.0 & 0.070 & 0.049 \\
    \addlinespace[0.25em]
    \multirow[t]{5}{*}{\benchcell{AntMaze}{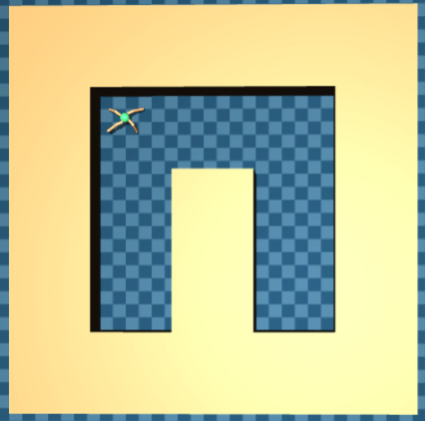}}
& Diffuser \cite{a7}    & 58.7 & 58.3 & 950  & 207 & 100 & 13.0 & 0.072 & 0.049 \\
& Dec.\ Diff. \cite{a88} & 72.2 & 71.8 & 190  & 53  & 20  & 4.0  & 0.060 & 0.046 \\
& Diff.-QL  \cite{a89}   & 59.6 & 59.4 & 25   & 10  & 25  & 8.8  & 0.045 & 0.038 \\
& AdaptDiff\cite{a43}  & 63.0 & 62.7 & 550  & 145 & 60  & 10.9 & 0.065 & 0.047 \\
& CompDiff\cite{a91}   & 75.9 & 74.5 & 1400 & 342 & 150 & 24.0 & 0.075 & 0.049 \\

    \midrule

    \sectionrow{Long-horizon manipulation (D4RL)}
    \multirow[t]{3}{*}{\benchcell{FrankaKitchen}{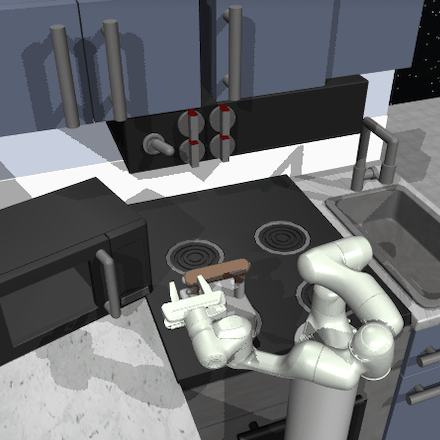}}
& Diffuser \cite{a7}    & 53.4 & 53.0 & 800 & 235 & 100 & 21.6 & 0.058 & 0.045 \\
& Dec.\ Diff. \cite{a88} & 55.0 & 54.9 & 160 & 62  & 20  & 6.3  & 0.045 & 0.040 \\
& AdaptDiff\cite{a43}  & 58.1 & 57.7 & 450 & 167 & 60  & 18.0 & 0.050 & 0.042 \\
  &                   &       &       &       &       &       &       &       &       \\
    \addlinespace[0.25em]

    \midrule

    \sectionrow{Robot block stacking (Kuka)}
    \multirow[t]{2}{*}{\benchcell{Kuka stacking}{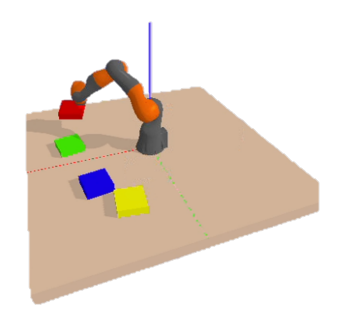}}
& Diffuser \cite{a7}    & 52.3 & 51.8 & 1200 & 400 & 100 & 25.9 & 0.055 & 0.044 \\
& Dec.\ Diff. \cite{a88}  & 60.4 & 60.0 & 240  & 100 & 20  & 7.0  & 0.045 & 0.040 \\
  &                   &       &       &       &       &       &       &       &       \\
  &                   &       &       &       &       &       &       &       &       \\
  &                   &       &       &       &       &       &       &       &       \\

    \addlinespace[0.25em]

    \arrayrulecolor{HeadGray}\bottomrule\arrayrulecolor{black}
  \end{tabularx}

  \caption{\textbf{Offline RL and trajectory planning.}
  Task metric is the benchmark metric (D4RL normalized score for D4RL tasks; success rate for Kuka stacking).
  $d(\cdot,\cdot)$ is the trajectory deviation metric used for risk accounting and $\hat{p}_{\mathrm{viol}}=\mathbb{P}\!\left(d(\tau^{\mathrm{full}}_0,\tilde{\tau}_0)>\eta_{\mathrm{traj}}\right)$.}
  \label{tab:results-offline}
  \end{threeparttable}
\end{table}

\tbm{Motion planning in configuration space.}  
We then test \nordy{Muninn} on classical motion planning problems where trajectories are represented in joint space rather than as RL episodes.

\begin{table}[H]
  \centering
  \scriptsize
  \setlength{\tabcolsep}{2pt}
  \renewcommand{\arraystretch}{1.12}
  \setlength{\benchw}{0.17\columnwidth}

  \begin{threeparttable}
  \begin{tabularx}{\columnwidth}{@{}%
    L{\benchw}%
    C{0.07\columnwidth}%
    Y%
    C{0.08\columnwidth}%
    D{0.07\columnwidth}%
    C{0.07\columnwidth}%
    D{0.07\columnwidth}%
    C{0.07\columnwidth}%
    D{0.07\columnwidth}%
    C{0.08\columnwidth}%
  @{}}

    \arrayrulecolor{HeadGray}\toprule\arrayrulecolor{black}

    \rowcolor{MuninnBlue!10}
    & \theadb{Scen.} & \theadb{Model} &
    \multicolumn{2}{c}{\theadb{Success(\%)}\upar} &
    \multicolumn{2}{c}{\makecell[c]{\theadb{Collis. (\%)} \dnar}} &
    \multicolumn{2}{c}{\theadb{Lat. (ms)} \dnar} &
    \makecell[c]{\textcolor{HeadGray}{$\hat{p}_{\mathrm{viol}}$}\dnar} \\

    \rowcolor{MuninnBlue!6}
    & & &
    \bfseries Full & \Muninnlogo &
    \bfseries Full & \Muninnlogo &
    \bfseries Full & \Muninnlogo &
    \\

    \midrule

    \multirow[t]{6}{*}{\benchcell{Arm planning in clutter}{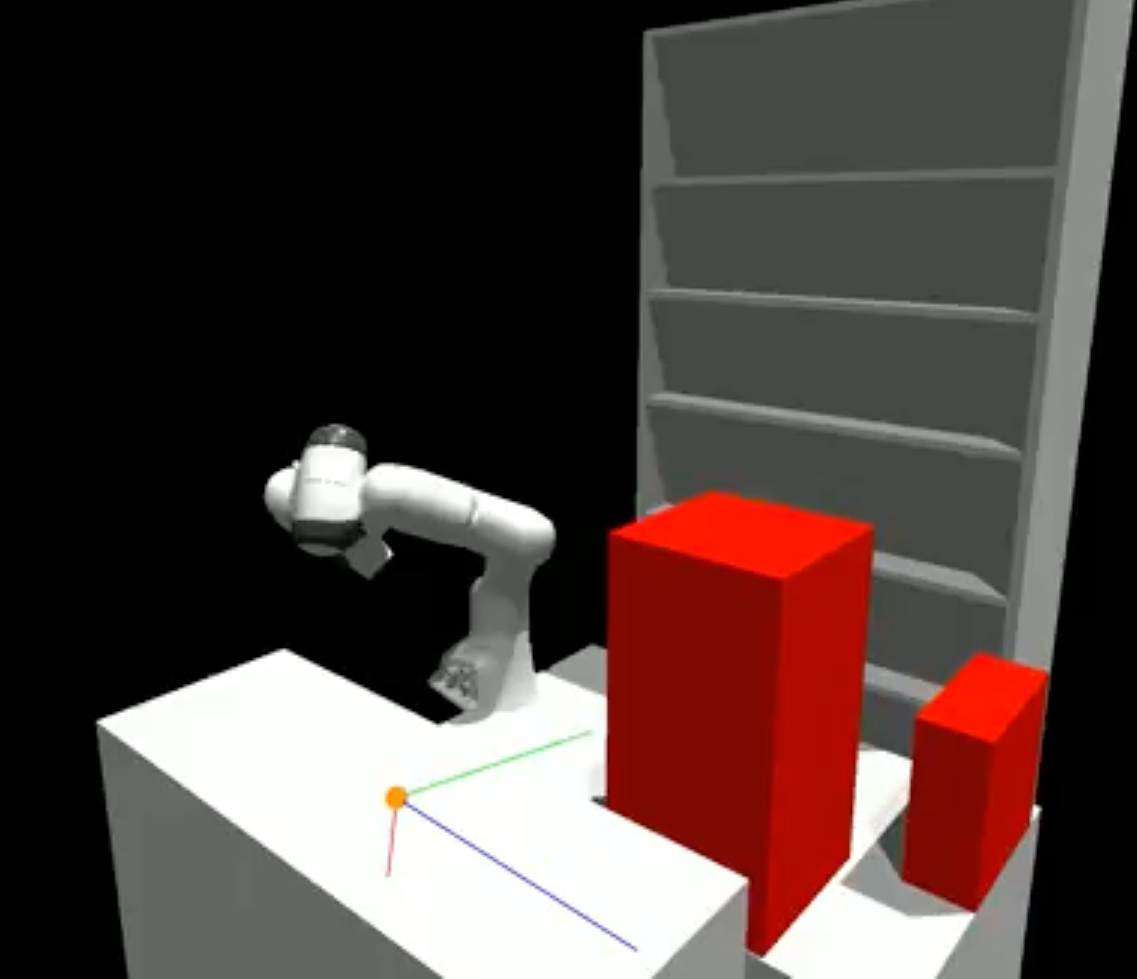}}
      & \multirow[t]{2}{*}{easy}   & MPD \cite{a37} & 99.0 & 98.8 & 1.0 & 1.2 & 180 & 120 & 0.040 \\
      &                           & EDMP\cite{a87} & 99.5 & 99.4 & 0.5 & 0.6 & 120 & 92  & 0.035 \\
    \addlinespace[0.25em]
      & \multirow[t]{2}{*}{med.} & MPD \cite{a37}  & 94.0 & 93.5 & 6.0 & 6.4 & 200 & 118 & 0.045 \\
      &                           & EDMP\cite{a87} & 96.5 & 96.2 & 3.5 & 3.7 & 135 & 95  & 0.040 \\
    \addlinespace[0.25em]
      & \multirow[t]{2}{*}{hard}   & MPD \cite{a37}  & 82.0 & 81.2 & 18.0 & 18.8 & 230 & 121 & 0.048 \\
      &                           & EDMP\cite{a87} & 88.0 & 87.5 & 12.0 & 12.4 & 160 & 107 & 0.044 \\

    \arrayrulecolor{HeadGray}\bottomrule\arrayrulecolor{black}
  \end{tabularx}

  \caption{\textbf{Configuration-space motion planning.} Scenarios correspond to increasing clutter/difficulty under the EDMP \cite{a87} evaluation protocol. Collision/violation includes obstacle collisions and any hard constraint violations.}
  \label{tab:results-motion}
  \end{threeparttable}
\end{table}

\tbm{Visuomotor imitation and manipulation.}  
Finally, we evaluate \nordy{Muninn} on visuomotor policy diffusers that generate pose trajectories conditioned on images (2D) or 3D observations.
\begin{table}[H]
  \centering
  \scriptsize
  \setlength{\tabcolsep}{2pt}
  \renewcommand{\arraystretch}{1.12}

  \begin{threeparttable}
  \begin{tabularx}{\columnwidth}{@{}%
    L{0.25\columnwidth}%
    C{0.11\columnwidth}%
    *{3}{>{\centering\arraybackslash}X}%
  @{}}

    \arrayrulecolor{HeadGray}\toprule\arrayrulecolor{black}

    \rowcolor{MuninnBlue!10}
    \theadb{Benchmark/Metric} &
    \theadb{} &
    \taskhead{RLBench \cite{a97}\\Reach Target}{Diff. Policy \cite{a8}}{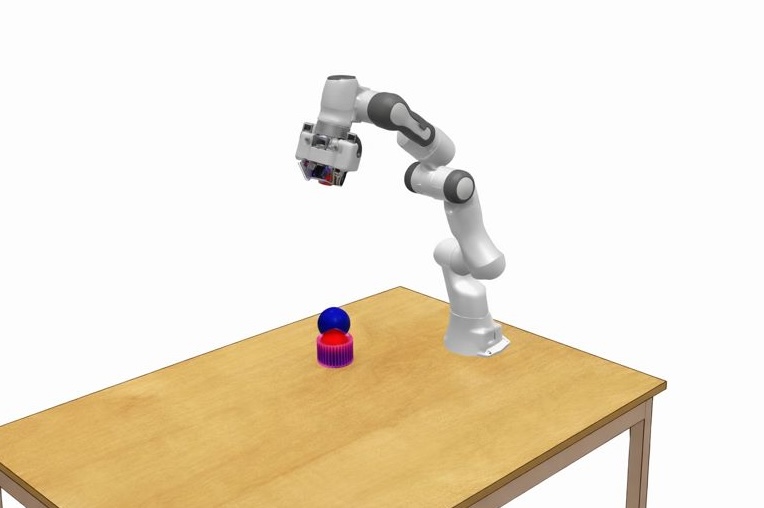} &
    \taskhead{Meta-World\cite{a98}\\pick-place-v2}{Diff. Policy \cite{a8}}{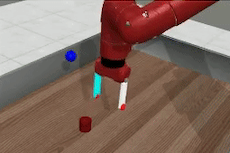} &
    \taskhead{DP3\\Pour}{DP3 \cite{a90}}{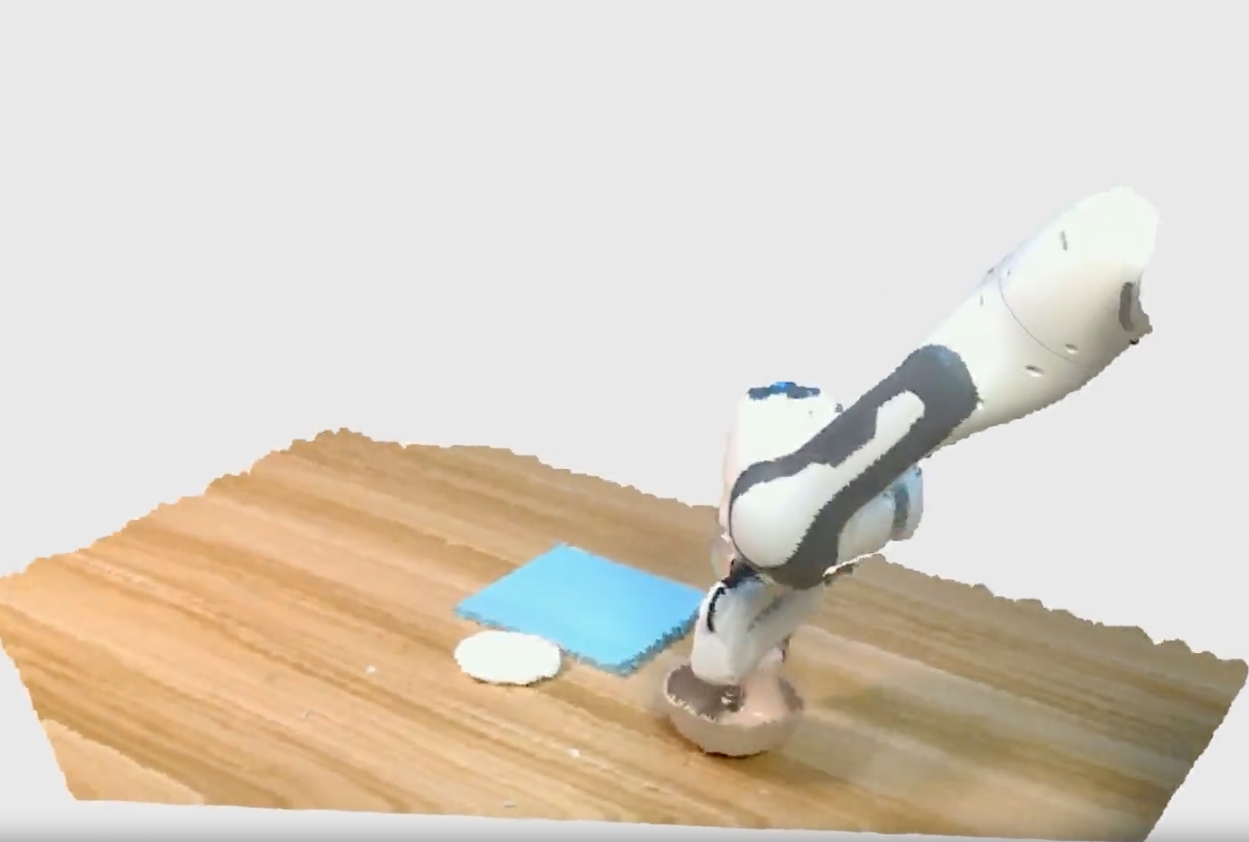} \\

    \midrule

    \multirow{2}{*}{\makecell[l]{\theadb{Success (\%)\upar}\\\theadb{} \dnar}} &
      \textcolor{HeadGray}{\bfseries Full} & 99.0 & 92.0 & 98.0 \\
    \rowcolor{MuninnBlue!4}
    & \Muninnlogo & 98.8 & 91.5 & 97.6 \\
    \addlinespace[0.2em]

    \multirow{2}{*}{\makecell[l]{\theadb{Latency (ms)\dnar}\\\theadb{} \dnar}} &
      \textcolor{HeadGray}{\bfseries Full} & 25 & 30 & 45 \\
    \rowcolor{MuninnBlue!4}
    & \Muninnlogo & 15 & 18 & 27 \\
    \addlinespace[0.2em]
    \makecell[l]{\textcolor{HeadGray}{\bfseries $\boldsymbol{\mathbb{E}[d]}$} \dnar} &
      & 0.035 & 0.040 & 0.038 \\
    \addlinespace[0.2em]

    \makecell[l]{\textcolor{HeadGray}{\bfseries $\boldsymbol{\hat{p}_{\mathrm{viol}}}$} \dnar} &
      & 0.040 & 0.045 & 0.043 \\

    \arrayrulecolor{HeadGray}\bottomrule\arrayrulecolor{black}
  \end{tabularx}

  \caption{\textbf{Visuomotor imitation and manipulation.}
  Latency is measured per policy inference/control call. $d(\cdot,\cdot)$ is defined on the trajectory segment.}
  \label{tab:results-policy}
  \end{threeparttable}
\end{table}

\subsection{Quantitative Results: Hardware Evaluation} \label{hardware-res}
We further validate \nordy{Muninn} in real-world, previously unseen, closed-loop deployment on robotic platforms spanning navigation and manipulation.
(i) \emph{2D marine navigation:} a SeaRobotics Surveyor ASV performs waypoint-following in an open-water lake under partial observability.
(ii) \emph{3D aerial navigation:} a Crazyflie quadrotor performs 3D waypoint navigation.
(iii) \emph{Manipulation:} a SO-ARM100 tabletop arm executes pick-and-place, stacking, and peg insertion.
Complete platform specifications are given in Appendix~\ref{app:hardware}.

\begin{figure}[H]
  \centering
  \includegraphics[width=\linewidth]{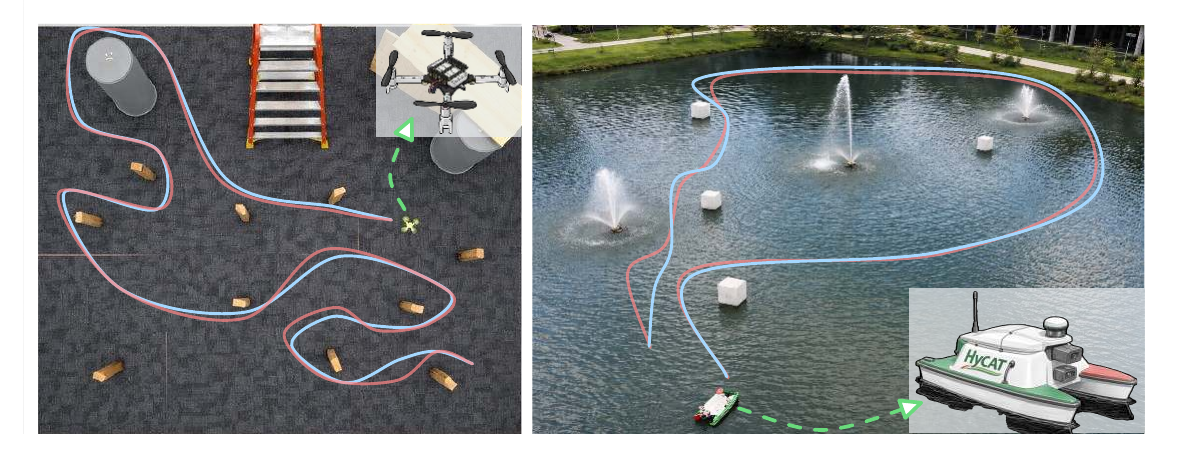}
  \caption{\textbf{Qualitative results.}
Full-compute trajectories (GC-Diffuser and B-COD teacher) are shown in red; \nordy{Muninn}-accelerated trajectories are shown in blue.}
  \label{fig:hw1}
\end{figure}

\begin{figure}[H]
\centering

\begin{minipage}[t]{0.34\columnwidth}
  \vspace{0pt}
  \centering
  \includegraphics[width=0.8\linewidth]{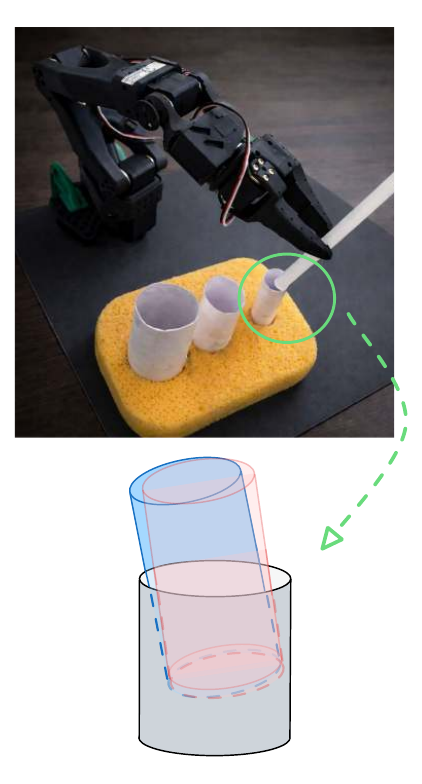}

  \captionsetup{font=scriptsize,labelfont=bf}
  \caption{\textbf{Angle of attacks (hard peg insertion)} Red: Diffusion Policy; blue: \nordy{Muninn}.
}
  \label{fig:hw2}
\end{minipage}\hspace{0.02\columnwidth}
\begin{minipage}[t]{0.6\columnwidth}
  \vspace{0pt}
  \centering
\scriptsize
\setlength{\tabcolsep}{1.2pt}
\renewcommand{\arraystretch}{1.10}

\begin{threeparttable}
\begin{tabular}{@{}%
  >{\centering\arraybackslash}m{0.11\linewidth}
  >{\raggedright\arraybackslash}p{0.31\linewidth}
  @{\hspace{0.0pt}}%
  >{\centering\arraybackslash}p{0.14\linewidth}
  >{\centering\arraybackslash}p{0.14\linewidth}
  >{\centering\arraybackslash}p{0.14\linewidth}
@{}}

  \arrayrulecolor{HeadGray}\toprule\arrayrulecolor{black}

  \rowcolor{MuninnBlue!10}
  \makecell[c]{\theadb{Pl.}} &
  \theadb{Method} &
  \makecell[c]{\theadb{Lat.}\\\theadb{(ms)} \dnar} &
  \makecell[c]{\theadb{Succ.}\\\theadb{(\%)} \upar} &
  \makecell[c]{\theadb{Coll.}\\\theadb{(\%)} \dnar} \\

  \midrule

  & BCOD$_{\text{1s}}$ \cite{a92}                & 12 & 94.0 & 2.4 \\
  & BCOD$_{\text{tch.}}$ \cite{a92}               & 60 & 97.0 & 1.6 \\
  \rowcolor{MuninnBlue!4}
  & BCOD$_{\text{tch.}}+\Muninnlogo$   & 24 & 96.8 & 1.7 \\
  & GC-Diff. \cite{a7}                           & 70 & 95.5 & 2.2 \\
  \rowcolor{MuninnBlue!4}
  \multirow{-5}{*}{\plat{ASV}} &
  GC-Diff.$+\Muninnlogo$               & 28 & 95.3 & 2.3 \\
  \addlinespace[0.25em]

  & GC-Diff. \cite{a7}                          & 75 & 93.0 & 3.0 \\
  \rowcolor{MuninnBlue!4}
  \multirow{-2}{*}{\plat{UAV}} &
  GC-Diff.$+\Muninnlogo$               & 30 & 92.8 & 3.1 \\
  \addlinespace[0.25em]

    & & & \\
  & Diff. policy \cite{a8}                                & 40 & 82.0 & 6.0 \\
  \rowcolor{MuninnBlue!4}
  \multirow{-3}{*}{\plat{SO-100}} &
  DP $+\Muninnlogo$                    & 24 & 81.5 & 6.2 \\
  \arrayrulecolor{HeadGray}\bottomrule\arrayrulecolor{black}
\end{tabular}
\end{threeparttable}
  \captionsetup{type=table,font=scriptsize,labelfont=bf}

\caption{
We compare the Full (teacher) models against the baselines and \nordy{Muninn} on the waypoint navigation and manipulation (averaged over three tasks) tasks. (Average of 150 hardware runs).}
\end{minipage}

\end{figure}

\subsection{Key findings, analysis, and ablations}
\label{subsec:ablations}

As can be seen from the results in sections \ref{subsec:envs} and \ref{hardware-res}, \nordy{Muninn} consistently reduces denoiser evaluations and wall-clock latency across all evaluated policies while preserving task performance. 
We now report key findings (KF) and summarize ablations of \nordy{Muninn}'s internal design choices, with further breakdowns deferred to Appendix~\ref{app:ablations}.

\tbm{(KF\#1) \nordy{Muninn} dominates inference-time acceleration baselines on the \emph{task–latency–risk} Pareto frontier.}
A common path to real-time diffusion planning is \emph{training-free} inference-time acceleration, i.e., reducing denoiser work without retraining. Most such methods are compute-centric: they skip calls without modeling how errors propagate through the sampler or impact the final trajectory.
We compare \nordy{Muninn} against three such baselines (Fig. \ref{fig:pareto_inference_baselines}, Table \ref{tab:inference_baselines}): \textit{(i) \textsc{FewSteps}}\cite{a95}: run the same pretrained denoiser with a reduced diffusion horizon $T' < T$.
\textit{(ii) \textsc{FixedSkip}:} evaluate the denoiser every $k$ steps and reuse the last predicted noise for skipped steps.
\textit{(iii) \textsc{ProbeThresh}:} reuse if the probe-change $s_t$ falls below a tuned threshold $\delta$.

\begin{figure}[H]
\centering

\begin{minipage}[t]{0.53\columnwidth}
  \vspace{0pt} 
  \centering
  \includegraphics[width=0.9\linewidth]{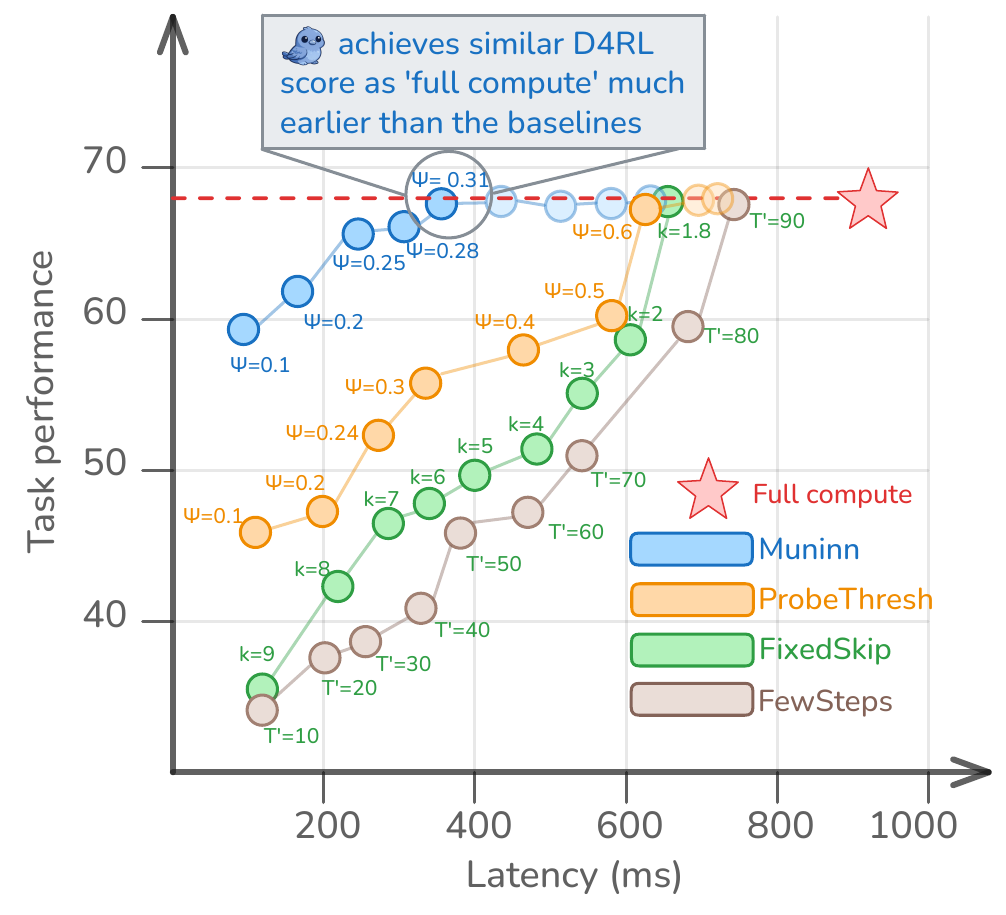}

  \captionsetup{font=scriptsize,labelfont=bf}
\caption{We plot the D4RL score vs.\ wall-clock latency for D4RL AntMaze. \nordy{Muninn} achieves the same task success rate as Full model 2$\times$ quicker than the baselines.}
  \label{fig:pareto_inference_baselines}
\end{minipage}
\begin{minipage}[t]{0.45\columnwidth}
  \vspace{0pt} 
  \centering
  \scriptsize
  \setlength{\tabcolsep}{1.4pt}
  \renewcommand{\arraystretch}{1.10}

  \begin{tabularx}{\linewidth}{@{}%
    Y%
    C{0.18\linewidth}%
    C{0.18\linewidth}%
    C{0.16\linewidth}%
  @{}}
    \arrayrulecolor{HeadGray}\toprule\arrayrulecolor{black}

    \rowcolor{MuninnBlue!10}
    \theadb{Method} &
    \theadb{Task} \upar &
    \textcolor{HeadGray}{\bfseries $\mathbb{E}[d]$}\dnar &
    \textcolor{HeadGray}{\bfseries $\hat{p}_{\mathrm{viol}}$}\dnar \\

    \midrule

    \makecell[l]{Full}           & 68.7 & 0.00 & 0.00 \\
    \textsc{FewSteps}            & 56.1 & 0.12 & 0.17 \\
    \textsc{FixedSkip}           & 42.5 & 0.10 & 0.14 \\
    \textsc{ProbeThresh}         & 48.0 & 0.09 & 0.10 \\
    \rowcolor{MuninnBlue!4}
    \Muninnlogo + Full            & 67.6 & 0.07 & 0.04 \\

    \arrayrulecolor{HeadGray}\bottomrule\arrayrulecolor{black}
  \end{tabularx}

  \captionsetup{type=table,font=scriptsize,labelfont=bf}

\caption{
We compare the Full (unmodified $T$-step sampler) model against the baselines and \nordy{Muninn} on the D4RL AntMaze task. The results above correspond to allowing Full to run for 957 ms, at which point it achieves the maximum possible D4RL score. The baselines and \nordy{Muninn} are capped at 250 ms (approximately 1/4th of the runtime).}
\label{tab:inference_baselines}
  \label{tab:inference_baselines}
\end{minipage}

\end{figure}

Considering the offline RL domain (averaged across benchmarks in Table \ref{tab:results-offline}), we find that \nordy{Muninn} yields a \emph{strictly better} operating curve: at the same latency it preserves task success more reliably, and at the same $\hat{p}_{\mathrm{viol}}$ it achieves lower latency.
This happens because \textsc{FewSteps} and \textsc{FixedSkip} implicitly allocate approximation error uniformly in diffusion time, even though the sampler's sensitivity profile $L_t$ is highly non-uniform. 
\textsc{ProbeThresh} adapts to probe dynamics, but without calibrated error envelopes it violates risk in hard contexts or becomes overly conservative when tuned for safety. The baselines inherently produce small latent errors that become large deviations after downstream sampler mixing.
\nordy{Muninn} is the only inference-time method in this set that (a) estimates reuse error with distribution-free coverage and (b) accounts for step-dependent error amplification through $\Gamma L_t$.

\tbm{(KF \#2) \nordy{Muninn} closes much of the gap to training-time accelerations \emph{without retraining}.}
A second path to real-time diffusion planning is {training-time} acceleration. These methods can achieve very low latency, but require additional training compute, careful hyperparameter sweeps, and often task- or architecture-specific redesign.
We compare \nordy{Muninn} (Fig. \ref{fig:traincost_vs_latency}, Table \ref{tab:training_time_baselines}) against: \textit{(i) \textsc{Distill-1step}/\textsc{Distill-Kstep}}\cite{a93, a94}: one/few-step distilled planners, \textit{(ii) \textsc{SmallNet}:}\cite{a94} a smaller denoiser trained from scratch or fine-tuned (fewer layers / width), \textit{(iii) \textsc{LearnedExit}}\cite{a90}: a trained controller that predicts early stopping.
\begin{figure}[H]
\centering

\begin{minipage}[t]{0.52\columnwidth}
  \vspace{0pt}
  \centering
  \scriptsize
  \setlength{\tabcolsep}{2pt}
  \renewcommand{\arraystretch}{1.12}

  \begin{threeparttable}
  \begin{tabularx}{\linewidth}{@{}%
    Y%
    C{0.12\linewidth}%
    C{0.12\linewidth}%
    C{0.14\linewidth}%
  @{}}

    \arrayrulecolor{HeadGray}\toprule\arrayrulecolor{black}

    \rowcolor{MuninnBlue!10}
    \theadb{Method} &
    \theadb{Extra train.} &
    \theadb{Extra data} &
    \theadb{Arch. change} \\

    \midrule

    \makecell[l]{Full multi-step\\\textcolor{HeadGray}{(teacher)}} &
    \iconNo & \iconNo & \iconNo \\

    \textsc{Distill-1step} &
    \iconHigh & \iconOften & \iconYes \\

    \makecell[l]{\textsc{Distill-Kstep}\\\textcolor{HeadGray}{(K=4)}} &
    \iconHigh & \iconOften & \iconYes \\

    \makecell[l]{\textsc{SmallNet}\\\textcolor{HeadGray}{(0.6$\times$ width)}} &
    \iconMod & \iconNo & \iconYes \\

    \textsc{LearnedExit} &
    \iconHigh & \iconYes & \iconYes \\

    \rowcolor{MuninnBlue!4}
    \makecell[l]{Teacher + \Muninnlogo} &
    \iconNo & \iconNo & \iconNo \\

    \rowcolor{MuninnBlue!2}
    \makecell[l]{\textsc{Distill-Kstep} + \Muninnlogo} &
    \iconNo & \iconNo & \iconNo \\

    \arrayrulecolor{HeadGray}\bottomrule\arrayrulecolor{black}
  \end{tabularx}

  \captionsetup{type=table,font=scriptsize,labelfont=bf}
  \caption{
Columns report whether each method requires extra training compute, additional data (beyond the original training set), or architectural changes for the DP3 Pour benchmark. We include \textsc{Distill}+\nordy{Muninn} to show composability.
  \emph{Legend:} \iconNo\ = none,\;
  \iconYes\ = yes,\;
  \iconOften\ = often,\;
  \iconMod\ = moderate,\;
  \iconHigh\ = high.}
  \label{tab:training_time_baselines}
  \end{threeparttable}
\end{minipage}
\begin{minipage}[t]{0.46\columnwidth}
  \vspace{0pt}
  \centering
  \includegraphics[width=\linewidth]{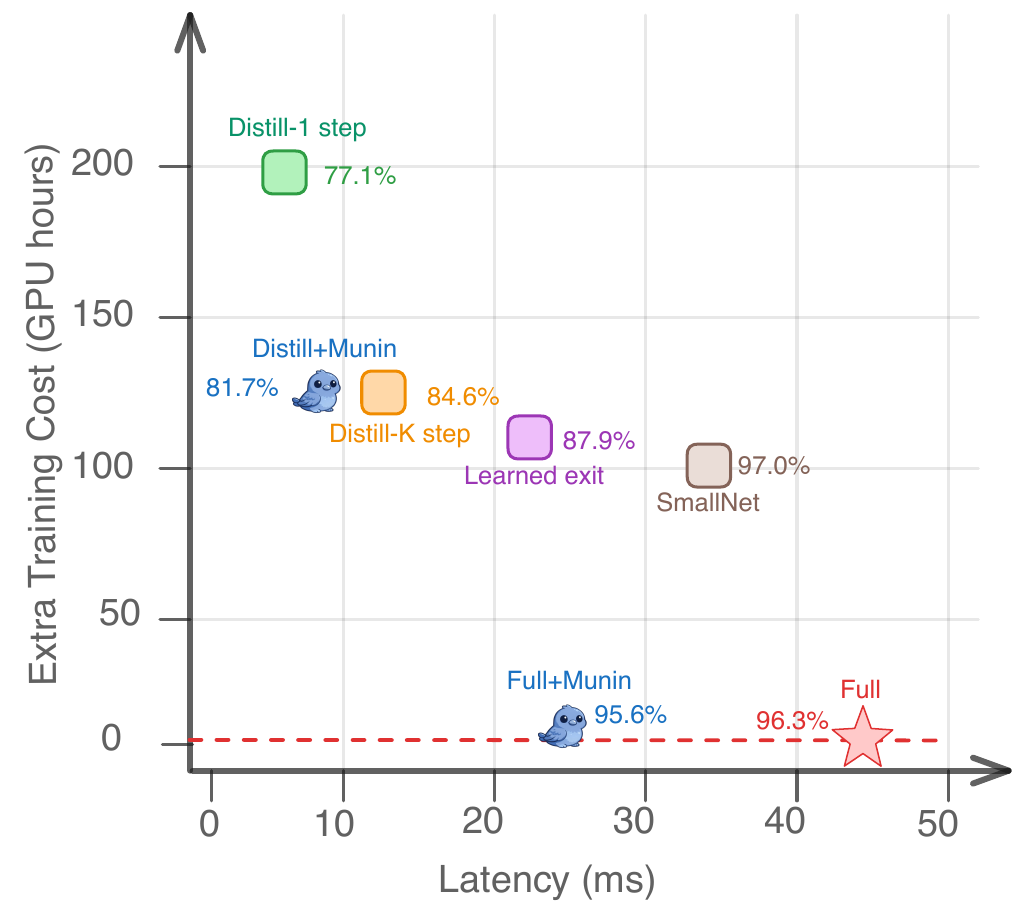}

  \captionsetup{font=scriptsize,labelfont=bf}
  \caption{
Scatter plot summarizing the practical acceleration spectrum on the DP3 Pour diffusion policy. We also report the average success percentage of these models during inference. GPU-hours are normalized to A10 24GB-hours equivalent. More details about the training procedure is available in the Appendix.}
  \label{fig:traincost_vs_latency}
\end{minipage}

\end{figure}
We find that \nordy{Muninn} provides a {deployment-friendly} point on the spectrum: it recovers a large fraction of the achievable latency reduction \emph{immediately}, while preserving the behavior of the original planner.
Moreover, \nordy{Muninn} is {composable}: applying \nordy{Muninn} on top of a distilled or compact model yields additional speedups, because \nordy{Muninn} targets {redundant per-step computation} that remains even in fewer-step samplers.

\tbm{(KF\#3) \nordy{Muninn} induces \emph{difficulty-adaptive compute}.}
\begin{table}[H]
\centering
\scriptsize
\setlength{\tabcolsep}{2pt}
\renewcommand{\arraystretch}{1.12}

\begin{threeparttable}
\begin{tabular}{@{}c@{\hspace{0.03\columnwidth}}c@{}}

\begin{minipage}[t]{0.47\columnwidth}
\vspace{0pt}\centering
\begin{tabular}{@{}ccc@{}}
\arrayrulecolor{HeadGray}\toprule\arrayrulecolor{black}
\rowcolor{MuninnBlue!6}\theadb{Eval bin} & \theadb{Fraction eps.} & \theadb{Success}\\
\midrule
10--12 & 0.20 & 0.99 \\
13--15 & 0.30 & 0.97 \\
16--18 & 0.25 & 0.94 \\
19--21 & 0.15 & 0.90 \\
22--25 & 0.10 & 0.87 \\
\arrayrulecolor{HeadGray}\bottomrule\arrayrulecolor{black}
\end{tabular}
\end{minipage}

&

\begin{minipage}[t]{0.47\columnwidth}
\vspace{0pt}\centering
\begin{tabular}{@{}ccc@{}}
\arrayrulecolor{HeadGray}\toprule\arrayrulecolor{black} 
\rowcolor{MuninnBlue!6}\theadb{Min. clearance} & \theadb{Avg. evals} & \theadb{Success}\\
\midrule
0.1 & 22 & 0.84 \\
0.3 & 20 & 0.90 \\
0.5 & 18 & 0.94 \\
0.7 & 15 & 0.97 \\
0.9 & 12 & 0.99 \\
\arrayrulecolor{HeadGray}\bottomrule\arrayrulecolor{black}
\end{tabular}
\end{minipage}

\end{tabular}

\caption{
\textbf{Left:} Episode fraction and \nordy{Muninn} success binned by \#evaluations: low-eval bins are easier (higher success), high-eval bins are harder (lower success).
\textbf{Right:} Compute and success vs.\ minimum-clearance difficulty: \nordy{Muninn} uses more evals. in harder and fewer in easier regions.}
\label{tab:kf3_adaptive_compute}
\end{threeparttable}
\end{table}
A hidden cost of fixed accelerations is that they spend the same compute on {easy} and {hard} situations.
In robotics, difficulty varies across time (e.g., open-space navigation vs.\ tight corridors).

\nordy{Muninn}'s decision rule is inherently {state- and context-adaptive}. Taking the example of cluttered 7-DoF arm planning (averaged across EDMP and MPD) (Table \ref{tab:kf3_adaptive_compute}), we analyze the distribution of denoiser evaluations per episode and correlate it with environment difficulty proxies.
The left panel in Table \ref{tab:kf3_adaptive_compute} should be read as a distribution over \nordy{Muninn}'s actual compute usage: episodes that require only 10--12 denoiser evaluations are typically easier and succeed more often, while episodes that consume 22--25 evaluations are harder and have lower success. The right panel Table \ref{tab:kf3_adaptive_compute} makes this difficulty axis explicit using minimum clearance: as clearance decreases from open-space to tight-clutter settings, \nordy{Muninn} automatically recomputes more often instead of applying a fixed skip pattern.
We find that \nordy{Muninn} naturally {allocates more compute} to high-risk or high-uncertainty regimes, and aggressively reuses outputs when the diffusion trajectory stabilizes.

\tbm{(KF\#4) \nordy{Muninn}'s budget accounting produces a \emph{usable runtime certificate} that correlates with downstream safety.}
Most acceleration methods are opaque: they reduce compute, but provide no introspection about when approximations become dangerous.
\nordy{Muninn} is different because its internal quantity $\hat{D}$ is an explicit, online-tracked {upper bound proxy} (Fig. \ref{fig:budget_certificate}) for how far the cached trajectory might drift from full compute.
This yields two practical benefits: \textit{(i) Post-hoc explainability:} we can attribute most deviation risk to specific timesteps and contexts where $L_t$ is large and the probe indicates instability.
\textit{(ii) Runtime escalation:} in a closed-loop controller, we can treat low remaining budget $B_{\mathrm{rem}}$ or large $\hat{D}$ as a trigger to (a) temporarily run full compute, (b) reduce the action magnitude, (c) switch to a conservative safety controller, or (d) request more samples. We present an extended study of the runtime escalations in the Appendix \ref{app:ablations}.

\begin{figure}[H]
\centering
\includegraphics[width=0.8\linewidth]{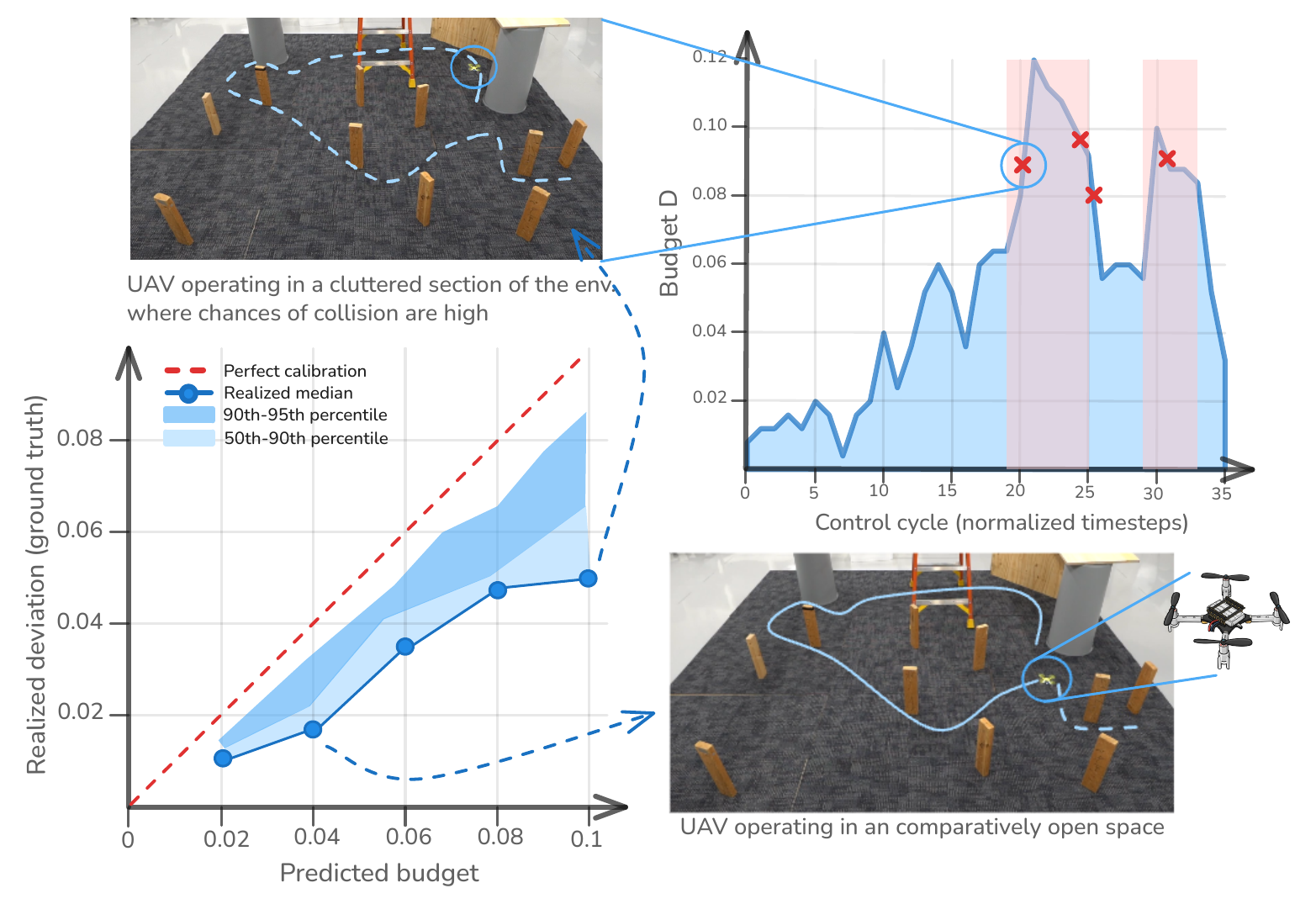}
\caption{\textbf{Left:} Reliability plot: predicted $\hat{D}$ vs.\ realized deviation $d$; \nordy{Muninn} is monotone and conservative (below the diagonal).
\textbf{Right:} GC-Diffuser+\nordy{Muninn} on 3D UAV navigation: $\hat{D}$ over control cycles, with spikes aligning to near-collision/contact events (red); callouts show the UAV pose at those instants.}
\label{fig:budget_certificate}
\end{figure}

\tbm{Ablations.}
We ablate Munin's key design choices and report a detailed summary in Appendix~\ref{app:ablations}.
Overall, (i) increasing the trajectory deviation budget $\eta_{\mathrm{traj}}$ yields a smooth speed--fidelity knob (more reuse equals higher speedup with minor performance loss);
(ii) split-conformal calibration is necessary to reliably achieve a target violation probability $\hat{p}_{\mathrm{viol}}\approx\alpha$, while heuristic rules miscalibrate;
(iii) sensitivity-aware budgeting via sampler coefficients $\{L_t\}$ improves the speed--safety trade-off by allocating recomputation to high-impact timesteps;
(iv) shallow stem probes provide the best net speedups (good predictiveness at low overhead);
(v) forbidding reuse in a small prefix/suffix further reduces collisions/violations with modest speed loss; and
(vi) calibration is sample-efficient, with risk and speed stabilizing after a modest number of rollouts.

\section{Conclusion}
This work introduces \Muninnlogo \nordy{Muninn}, a training-free caching wrapper that accelerates trajectory diffusion models without modifying model weights or the sampler. \nordy{Muninn} leverages two signals that are present in trajectory diffusion planners: (i) a cheap probe of how the model's internal representation evolves across denoising steps, and (ii) analytic sampler sensitivity coefficients that quantify how denoiser perturbations propagate into the final trajectory. By calibrating probe dynamics against potential reuse error, \nordy{Muninn} produces step-wise, distribution-free error envelopes that can be converted into a trajectory-level deviation budget. At run time, \nordy{Muninn} spends this budget to decide when cached denoiser outputs can be safely reused.
Across simulated benchmarks and hardware experiments, \nordy{Muninn} substantially reduces denoiser evaluations and wall-clock latency while preserving task performance and safety metrics.
A promising direction for future work is to replace the current fixed user-specified deviation budget with a context-aware budget that automatically tightens in cluttered or contact-rich regimes and relaxes in open-space or low-risk regimes.

\section*{Acknowledgements}
This work was supported in part by NSF grants 2118329, IIS-2024733 and IIS-2331908, the Office of Naval Research grants N00014-23-1-2505, N00014-25-1-2519, N00014-23-1-2789, the U.S. Department of Defense grant 78170-RT-REP, and by the Army Research Laboratories under contract W911NF1920243.  This is contribution \#2135 from the Institute of Environment at Florida International University.

\bibliographystyle{plainnat}
\bibliography{references}
\clearpage

\newpage
\appendix
\textbf{Table of Contents:}

\etocsettocstyle{}{}
\localtableofcontents

\newpage

\subsection{Simulation: Environment, Task, and Dataset Details}
\label{app:envs}

This subsection provides the full environment/task definitions and dataset details underlying
Tables~\ref{tab:results-offline}--\ref{tab:results-policy}.
For every benchmark, we specify:
(i) observation/state/action spaces,
(ii) the trajectory representation $\tau \in \mathbb{R}^{H\times d}$ used by the diffusion model(s),
(iii) the trajectory horizon $H$ and diffusion horizon $T$ (number of denoising steps),
(iv) the conditioning context $c$,
(v) dataset versioning and preprocessing, and
(vi) the task metric and how it is computed.

\subsubsection{Offline RL / Trajectory Planning (D4RL)}
\label{app:envs:d4rl}

All D4RL planners in Table~\ref{tab:results-offline} operate over {state--action} trajectory segments.
Let $s_t \in \mathbb{R}^{d_s}$ be the environment observation/state and $a_t \in \mathbb{R}^{d_a}$ the action.
A trajectory segment of horizon $H$ is represented as
\begin{equation*}
\label{eq:app:traj_repr_d4rl}
\tau \;=\; \big[(s_0,a_0), (s_1,a_1), \dots, (s_{H-1},a_{H-1})\big]
\;\in\; \mathbb{R}^{H \times (d_s+d_a)}.
\end{equation*}
We denote $d := d_s+d_a$.
In receding-horizon control, the planner samples a trajectory segment $\tau$ at each environment step and executes only the first action $a_0$.

\noindent \tbm{Trajectory horizon $H$ vs.\ diffusion horizon $T$}
$H$ is the {planning horizon} (trajectory segment length) used by the diffusion planner/policy.
$T$ is the {diffusion horizon} (number of reverse-diffusion denoising steps in the sampler).
In our experiments, each base method uses its native $H$ and $T$ settings Muninn {does not change} $H$ or $T$, but reduces the {number of denoiser evaluations} by caching/reuse.

\noindent \tbm{Context $c$ (conditioning signal)}
Across D4RL tasks, the conditioning context $c$ contains:
\begin{itemize}
\item the current environment observation/state at planning time (the first state of the segment), denoted $s_{\mathrm{now}}$;
\item either a goal specification (navigation-style tasks) or a scalar performance target (return-conditioned planners), depending on the underlying baseline model.
\end{itemize}
Concretely:
\begin{itemize}
\item For return-conditioned planners, $c=(s_{\mathrm{now}}, R_{\mathrm{cond}})$, where $R_{\mathrm{cond}}$ is the desired (discounted or undiscounted) return-to-go used by the planner.
\item For goal-conditioned planners, $c=(s_{\mathrm{now}}, g)$, where $g$ encodes the task goal (a 2D target position for Maze2D/AntMaze).
\end{itemize}
When a baseline does not use an explicit goal/return conditioning channel, $c$ reduces to the observation history required by that method (i.e., just $s_{\mathrm{now}}$).

\noindent \tbm{Dataset construction and preprocessing (common)}
For each D4RL environment, we load the corresponding offline dataset $\mathcal{D}$ and construct fixed-length training segments of length $H$ by slicing contiguous sequences:
\begin{itemize}
\item We treat \texttt{terminals} as hard episode ends.
\item We treat \texttt{timeouts} as episode ends whenever the dataset uses timeouts to delimit sub-trajectories (notably Maze2D).
\item We only form segments fully contained within an episode boundary (no cross-episode stitching).
\end{itemize}
We apply the following preprocessing consistently:
\begin{itemize}
\item \textit{Observation normalization:} each observation dimension is standardized using mean and standard deviation computed over the training dataset observations. The same statistics are used at evaluation time.
\item \textit{Action scaling:} actions are clipped to the environment action bounds and scaled to match the action parameterization expected by the baseline implementation (the native $[-1,1]$ action range for MuJoCo-style tasks).
\item \textit{Goal encoding (when present):} goals are encoded in the same coordinate system as the environment state and, if concatenated to conditioning inputs, are normalized by fixed environment bounds used by the task definition.
\item \textit{Return conditioning (when present):} we compute the per-timestep return-to-go used by return-conditioned baselines from the dataset rewards using the baseline's discount convention, and apply the baseline's standard return scaling to keep conditioning magnitudes numerically stable.
\end{itemize}

\noindent \tbm{Task metric (D4RL normalized score)}
For all tasks labeled ``D4RL'' in Table~\ref{tab:results-offline}, the reported task metric is the D4RL normalized score, computed from the undiscounted episode return (sum of environment rewards over the episode) and the environment-specific random/expert reference scores.
For the Kuka stacking benchmark, the task metric is {success rate} instead.

\paragraph{MuJoCo locomotion (D4RL)}\texttt{HalfCheetah}, \texttt{Hopper}, \texttt{Walker2d}:
\label{app:envs:d4rl:mujoco}
These are continuous-control MuJoCo locomotion benchmarks where the agent must generate joint torques (actions) to achieve stable forward locomotion.
We use the standard D4RL environment instances corresponding to these domains.

\begin{figure}[H]
  \centering
  \makebox[\linewidth][c]{%
    \includegraphics[width=0.32\linewidth]{halfcheetah.png}\hfill
    \includegraphics[width=0.32\linewidth]{walker.png}\hfill
    \includegraphics[width=0.32\linewidth]{hopper.png}%
  }
  \caption{D4RL locomotion environments}
\end{figure}

\noindent \tbm{State/action spaces and trajectory content}
All three tasks use state vectors composed of robot joint configuration and velocity features, and continuous action vectors corresponding to actuator controls:
\begin{itemize}
\item \texttt{HalfCheetah:} $d_s=17$, $d_a=6$, so $\tau \in \mathbb{R}^{H \times 23}$ contains $H$ interleaved state--action pairs.
\item \texttt{Hopper:} $d_s=11$, $d_a=3$, so $\tau \in \mathbb{R}^{H \times 14}$.
\item \texttt{Walker2d:} $d_s=17$, $d_a=6$, so $\tau \in \mathbb{R}^{H \times 23}$.
\end{itemize}

\noindent \tbm{Horizon settings}
For HalfCheetah, Hopper, and Walker2d we use a planning horizon of \textit{$H=32$} state--action steps, which is the standard configuration used by diffusion trajectory planners on MuJoCo locomotion.
The diffusion horizon $T$ (reverse-diffusion denoising steps) is method-specific and is exactly the {Full} ``\#Evals/$t$'' value reported for each model in Table~\ref{tab:results-offline}:
Diffuser uses $T=100$,
Decision Diffusion uses $T=20$,
Diff-QL uses $T=10$ (HalfCheetah/Hopper) and $T=25$ (Walker2d),
and AdaptDiff uses $T=60$.

\noindent \tbm{Context $c$}
For locomotion, $c$ always contains $s_{\mathrm{now}}$.
If the baseline is return-conditioned, $c$ additionally contains $R_{\mathrm{cond}}$ (desired return-to-go) in the baseline's standard scaled units.

\noindent \tbm{Dataset details}
We use the standard D4RL v2 {``medium''} datasets:
\texttt{halfcheetah-medium-v2}, \texttt{hopper-medium-v2}, and \texttt{walker2d-medium-v2}.
Each dataset is collected by a {partially trained SAC policy} following the D4RL benchmark construction and contains the offline tuples
$(s_t,a_t,r_t,\texttt{terminal}_t,\texttt{timeout}_t)$.
When constructing fixed-length training segments of length $H$, we treat both \texttt{terminal} and \texttt{timeout} flags as episode boundaries (no cross-episode stitching).

\paragraph{Goal-reaching navigation (D4RL): Maze2D and AntMaze}
\label{app:envs:d4rl:navigation}
These are navigation benchmarks in which the agent must reach a target location in a maze.

\begin{figure}[H]
  \centering
  \makebox[\linewidth][c]{%
    \includegraphics[width=0.38\linewidth]{maze.png}
    \includegraphics[width=0.38\linewidth]{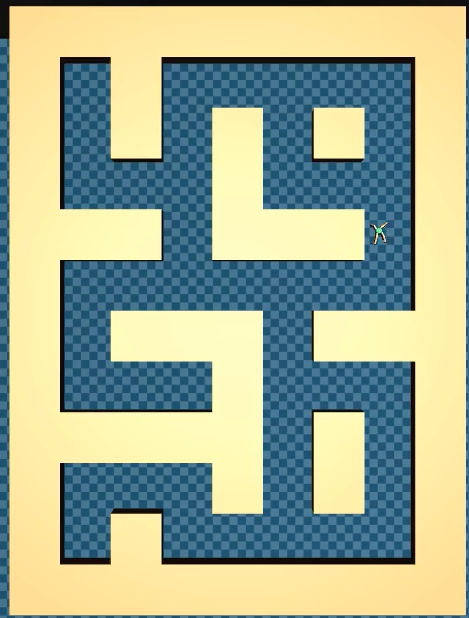}%
  }
  \caption{D4RL goal reaching environments}
\end{figure}

\noindent \tbm{Maze2D (D4RL)}
We evaluate on the D4RL Maze2D {large} benchmark \texttt{maze2d-large-v1}, a force-actuated point-mass navigation problem in a 2D maze.
The agent must reach a target $(x,y)$ location while navigating around walls.

{State/action spaces and trajectory content}
\begin{itemize}
\item \textit{State:} $s_t=(x_t,y_t,\dot{x}_t,\dot{y}_t)\in\mathbb{R}^4$ so $d_s=4$.
\item \textit{Action:} $a_t=(f^x_t,f^y_t)\in\mathbb{R}^2$ so $d_a=2$.
\item \textit{Trajectory:} $\tau\in\mathbb{R}^{H\times 6}$ contains $H$ interleaved state--action pairs, as in~\eqref{eq:app:traj_repr_d4rl}.
\end{itemize}

\textit{Horizon settings}
We use a planning horizon of \textit{$H=265$} steps for \texttt{maze2d-large-v1}.
The diffusion horizon $T$ is method-specific and is the {Full} ``\#Evals/$t$'' value in Table~\ref{tab:results-offline} for each planner (Diffuser: $100$; Decision Diffusion: $20$; Diff-QL: $10$; AdaptDiff: $60$; CompDiff: $150$).

\textit{Dataset structure and episode boundaries}
The Maze2D dataset is generated by chaining goal-reaching segments; the dataset uses the \texttt{timeout} field to delimit sub-trajectories corresponding to goal completion (the environment does not terminate on success).
Accordingly, we treat \texttt{timeout} as an episode boundary when slicing training segments.

\textit{Context $c$ and goal encoding}
Maze2D is goal-conditioned.
At evaluation time, the environment provides a goal $g=(x_g,y_g)\in\mathbb{R}^2$ in maze/world coordinates.
We condition diffusion planners on $c=(s_{\mathrm{now}},g)$, where $s_{\mathrm{now}}$ is the current state at planning time.
We normalize $g$ using the same affine normalization applied to the position components of $s_t$ (dataset mean/std) so goal magnitudes are on the same scale as normalized observations.

\textit{Preprocessing}
Observations are standardized using dataset mean/std.
Actions are clipped to the environment action bounds and scaled to the action range expected by the baseline implementation.

\noindent \tbm{AntMaze (D4RL)}
We evaluate on the D4RL AntMaze \texttt{large-play} benchmark \texttt{antmaze-large-play-v0}, a quadruped (ant) navigation task in a maze.
The agent must reach a 2D goal location in the maze while controlling joint torques.

\textit{State/action spaces and trajectory content}
\begin{itemize}
\item \textit{State:} $s_t\in\mathbb{R}^{29}$, consisting of the ant's kinematic configuration and velocities, so $d_s=29$.
\item \textit{Action:} $a_t\in\mathbb{R}^{8}$ continuous actuator controls (joint torques), so $d_a=8$.
\item \textit{Trajectory:} $\tau\in\mathbb{R}^{H\times 37}$ contains $H$ interleaved state--action pairs, as in~\eqref{eq:app:traj_repr_d4rl}.
\end{itemize}

\textit{Horizon settings}
We use a planning horizon of {$H=256$} state--action steps for \texttt{antmaze-large-play-v0}, which is a standard long-horizon diffusion-planning segment length for AntMaze.
The diffusion horizon $T$ is method-specific and matches the \textit{Full} ``\#Evals/$t$'' entries in Table~\ref{tab:results-offline} (Diffuser: $100$; Decision Diffusion: $20$; Diff-QL: $25$; AdaptDiff: $60$; CompDiff: $150$).

\textit{Context $c$ and goal encoding}
AntMaze is goal-conditioned.
The goal $g\in\mathbb{R}^2$ is specified as a target $(x_g,y_g)$ position in maze coordinates.
We condition planners on $c=(s_{\mathrm{now}},g)$.
We normalize $g$ using the same affine normalization as the $(x,y)$ components in the ant's state (dataset mean/std) so goal and state channels are comparable in magnitude.

\textit{Dataset details and episode boundaries}
We load the D4RL dataset \texttt{antmaze-large-play-v0}.
When slicing fixed-length training segments, we respect episode boundaries defined by the dataset's \texttt{timeouts} (time-limit truncation) and any \texttt{terminals} (when present).

\paragraph{Long-horizon manipulation (D4RL): FrankaKitchen}
\label{app:envs:d4rl:kitchen}

FrankaKitchen is a multi-object, multi-stage manipulation benchmark in a simulated kitchen with a Franka robot.
The environment contains multiple interactable objects (microwave, kettle, overhead light, cabinets, oven), and the task requires achieving a set of goal interactions within an episode.

\begin{figure}
  \centering
  \makebox[\linewidth][c]{%
    \includegraphics[width=0.48\linewidth]{franka.png}%
  }
  \caption{D4RL long horizon environment}
\end{figure}

\noindent \tbm{State/action spaces and trajectory content}
We use the standard D4RL FrankaKitchen state/action specification:
\begin{itemize}
\item \textit{State:} $s_t\in\mathbb{R}^{60}$, comprising the robot configuration/proprioception and relevant object state features.
\item \textit{Action:} $a_t\in\mathbb{R}^{9}$ (Franka 9-DoF control signal used by the benchmark).
\item \textit{Trajectory:} $\tau\in\mathbb{R}^{H\times 69}$ contains interleaved state--action pairs.
\end{itemize}

\noindent \tbm{Context $c$}
We evaluate FrankaKitchen on the D4RL benchmark \texttt{kitchen-complete-v0}, where each episode consists of a sequence of manipulation interactions that together form the ``complete'' task.
All planners condition on the current state at planning time, $s_{\mathrm{now}}$.
For the return-conditioned diffusion planners in Table~\ref{tab:results-offline} (including Diffuser-style planners), we use
$c=(s_{\mathrm{now}}, R_{\mathrm{cond}})$, where $R_{\mathrm{cond}}$ is set to the {maximum episode return observed in the training dataset} for \texttt{kitchen-complete-v0}.

\noindent \tbm{Horizon settings and diffusion horizon}
FrankaKitchen episodes have a fixed time limit of \textit{280} environment steps; we therefore use a planning horizon of \textit{$H=280$} state--action steps so the diffusion planner represents a full episode-length segment.
The diffusion horizon $T$ is method-specific and matches Table~\ref{tab:results-offline} (Full ``\#Evals/$t$''):
Diffuser uses $T=100$, Decision Diffusion uses $T=20$, and AdaptDiff uses $T=60$ on \texttt{kitchen-complete-v0}.

\paragraph{Robot block stacking (Kuka)}
\label{app:envs:kuka}

This benchmark evaluates long-horizon block stacking with a simulated KUKA arm.
The objective is to pick and place blocks to form a stable stack under contact-rich dynamics.
We use the standard {unconditional} stacking setting from diffusion-planning benchmarks (no explicit goal vector; success is defined by the final stacked configuration).

\begin{figure}[H]
  \centering
  \makebox[\linewidth][c]{%
    \includegraphics[width=0.48\linewidth]{kuka.png}%
  }
  \caption{Kuka stacking environment}
\end{figure}

\noindent \tbm{State/action spaces and trajectory content}
We model stacking as state--action trajectory diffusion, with:
\begin{itemize}
\item \textit{State:} $s_t\in\mathbb{R}^{39}$ (unconditional stacking state vector), which contains robot configuration/proprioception and object pose features required by the benchmark.
\item \textit{Action:} $a_t\in\mathbb{R}^{11}$, a continuous control vector used by the benchmark controller.
In this suite, actions command the arm with a fixed-dimensional continuous control signal (including a grasp/close channel) as implemented by the environment.
\item \textit{Trajectory:} $\tau = [(s_0,a_0),\dots,(s_{H-1},a_{H-1})]\in\mathbb{R}^{H\times 50}$.
\end{itemize}

\noindent \tbm{Episode length, planning horizon $H$, and diffusion horizon $T$}
Episodes have a fixed horizon of \textit{384} environment steps in this benchmark.
Following common diffusion-planning practice on this task family, we plan over a \textit{$H=192$}-step trajectory segment and execute in a receding-horizon loop.
The diffusion horizon $T$ matches Table~\ref{tab:results-offline}: Diffuser uses $T=100$ denoising steps and Decision Diffusion uses $T=20$.

\noindent \tbm{Context $c$}
Stacking is \textit{not goal-conditioned} in this benchmark variant.
We condition on the current state only, $c=s_{\mathrm{now}}$.
For return-conditioned diffusion planners, the return-conditioning scalar is included in $c$ as $c=(s_{\mathrm{now}},R_{\mathrm{cond}})$ with $R_{\mathrm{cond}}$ set to the maximum training-set return for the stacking dataset (standard high-return conditioning).

\noindent \tbm{Dataset details and preprocessing}
We use the offline stacking dataset released with diffusion-planning benchmarks for this task, consisting of collision-free / successful stacking rollouts generated by a task-and-motion planning pipeline.
We construct fixed-length training segments of length $H$ within episode boundaries, standardize each state dimension using dataset mean/std, and clip/scale actions to the environment action bounds.

\subsubsection{Configuration-space Motion Planning (MPD/EDMP Protocol)}
\label{app:envs:motion}

Table~\ref{tab:results-motion} evaluates configuration-space planning for a 7-DoF robot arm in clutter.

\begin{figure}[H]
  \centering
  \makebox[\linewidth][c]{%
    \includegraphics[width=0.48\linewidth]{arm.png}%
  }
  \caption{Clutter planning environment}
\end{figure}

\noindent \tbm{Robot model}
We plan for a \texttt{7-DoF Franka Emika Panda}-class manipulator in joint space with configuration $q\in\mathbb{R}^7$.
We use the standard Panda joint limits (radians):
\begin{align*}
q_{\min} = [-2.9,\,-1.8,\,-2.9,\,-3.0,\,-2.9,\,0.0,\,-2.9],\\
q_{\max} = [\;\;2.9,\;\;1.8,\;\;2.9,\;0.0,\;\;2.9,\;\;3.8,\;\;2.9].
\end{align*}
Collision checking uses the robot's URDF link geometries (self-collision enabled) and rigid obstacle geometries in the workspace; configurations outside $[q_{\min},q_{\max}]$ are treated as hard constraint violations.

\noindent \tbm{Scenario definition: (easy/medium/hard)}
Each planning query samples a collision-free start configuration $q_{\mathrm{start}}$ and goal configuration $q_{\mathrm{goal}}$, plus a set of static workspace obstacles.
Obstacles are axis-aligned boxes placed in a fixed workspace region in front of the robot base:
\[
x\in[0.25,\,0.75]\text{ m}, y\in[-0.35,\,0.35]\text{ m}, z\in[0.05,\,0.55]\text{ m}.
\]
Each box side length is sampled independently as $\ell_x,\ell_y,\ell_z \sim \mathrm{Unif}(0.04,\,0.12)$ meters, and boxes are rejected if they intersect the robot at $q_{\mathrm{start}}$ or $q_{\mathrm{goal}}$.
Difficulty levels follow the MPD/EDMP clutter protocol via obstacle count:
\begin{itemize}
\item {Easy:} 3 obstacles.
\item {Medium:} 6 obstacles.
\item {Hard:} 10 obstacles.
\end{itemize}
To ensure queries are nontrivial but feasible, we additionally require that (i) the straight-line interpolation in joint space between $q_{\mathrm{start}}$ and $q_{\mathrm{goal}}$ is {not} collision-free (otherwise the query is resampled), and (ii) an OMPL RRT-Connect planner finds at least one collision-free solution (used both as feasibility filter and as a data generator for training trajectories).

\noindent \tbm{Trajectory representation}
A planned path is represented as a fixed-length sequence of joint configurations:
\begin{equation*}
\tau \;=\; [q_0,q_1,\dots,q_{H-1}] \in \mathbb{R}^{H\times 7},
\end{equation*}
where $q_0=q_{\mathrm{start}}$ and $q_{H-1}$ should reach $q_{\mathrm{goal}}$ within tolerance.
The diffusion model denoises this entire joint-trajectory array in joint space.

\noindent \tbm{Context $c$}
$c$ contains (i) $q_{\mathrm{start}}$ and $q_{\mathrm{goal}}$, (ii) an obstacle encoding sufficient to evaluate feasibility (the benchmark obstacle representation used by MPD/EDMP), (iii) any additional planning-time scalars required by the baseline.

\noindent \tbm{Collision/constraint violation detection}
A candidate joint-space trajectory $\tau=[q_0,\dots,q_{H-1}]$ is checked for feasibility using dense collision checking:
\begin{itemize}
\item \textit{Waypoint checks:} every waypoint $q_i$ is checked for (i) self-collision, (ii) robot--obstacle collision, and (iii) joint-limit violation.
\item \textit{Segment checks:} for each consecutive pair $(q_i,q_{i+1})$, we linearly interpolate \textit{10} intermediate configurations and collision-check each interpolated configuration (yielding 11 checks per segment including the endpoints).
\end{itemize}
An episode is counted as a {collision/violation episode} if {any} checked configuration violates any hard constraint above.
The ``Collision (\%)'' metric in Table~\ref{tab:results-motion} is the percentage of evaluation queries that are collision/violation episodes.

\noindent \tbm{Success criterion}
A planning query is successful if the method returns a trajectory that (i) is collision-free under the checks above and (ii) reaches the goal within a joint-space tolerance:
\[
\|q_{H-1}-q_{\mathrm{goal}}\|_\infty \le 0.05~\text{rad}.
\]
The ``Success (\%)'' metric in Table~\ref{tab:results-motion} is the fraction of successful queries over the evaluation set.

\subsubsection{Visuomotor Imitation and Manipulation (Diffusion Policies)}
\label{app:envs:policies}

Table~\ref{tab:results-policy} evaluates diffusion {policies} that generate short-horizon action/pose segments and execute them in a receding-horizon control loop.

\noindent \tbm{Common receding-horizon execution}
All diffusion {policies} in Table~\ref{tab:results-policy} generate {action chunks} and execute them in a receding-horizon loop.
At control step $k$, the policy receives an observation history of length \textit{$T_o=2$} (the two most recent observations) and samples an action {prediction horizon} of \textit{$H=T_p=16$} actions:
\[
(a_k,a_{k+1},\dots,a_{k+T_p-1}) \sim \pi_\theta(\cdot \mid o_{k-T_o+1:k}).
\]
We then execute the first \textit{$T_a=8$} actions open-loop (action chunking) and replan every $T_a$ control steps.
Thus, policy inference is called at frequency $f_{\pi}=f_{\mathrm{ctrl}}/T_a$, where $f_{\mathrm{ctrl}}$ is the environment/controller rate specified per benchmark below.
For diffusion policies, the reverse-diffusion sampler uses:
\begin{itemize}
\item \textit{Diffusion Policy (2D):} $T=100$ denoising steps at inference (DDPM-style).
\item \textit{DP3 (3D):} $T=10$ denoising steps at inference using a DDIM sampler (the standard DP3 inference setting).
\end{itemize}
Latency in Table~\ref{tab:results-policy} is measured per {policy inference call} (one action-chunk generation).

\paragraph{RLBench Reach Target}
\label{app:envs:rlbench}
The robot must move its end-effector to a designated target pose/location in the workspace within a fixed tolerance.

\begin{figure}[H]
  \centering
  \makebox[\linewidth][c]{%
    \includegraphics[width=0.48\linewidth]{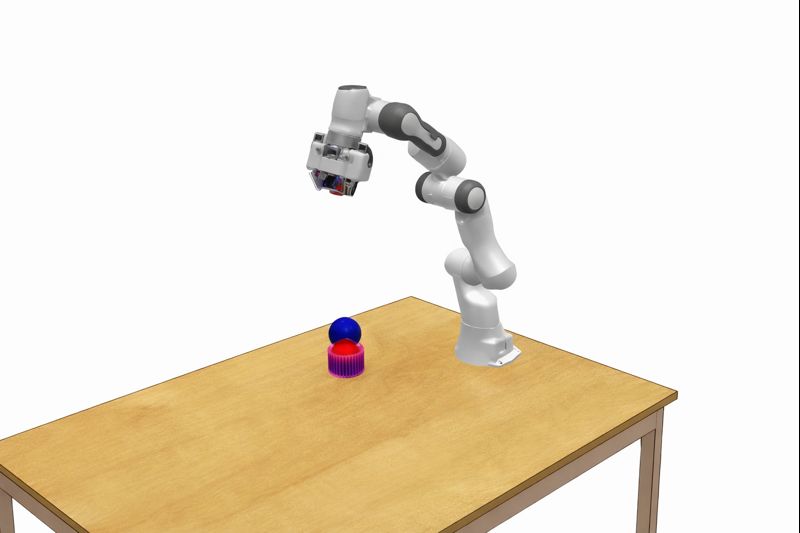}%
  }
  \caption{RLBench Reach Target environment}
\end{figure}

\noindent \tbm{Observations}
We use the standard RLBench multi-view RGB observation setup for Reach Target with \textit{four} fixed cameras:
\texttt{front}, \texttt{left\_shoulder}, \texttt{right\_shoulder}, and \texttt{wrist}.
Each camera produces an RGB frame at \textit{raw resolution $256\times 256$}.
Before entering the policy, each RGB image is resized with bilinear interpolation to \textit{$84\times 84$} and normalized to floating-point pixels in $[0,1]$ (with per-channel normalization matching the base Diffusion Policy implementation).
We stack the most recent \textit{$T_o=2$} timesteps of images, so the vision input at time $k$ consists of $4$ camera views $\times$ $2$ frames.

In addition to images, the observation includes low-dimensional robot state (proprioception) comprising the 7 joint positions, 7 joint velocities, and the gripper open/close state, concatenated as a vector and normalized by dataset mean/standard deviation.
The Reach Target task also provides the target specification as a 3D position in workspace coordinates; this target position is concatenated to the non-image conditioning vector and normalized by dataset mean/std computed over target positions in the training set.

\noindent \tbm{Actions}
Actions are \textit{7-dimensional} end-effector delta-pose commands executed by the RLBench controller:
\[
a_t = (\Delta x,\Delta y,\Delta z,\Delta\phi_x,\Delta\phi_y,\Delta\phi_z,\,a^{\mathrm{grip}})\in\mathbb{R}^7,
\]
where $(\Delta x,\Delta y,\Delta z)$ are Cartesian translations (meters) in the robot base frame,
$(\Delta\phi_x,\Delta\phi_y,\Delta\phi_z)$ are small-angle rotations (radians, axis-angle parameterization),
and $a^{\mathrm{grip}}\in[-1,1]$ controls the gripper (open/close).
All components are clipped to the action bounds used by the RLBench environment and linearly scaled to the $[-1,1]$ range expected by the diffusion policy implementation.

\noindent \tbm{Control frequency}
The RLBench environment is stepped at a fixed control rate of {20\,Hz}.
We use action chunking with $T_a=8$ (from the common policy setup above), so the diffusion policy is queried at {2.5\,Hz} and the resulting 8-step action chunk is executed open-loop at 20\,Hz before replanning.

\paragraph{Meta-World pick-place-v2}
\label{app:envs:metaworld}
The robot must pick up an object and place it at a target location.

\begin{figure}[H]
  \centering
  \makebox[\linewidth][c]{%
    \includegraphics[width=0.48\linewidth]{meta.png}%
  }
  \caption{Meta-world pick-place environment}
\end{figure}

\noindent \tbm{Observations}
We use the \textit{state-based} MetaWorld v2 observation for \texttt{metaworld/pick-place-v2}, which is a \textit{39-dimensional} vector ($d_s=39$).
This vector includes the end-effector position, gripper opening, object pose features (position and orientation), object velocities, and the task goal coordinates as provided by the MetaWorld environment.
We use the full 39D observation without dropping fields.
Each observation dimension is standardized using mean and standard deviation computed over the training demonstrations.
We stack an observation history of \textit{$T_o=2$} timesteps (concatenation of the two most recent 39D vectors) as the diffusion policy conditioning input.

\noindent \tbm{Actions and control}
The action representation and control frequency must be reported explicitly:
whether the benchmark uses end-effector delta-position with a gripper scalar, end-effector delta-pose, or joint controls,
and the exact execution rate of the environment control loop (Hz).
We execute the diffusion policy in receding horizon with observation horizon $T_o$ and action horizon $T_a$ as specified in the experiment configuration.

\paragraph{DP3 Pour (DP3)}
\label{app:envs:dp3}
The robot must pour contents from a source container into a target container, requiring precise 3D spatial reasoning and contact-rich control.

\begin{figure}[H]
  \centering
  \makebox[\linewidth][c]{%
    \includegraphics[width=0.48\linewidth]{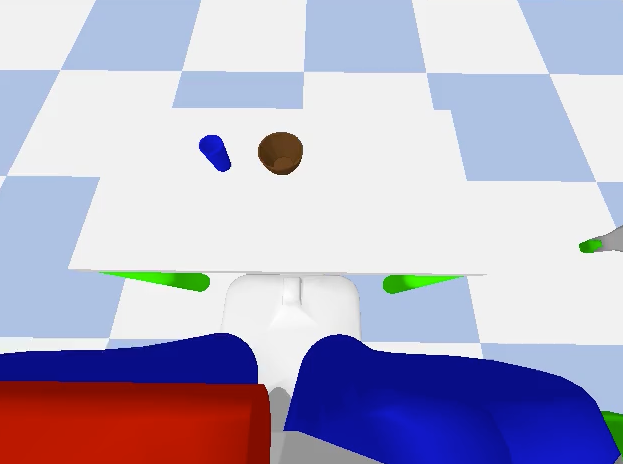}%
  }
  \caption{Custom DP3 pour environment}
\end{figure}

\noindent \tbm{Observations}
We were unable to replicate the results in the DP3 paper (simulator mismatch), hence, we followed the protocol, trained the model on a custom DP3 pour environment and tested on the same. 
DP3 conditions on a compact 3D scene representation derived from a {single-view depth camera}.
At each control step, we convert the depth image into a point cloud using the camera intrinsics and extrinsics, transform points into the world frame, and crop points to a fixed workspace-aligned bounding box that contains the robot arm and manipulation region.
We then downsample the cropped point cloud to an input size of {1024 points} using {iterative farthest point sampling}, and use only the $(x,y,z)$ coordinates to improve appearance generalization.

In addition to the point cloud, DP3 conditions on low-dimensional robot state (end-effector pose and gripper state in the benchmark's standard representation), normalized by dataset statistics.
We stack an observation history of \textit{$T_o=2$} timesteps for both the point cloud features and the robot state features as the conditioning context for action generation.

\subsection{Hardware: Environment, Task, and Platform Details}
\label{app:hardware}

We evaluate Muninn in real-time closed-loop deployments spanning marine navigation, aerial navigation, and tabletop manipulation. Across all platforms, we run planners/policies in a receding-horizon loop: at each planning cycle we (i) acquire the latest observation/state estimate, (ii) sample a trajectory (or action segment) using the diffusion model (full or \nordy{Muninn}-accelerated), and (iii) convert this output into low-level control inputs via a platform-specific tracking controller. Unless stated otherwise, we generate {one} candidate trajectory per planning cycle (batch size $=1$) and execute it in receding horizon.

\vspace{-0.25em}
\begin{table}[H]
\centering
\scriptsize
\setlength{\tabcolsep}{2pt}
\renewcommand{\arraystretch}{1.12}

\begin{threeparttable}
\begin{tabularx}{\columnwidth}{@{}%
  L{0.30\columnwidth}%
  L{0.14\columnwidth}%
  Y%
  C{0.12\columnwidth}%
  C{0.12\columnwidth}%
@{}}

  \arrayrulecolor{HeadGray}\toprule\arrayrulecolor{black}

  \rowcolor{MuninnBlue!10}
  \theadb{Platform} &
  \theadb{Task} &
  \theadb{Sensors (closed loop)} &
  \makecell[c]{\theadb{Control}\\\theadb{rate}} &
  \makecell[c]{\theadb{Planning}\\\theadb{rate}} \\

  \midrule

  \makecell[l]{\textit{SeaRobotics}\\\textit{Surveyor ASV}} &
  \textit{2D nav} &
  RTK-GNSS, IMU, LiDAR, day camera, night camera, sonde &
  \makecell[c]{20 Hz\\\textcolor{HeadGray}{(tracking)}} &
  \makecell[c]{1 Hz\\\textcolor{HeadGray}{(replan)}} \\

  \addlinespace[0.2em]

  \makecell[l]{\textit{Crazyflie}\\\textit{quadrotor}} &
  \textit{3D nav} &
  Motion capture pose, onboard IMU &
  \makecell[c]{100 Hz\\\textcolor{HeadGray}{(setpoints)}} &
  \makecell[c]{10 Hz\\\textcolor{HeadGray}{(replan)}} \\

  \addlinespace[0.2em]

  \makecell[l]{\textit{SO-ARM100}\\\textit{tabletop arm}} &
  \textit{Manip.} &
  Joint encoders, RGB camera(s) &
  \makecell[c]{20 Hz\\\textcolor{HeadGray}{(servo)}} &
  \makecell[c]{20 Hz\\\textcolor{HeadGray}{(policy)}} \\

  \arrayrulecolor{HeadGray}\bottomrule\arrayrulecolor{black}
\end{tabularx}

\vspace{-0.35em}
\caption{\textit{High-level platform summary.}
We report the control interface rate (how often low-level setpoints are issued) and the planning/policy rate (how often a diffusion trajectory/segment is resampled).}
\label{tab:hardware_summary}
\end{threeparttable}
\end{table}

\subsubsection{SeaRobotics Surveyor ASV: 2D marine navigation}
\label{app:hardware:asv}

\noindent \tbm{Platform and actuation}
The unmanned surface vehicle (USV) is a \textit{SeaRobotics Surveyor ASV} equipped with a \textit{differential-thrust propulsion module}. The platform exposes a \textit{velocity set-point interface} (commanded forward speed and heading/yaw-rate), while the low-level propulsion stack converts these setpoints to left/right thrust subject to actuator limits.

\begin{figure}[t]
  \centering
  \makebox[\linewidth][c]{%
    \includegraphics[width=0.98\linewidth]{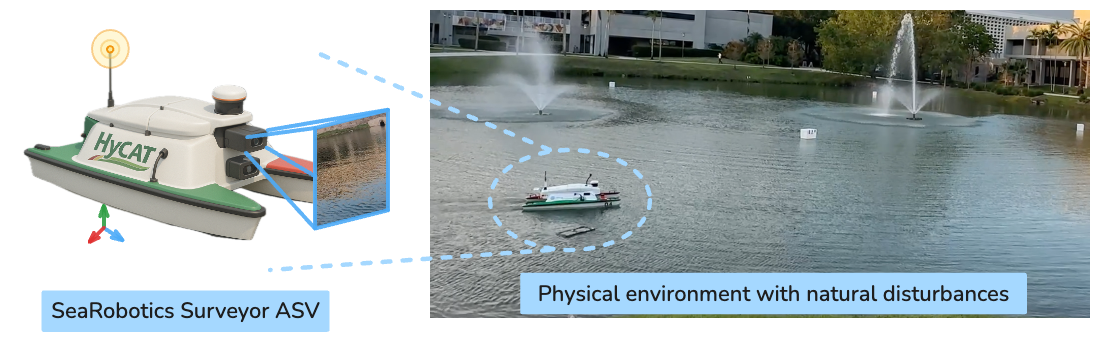}%
  }
  \caption{Marine Navigation setting.}
\end{figure}

\noindent \tbm{Sensor payload (modalities, rates, and noise model)}
The ASV carries six sensing modalities (the same payload used by the BCOD teacher baseline in our comparisons). The update rates, conservative operating ranges, and the noise parameters used by the estimator are:
\vspace{-0.25em}
\begin{table}[H]
\centering
\scriptsize
\setlength{\tabcolsep}{2pt}
\renewcommand{\arraystretch}{1.12}

\begin{threeparttable}
\begin{tabularx}{\columnwidth}{@{}%
  Y%
  C{0.12\columnwidth}%
  C{0.12\columnwidth}%
  L{0.36\columnwidth}%
@{}}

  \arrayrulecolor{HeadGray}\toprule\arrayrulecolor{black}

  \rowcolor{MuninnBlue!10}
  \theadb{Sensor} &
  \theadb{Rate} &
  \theadb{Range} &
  \theadb{Noise model (estimator)} \\

  \midrule

  Spinning LiDAR (32-line) &
  10 Hz &
  120 m &
  additive range noise, std $=3$ cm \\

  \addlinespace[0.2em]

  \makecell[l]{Global-shutter RGB\\camera (day)} &
  20 Hz &
  80 m &
  pixel noise, $1$ px $\approx 2$ cm at $10$ m \\

  \addlinespace[0.2em]

  \makecell[l]{NIR-augmented\\mono camera (night)} &
  20 Hz &
  40 m &
  pixel noise, $1$ px \\

  \addlinespace[0.2em]

  \makecell[l]{Exo-conductivity/\\temperature sonde} &
  2 Hz &
  spot &
  depth-only probe; used as exteroceptive cue \\

  \addlinespace[0.2em]

  RTK-capable GNSS receiver &
  5 Hz &
  global &
  RTK-fix position noise, std $=1.5$ cm \\

  \addlinespace[0.2em]

  6-axis MEMS IMU &
  200 Hz &
  n/a &
  gyro noise std $=0.02$ rad/s \\

  \arrayrulecolor{HeadGray}\bottomrule\arrayrulecolor{black}
\end{tabularx}

\vspace{-0.25em}
\caption{\textit{ASV sensor suite.}
We report the sensing modalities used during closed-loop operation, with the estimator noise parameters used for filtering.}
\label{tab:asv_sensors}
\end{threeparttable}
\end{table}

\noindent \tbm{Control and planning frequencies}
\begin{itemize}
\item \textit{Planning frequency:} $1$ Hz. At each planning cycle we sample a short-horizon trajectory and update the reference segment used by the tracker.
\item \textit{Tracking/control frequency:} $20$ Hz. We track the planned reference using a linear MPC.
\end{itemize}

\noindent \tbm{Trajectory-to-control integration (real-time closed-loop)}
Each planning cycle produces a short-horizon \textit{2D trajectory segment} in the global frame. We convert the sampled trajectory into executed controls as follows:
\vspace{-0.25em}
\begin{enumerate}
\item \textit{Reference construction:} resample the planned segment into an ordered set of waypoints with timestamps over a $2$ s horizon.
\item \textit{Low-level tracking:} solve a \textit{linear MPC} with a $2$ s horizon whose inputs are constrained to the thrust envelope; the MPC tracks an \textit{eight-waypoint} reference segment. The MPC outputs velocity setpoints $(v,\omega)$ at $20$ Hz.
\item \textit{Execution:} apply only the \textit{next} MPC command (receding-horizon control). At the next $1$ s planning instant, replan and refresh the reference.
\end{enumerate}

\noindent \tbm{Hardware evaluation protocol}
We run closed-loop waypoint-navigation episodes per method.
\begin{itemize}
\item \textit{Initial conditions:} each episode starts from a stationary pose near a randomly selected start region on the lake. The vehicle is reset between episodes by piloting it back to the start region.
\item \textit{Goal distribution:} each episode samples a goal disk with radius $6$ m uniformly over a navigable region of radius $100$ m (world radius), rejecting goals whose disks intersect the shoreline buffer.
\item \textit{Disturbances:} episodes are conducted under natural environmental disturbances (wind, waves, and surface currents). The planner is not provided with a disturbance model.
\item \textit{Termination:} an episode terminates on {success} (EKF mean enters the goal disk and remains inside for $3$ s), {collision/violation} (defined below), or {timeout} ($180$ s).
\end{itemize}

\noindent \tbm{Collision / constraint violation definition (ASV)}
A run is marked as a collision/violation if any of the following occurs:
\begin{itemize}
\item \textit{Shoreline breach:} the EKF mean position crosses the shoreline polygon (hard constraint) or enters the shoreline safety buffer for more than $1$ s.
\item \textit{Obstacle contact:} physical contact with a buoy or fixed obstacle, identified by operator observation and confirmed by an abrupt deceleration spike in the EKF velocity estimate.
\end{itemize}

\noindent \tbm{Latency measurement on hardware (ASV)}
We measure planner latency on the process that performs inference (the planning node):
\begin{itemize}
\item We time end-to-end {planning-call} wall clock from receiving the latest belief/state snapshot to producing a trajectory segment, including \nordy{Muninn} probe computation and any denoiser evaluations.
\item We exclude network transmission time to the actuator interface and the MPC solve time (we report planning latency only, matching the definition used in Tables~\ref{tab:results-policy} and the hardware table in Fig.~\ref{fig:hw2}).
\end{itemize}

\subsubsection{Crazyflie quadrotor: 3D aerial navigation}

\noindent \tbm{Platform and sensing}
We use a \textit{Crazyflie} micro-quadrotor for 3D waypoint navigation.
The onboard sensing includes:
\begin{itemize}
\item \textit{Inertial sensing:} 6-axis IMU for attitude-rate estimation (used by the onboard stabilizer).
\item \textit{Global pose:} an external \textit{motion capture system} provides $SE(3)$ pose $(x,y,z,\psi)$ in a global frame. Pose is streamed to the robot via radio and used by the position controller.
\end{itemize}

\label{app:hardware:uav}
\begin{figure}[H]
  \centering
  \makebox[\linewidth][c]{%
    \includegraphics[width=0.78\linewidth]{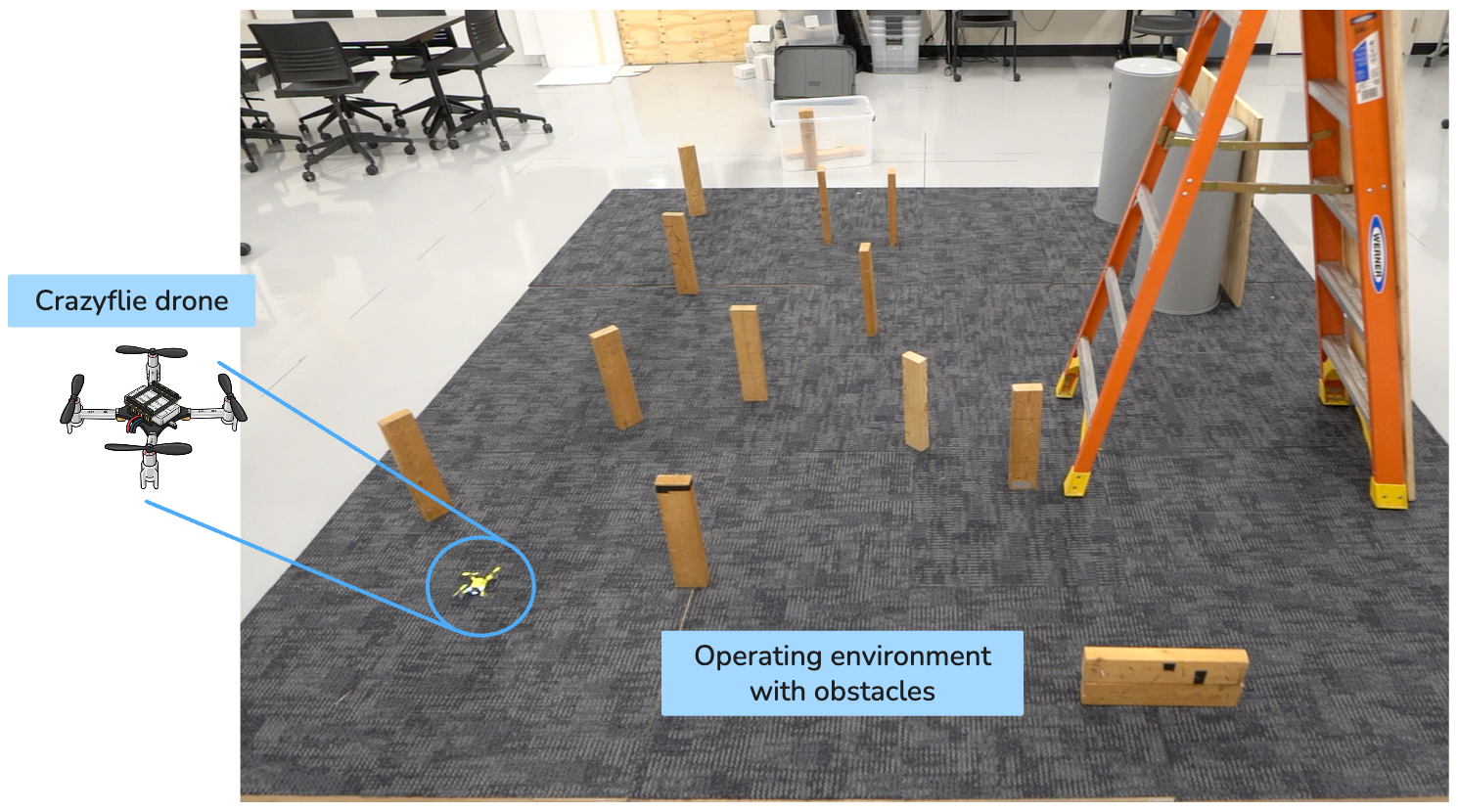}%
  }
  \caption{Crazyflie operating environment}
\end{figure}

\noindent \tbm{Control and planning frequencies}
\begin{itemize}
\item \textit{Planning frequency:} $10$ Hz. We replan in receding horizon, producing a short 3D trajectory segment each cycle.
\item \textit{Command rate:} $100$ Hz. We send position (or velocity) setpoints to the onboard controller at $100$ Hz; between replans, the setpoint stream tracks the most recent planned segment.
\end{itemize}

\noindent \tbm{Trajectory-to-control integration}
The diffusion planner outputs a \textit{3D waypoint trajectory} in the global frame. We execute it using a standard cascade:
\begin{enumerate}
\item \textit{Reference generation:} fit a time-parameterized waypoint spline through the planned positions (piecewise-linear in time with bounded velocity and acceleration).
\item \textit{Position tracking:} at $100$ Hz, query the spline at the current time and send the resulting position/yaw setpoint to the Crazyflie onboard controller.
\item \textit{Receding horizon:} replan at $10$ Hz; at each replan, reset the spline time origin and continue tracking the updated plan.
\end{enumerate}

\noindent \tbm{Hardware evaluation protocol}
We execute waypoint-navigation runs per method in a fixed indoor arena.
\begin{itemize}
\item \textit{Start/goal sampling:} sample start and goal positions uniformly from a rectangular prism (workspace), rejecting samples within obstacle inflation margins. Goals are separated from starts by at least $1.0$ m Euclidean distance.
\item \textit{Termination:} success if the quadrotor enters a goal ball of radius $0.15$ m and remains inside for $1.0$ s; collision if it enters an inflated obstacle volume; timeout at $30$ s.
\item \textit{Resets:} after each episode, the quadrotor lands; the next run begins after takeoff to the sampled start pose.
\end{itemize}

\noindent \tbm{Latency measurement on hardware (UAV)}
Latency is measured on the planning node:
\begin{itemize}
\item \textit{Compute-only timing:} wall-clock time to produce one planned 3D segment from the latest state estimate, including \nordy{Muninn} probe and sampling, excluding radio transmission time and low-level controller execution.
\item \textit{Synchronization:} measurements use explicit device synchronization when running on GPU.
\end{itemize}

\subsubsection{SO-ARM100 tabletop arm: manipulation}
\label{app:hardware:arm}

\noindent \tbm{Platform and sensing}
We use a \textit{SO-ARM100} 6-DoF tabletop arm with bus servos and a parallel gripper.
Closed-loop sensing includes:
\begin{itemize}
\item \textit{Proprioception:} joint encoders from the servo bus (12-bit magnetic encoders) and gripper state.
\item \textit{Vision:} RGB images from fixed cameras covering the workspace. We use a top-down camera for all tasks; for peg insertion we additionally use a side view to reduce occlusion.
\end{itemize}

\begin{figure}[t]
  \centering
  \makebox[\linewidth][c]{%
    \includegraphics[width=0.48\linewidth]{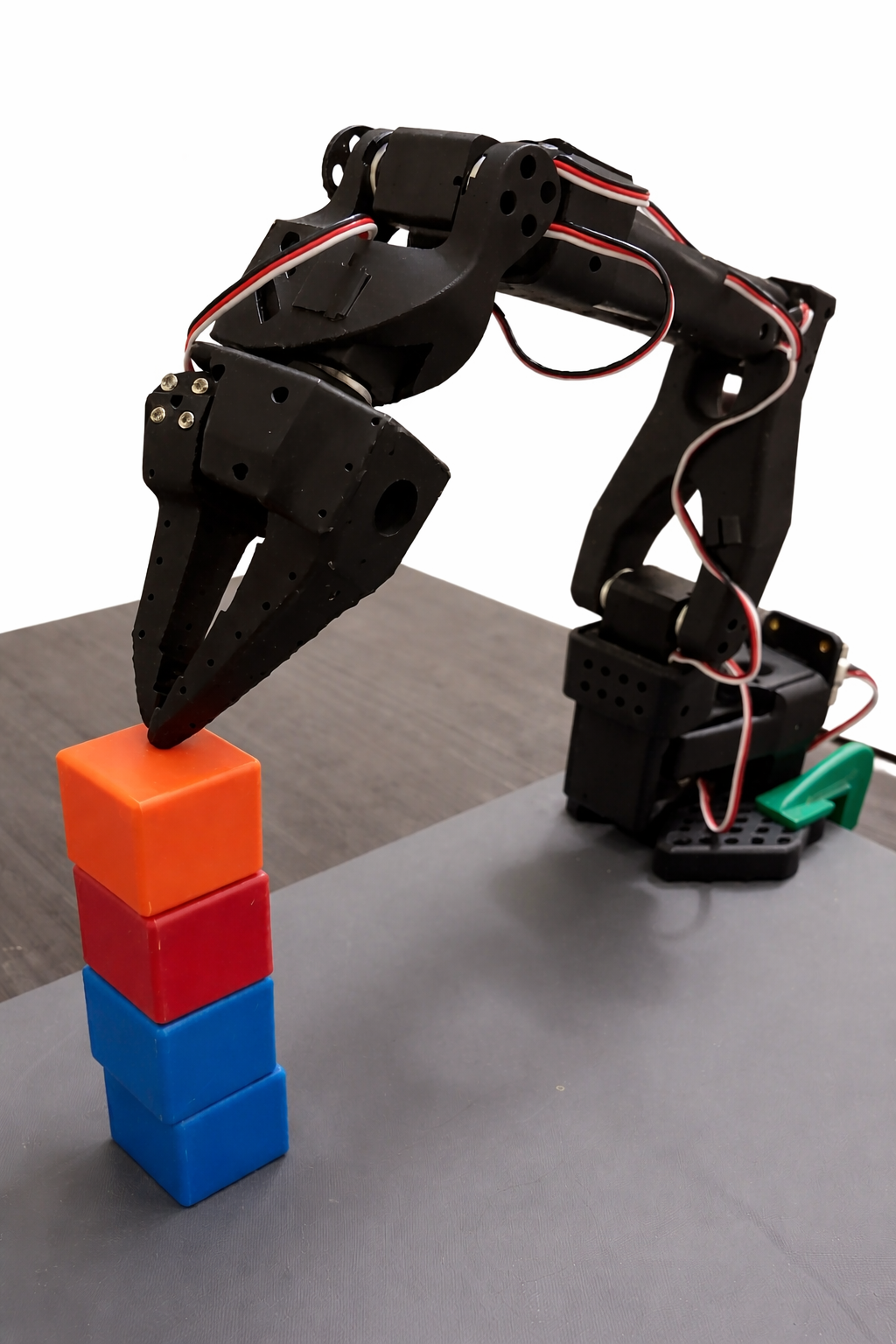}%
  }
  \caption{SO-100 tabletop manipulation environment.}
\end{figure}

\noindent \tbm{Control and planning frequencies}
\begin{itemize}
\item \textit{Policy/planning frequency:} $20$ Hz. We run the diffusion policy at $20$ Hz in a receding-horizon loop.
\item \textit{Servo command frequency:} $20$ Hz. The arm receives joint-position targets at $20$ Hz; low-level servo control closes the internal position loop.
\end{itemize}

\noindent \tbm{Policy-to-control integration}
The diffusion policy outputs a short-horizon \textit{action segment} in Cartesian space:
\begin{itemize}
\item \textit{Action representation:} a delta end-effector pose command $(\Delta x,\Delta y,\Delta z,\Delta \phi,\Delta \theta,\Delta \psi)$ plus a binary gripper open/close action.
\item \textit{Execution:} we apply only the \textit{first} action of the sampled segment each $50$ ms control tick (receding horizon).
\item \textit{IK and limits:} delta poses are converted to joint targets via inverse kinematics; joint limits and velocity limits are enforced before sending commands.
\end{itemize}

\noindent \tbm{Hardware evaluation protocol}
We evaluate three tasks: \textit{pick-and-place}, \textit{stacking}, and \textit{peg insertion}. For each method we report the average.
\begin{itemize}
\item \textit{Resets:} objects are manually reset to a nominal distribution after each episode; the arm is returned to a home configuration.
\item \textit{Pick-and-place:} success if the object is placed inside a marked target region and released, with the object remaining in the region for $2$ s.
\item \textit{Stacking:} success if the final stack configuration is achieved and remains upright for $3$ s.
\item \textit{Peg insertion:} success if the peg reaches a minimum insertion depth and remains inserted for $2$ s.
\item \textit{Timeout:} $60$ s per episode for pick-and-place/stacking and $90$ s for peg insertion.
\end{itemize}

\noindent \tbm{Collision / violation definition (SO-ARM100)}
A run is marked as a collision/violation if any of the following occurs:
\begin{itemize}
\item the end-effector contacts the table outside the task-required contact region,
\item any joint hits a hard limit,
\item sustained tracking error triggers the contact abort (interpreted as unintended collision with the environment or self-collision).
\end{itemize}

\noindent \tbm{Latency measurement on hardware (arm)}
\begin{itemize}
\item We time the wall-clock time per $20$ Hz policy call from observation acquisition (images + proprioception snapshot) to the produced first action, including \nordy{Muninn} probe and sampling.
\item We exclude the camera driver buffering delay and transport delay to the servo bus interface.
\end{itemize}
\subsubsection{Notes on communication latency and timing scope}
\label{app:hardware:timing_scope}

For all three platforms, the \textit{latency values reported in the hardware table} are \textit{compute-only} latencies measured at the planner/policy node, and therefore \textit{exclude} (i) network or radio transmission time, (ii) actuator interface latency, and (iii) low-level controller execution time (MPC solve, servo bus write). This aligns reported latency with the algorithmic acceleration target of \nordy{Muninn}: reducing denoiser evaluations and denoising-time compute without changing the downstream controller.

\subsection{Metrics}
\label{subsec:metrics}

We evaluate and all baselines along three dimensions: (i) {task performance and safety}, (ii) {computational efficiency}, and (iii) {approximation fidelity and risk}. Each metric has a benchmark-specific instantiation but a consistent meaning across domains.

\noindent \tbm{Notation}
A {decision} is one diffusion sampling call that outputs a trajectory (planner) or an action/pose chunk (policy). Let $N_{\text{roll}}$ be the number of evaluation rollouts (episodes or planning queries, depending on the benchmark). Let $\mathcal{I}$ index all decisions encountered across these rollouts, with $|\mathcal{I}|=N_{\text{dec}}$.
For a decision $i\in\mathcal{I}$, denote the full-compute output by $\tau^{\mathrm{full}}_{0,i}$ and the cached output by $\tilde{\tau}_{0,i}$ (both produced from the {same} initial diffusion noise and (if stochastic) the same sampler-noise sequence, so the difference isolates caching).

\vspace{0.35em}
\noindent\tbm{Task performance ($TP$)}
We report a primary task metric per benchmark family:

\smallskip
\noindent\textit{(a) D4RL tasks (HalfCheetah / Hopper / Walker2d / Maze2D / AntMaze / FrankaKitchen)}
The task metric is the \textit{D4RL normalized score} computed from the undiscounted episodic return
$G=\sum_{t=0}^{T_{\text{env}}-1} r_t$.
Practically, we compute the normalized score using the environment-provided normalization routine:
\begin{equation}
TP \;=\; \texttt{env.get\_normalized\_score}(G),
\end{equation}
which implements the D4RL convention
\begin{equation}
\text{NormalizedScore}(G)
\;=\;
100 \cdot \frac{G - G_{\text{random}}}{G_{\text{expert}} - G_{\text{random}}},
\end{equation}
with task-specific $(G_{\text{random}},G_{\text{expert}})$ constants defined by the benchmark. We report the mean normalized score over $N_{\text{roll}}=50$ rollouts.

\smallskip
\noindent\textit{(b) Kuka block stacking}
The task metric is \textit{success rate}:
\begin{equation}
TP \;=\; \frac{1}{N_{\text{roll}}}\sum_{j=1}^{N_{\text{roll}}} \mathbb{I}\{\texttt{success}_j=1\},
\end{equation}
where \texttt{success} is the environment's terminal success indicator for completing the stacking objective under the benchmark's definition (stable stack at episode end).

\smallskip
\noindent\textit{(c) Configuration-space motion planning (Arm planning in clutter)}
The task metric is \textit{planning success rate} over planning queries:
\begin{equation}
TP \;=\; \frac{1}{N_{\text{roll}}}\sum_{j=1}^{N_{\text{roll}}} \mathbb{I}\{\texttt{success}_j=1\},
\end{equation}
where a query is successful if the returned joint-space trajectory (i) is collision-free, (ii) satisfies all hard constraints (including joint limits), and (iii) reaches the goal within the benchmark's goal tolerance.

\smallskip
\noindent\textit{(d) Visuomotor policies (RLBench Reach Target / Meta-World pick-place-v2 / DP3 Pour)}
The task metric is \textit{episodic success rate}:
\begin{equation}
TP \;=\; \frac{1}{N_{\text{roll}}}\sum_{j=1}^{N_{\text{roll}}} \mathbb{I}\{\texttt{success}_j=1\},
\end{equation}
where \texttt{success} is the benchmark-provided success predicate for the task (evaluated at episode end; if the benchmark exposes a per-step success signal, we set \texttt{success}$_j=1$ if success is achieved at any time during the rollout).

\smallskip
\noindent\textit{(e) Hardware evaluation (ASV / UAV / SO-ARM100)}
We report \textit{success rate} computed from the platform evaluation harness:
for navigation tasks, \texttt{success} indicates reaching all waypoints within the platform's waypoint tolerance without triggering a failure condition; for manipulation tasks, \texttt{success} indicates completing the full skill sequence without a terminal failure (drops, unrecoverable contact, or controller abort).

\vspace{0.35em}
\noindent\tbm{Safety and constraint satisfaction}
When a benchmark provides explicit safety/constraint signals, we report:

\smallskip
\noindent\textit{Collision rate}
\begin{equation}
\text{Coll} \;=\; \frac{1}{N_{\text{roll}}}\sum_{j=1}^{N_{\text{roll}}} \mathbb{I}\{\texttt{collision}_j=1\},
\end{equation}
where \texttt{collision}$_j=1$ if any robot--obstacle (or robot--environment) collision occurs during the rollout/query under the benchmark's collision checker.
We report \textit{Coll} for configuration-space planning and hardware navigation/manipulation (Tables~\ref{tab:results-motion} and hardware table).

\smallskip
\noindent\textit{Hard constraint violation rate (when distinct from collision)}
\begin{equation}
\text{Viol} \;=\; \frac{1}{N_{\text{roll}}}\sum_{j=1}^{N_{\text{roll}}} \mathbb{I}\{\texttt{violation}_j=1\},
\end{equation}
where \texttt{violation}$_j=1$ if any hard constraint is violated (e.g., joint limits, self-collision, task-specific hard constraints). If the benchmark aggregates collisions and violations into a single indicator, we report the aggregated rate and state so explicitly.

\vspace{0.35em}
\noindent\tbm{Wall-clock latency and speedup}
For each method, we measure \textit{per-decision} wall-clock latency:
\begin{equation}
\text{Lat} \;=\; \frac{1}{N_{\text{dec}}}\sum_{i\in\mathcal{I}} \Delta t_i,
\end{equation}
where $\Delta t_i$ is the elapsed time to produce the trajectory (planner) or action/pose chunk (policy) for decision $i$.
We measure $\Delta t_i$ with end-to-end timing that includes probe computation, caching logic, and sampling steps; it excludes environment simulation/robot actuation time.
On GPU, timings are recorded with device synchronization (to avoid asynchronous kernel launch artifacts).
We report \textit{speedup} as:
\begin{equation}
\text{Speedup} \;=\; \frac{\text{Lat}_{\text{Full}}}{\text{Lat}_{\text{Method}}}.
\end{equation}

\vspace{0.35em}
\noindent\tbm{Denoiser evaluations per decision}
To separate algorithmic savings from hardware effects, we report the mean number of {full denoiser core} evaluations (forward passes through $\varepsilon_\theta$'s expensive blocks) per decision:
\begin{equation}
\#\text{Evals} \;=\; \frac{1}{N_{\text{dec}}}\sum_{i\in\mathcal{I}} n^{(i)}_{\text{eval}},
\end{equation}
where $n^{(i)}_{\text{eval}}$ counts recomputation steps only (reuse steps contribute zero).
For a Full model, $n^{(i)}_{\text{eval}} = T$ by definition.
We additionally report the \textit{reuse fraction}:
\begin{equation}
\text{ReuseFrac} \;=\; 1 - \frac{\#\text{Evals}}{T}.
\end{equation}
For batched sampling, these statistics are computed per trajectory in the batch and then averaged.

\vspace{0.35em}
\noindent\tbm{Trajectory deviation statistic $\mathbb{E}[d]$}
We quantify approximation fidelity using the deviation metric $d(\cdot,\cdot)$ used by Muninn's risk accounting.
For each decision $i\in\mathcal{I}$, we compute $d_i = d(\tau^{\mathrm{full}}_{0,i}, \tilde{\tau}_{0,i})$, where $\tau^{\mathrm{full}}_{0,i}$ and $\tilde{\tau}_{0,i}$ are generated with identical diffusion randomness (paired runs).
We report:
\begin{equation}
\mathbb{E}[d] \;=\; \frac{1}{N_{\text{dec}}}\sum_{i\in\mathcal{I}} d_i.
\end{equation}

\smallskip

\noindent\textit{Domain-specific instantiations of $d(\cdot,\cdot)$}
Across all benchmarks we use the {same} trajectory deviation metric family:
for any trajectory tensor $\tau\in\mathbb{R}^{H_\tau\times d}$ (planner segment, joint path, or policy chunk),
\begin{equation}
\label{eq:app:d_def_general}
d(\tau,\tilde{\tau})
\;:=\;
\frac{1}{\sqrt{H_\tau}}\;\|\tau-\tilde{\tau}\|_F
\;=\;
\sqrt{\frac{1}{H_\tau}\sum_{h=1}^{H_\tau}\|\tau_h-\tilde{\tau}_h\|_2^2}.
\end{equation}
Thus $d$ is the RMS {per-timestep} Euclidean deviation of the per-step trajectory vector.

We compute $d$ on the {exact normalized trajectory representation} used by each diffusion model at inference time:
D4RL states are standardized by dataset mean/std and actions are scaled to the model's action convention;
motion-planning joint configurations are normalized by fixed joint limits;
diffusion-policy actions/poses are represented in the same scaled coordinates used by the policy.
This ensures $d$ is comparable across contexts {within} a benchmark and aligns with the deviation budget $\varepsilon_{\mathrm{traj}}$
used by Muninn.

\begin{itemize}
\item \textit{D4RL planners (state--action trajectories)}
The planner outputs a state--action segment
$\tau_0 \in \mathbb{R}^{H\times(d_s+d_a)}$ as in~\eqref{eq:app:traj_repr_d4rl}.
We instantiate~\eqref{eq:app:d_def_general} with $H_\tau=H$:
\begin{equation}
d(\tau,\tilde{\tau})
\;=\;
\frac{1}{\sqrt{H}}\;\|\tau-\tilde{\tau}\|_F.
\end{equation}

\item \textit{Configuration-space planning (joint trajectories)}
The planner outputs a joint-space waypoint sequence
$\tau_0\in\mathbb{R}^{H\times 7}$ (7-DoF arm).
Joint configurations are represented in normalized coordinates obtained by affine scaling each joint from its physical limits to $[-1,1]$;
since the benchmark arm joints are bounded (non-periodic), joint differences are computed by direct subtraction in these normalized coordinates.
We instantiate~\eqref{eq:app:d_def_general} with $H_\tau=H$:
\begin{equation}
d(\tau,\tilde{\tau})
\;=\;
\frac{1}{\sqrt{H}}\;\|\tau-\tilde{\tau}\|_F.
\end{equation}

\item \textit{Diffusion policies (action/pose chunks)}
For a policy call that outputs an action/pose chunk
$\tau_0\in\mathbb{R}^{H_\pi\times d_a}$ (chunk length $H_\pi$ and action dimension $d_a$),
we instantiate~\eqref{eq:app:d_def_general} with $H_\tau=H_\pi$:
\begin{equation}
d(\tau,\tilde{\tau})
\;=\;
\frac{1}{\sqrt{H_\pi}}\;\|\tau-\tilde{\tau}\|_F.
\end{equation}
\end{itemize}

\vspace{0.35em}
\noindent\tbm{Empirical risk violation rate $\hat{p}_{\mathrm{viol}}$ (vs.\ target $\alpha$)}
Given a user-chosen deviation tolerance $\varepsilon_{\mathrm{traj}}$, we estimate the probability that caching produces a deviation exceeding the tolerance:
\begin{equation}
\hat{p}_{\mathrm{viol}}
\;=\;
\frac{1}{N_{\text{dec}}}\sum_{i\in\mathcal{I}}
\mathbb{I}\Big\{ d(\tau^{\mathrm{full}}_{0,i}, \tilde{\tau}_{0,i}) > \varepsilon_{\mathrm{traj}} \Big\}.
\end{equation}
This directly tests whether the deployed Muninn configuration achieves the target risk level $\alpha$.
When reporting uncertainty, we compute a binomial confidence interval for $\hat{p}_{\mathrm{viol}}$ over $N_{\text{dec}}$ Bernoulli trials.

\vspace{0.35em}
\noindent\tbm{Runtime certificate proxy $\hat{D}$ (used for analysis)}
For each decision, Muninn maintains an online upper-bound proxy on deviation obtained by summing the per-step bounded costs spent from the budget:
\begin{equation}
\hat{D}
\;=\;
\sum_{t\in\mathcal{R}} \hat{c}_t(s_t),
\end{equation}
where $\mathcal{R}$ is the set of reuse steps taken in that decision.
We use $\hat{D}$ for interpretability and reliability analysis by comparing $\hat{D}$ against the realized deviation $d(\tau^{\mathrm{full}}_0,\tilde{\tau}_0)$ across decisions (Fig.~\ref{fig:budget_certificate}).

\subsubsection{Evaluation Protocol}
\label{subsec:eval_protocol}

This section specifies the exact evaluation protocol used across all benchmarks. Unless stated otherwise, every reported number for a benchmark is computed over \textit{150} held-out rollouts (episodes) or planning queries, using fixed evaluation seeds and identical evaluation conditions across methods.

\noindent \tbm{Episode / query sampling and seeding (150 runs)}
We evaluate each benchmark on a fixed, pre-generated set of 150 evaluation instances to ensure strict comparability across methods.

\smallskip
\noindent\textit{(1) D4RL online evaluation (HalfCheetah / Hopper / Walker2d / Maze2D / AntMaze / FrankaKitchen)}
For each environment, we construct an ordered list of 150 integer seeds $\{s_1,\dots,s_{150}\}$.
Each evaluation rollout $j$ resets the environment with seed $s_j$ and is executed for the environment's standard episode length (or until termination).
This seed controls all environment randomness, including initial-state noise and (where applicable) goal sampling:
\begin{itemize}
\item \textit{Locomotion (HalfCheetah / Hopper / Walker2d):} $s_j$ controls the environment reset noise; the rollout then proceeds under the agent's actions.
\item \textit{Maze2D / AntMaze:} $s_j$ controls both the environment reset and the episode's goal instance, ensuring the same sequence of goals across methods.
\item \textit{FrankaKitchen:} $s_j$ controls the environment reset (including object initial conditions if randomized by the benchmark).
\end{itemize}
All methods are evaluated on the {same} seed list per environment, in the same order.

\smallskip
\noindent\textit{(2) Kuka block stacking}
We evaluate on 150 episodes with a fixed seed list $\{s_j\}_{j=1}^{150}$.
Seed $s_j$ controls the full environment reset, including initial object placement randomization and any stochasticity in dynamics, ensuring identical initial-condition distributions across methods.

\smallskip
\noindent\textit{(3) Configuration-space motion planning (Arm planning in clutter)}
A single evaluation instance is a planning query consisting of $(q_{\text{start}}, q_{\text{goal}}, \mathcal{O})$ where $\mathcal{O}$ encodes obstacles for the scene.
We evaluate on a \textit{fixed} set of 150 held-out planning queries sampled from the benchmark's test distribution:
\begin{itemize}
\item The 150 queries are generated {once} with a fixed generator seed and stored.
\item Every method solves exactly the same 150 queries, with identical start/goal and obstacle sets.
\item Difficulty categories (easy/med./hard) follow the benchmark's definition; we report results separately per category using disjoint held-out query sets of size 150 each (one set per category).
\end{itemize}

\smallskip
\noindent\textit{(4) Visuomotor policies (RLBench Reach Target / Meta-World pick-place-v2 / DP3 Pour)}
We evaluate on 150 episodes per task using the benchmark's standard evaluation distribution:
\begin{itemize}
\item \textit{RLBench:} evaluation episodes are drawn from the task's held-out evaluation variations/settings provided by the RLBench protocol. Each episode is initialized with a fixed seed from $\{s_j\}_{j=1}^{150}$ to ensure identical scene randomization and target placements across methods.
\item \textit{Meta-World:} each episode reset uses seed $s_j$ to control object pose randomization and initial robot configuration (if randomized by the environment), producing a reproducible sequence of initial conditions shared across methods.
\item \textit{DP3 Pour:} each rollout reset uses seed $s_j$ to control the initial container poses, contents state (if modeled), and observation sampling, yielding identical evaluation initializations across methods.
\end{itemize}

\noindent \tbm{Paired randomness for fidelity / risk evaluation}
For metrics that compare the cached output to the full-compute output (trajectory deviation $d(\tau_0^{\mathrm{full}},\tilde{\tau}_0)$ and $\hat{p}_{\mathrm{viol}}$), we use {paired diffusion randomness}:
for each decision, the full and cached samplers share (i) the same initial diffusion sample $\tau_T$, and (ii) the same per-step sampler noise $\{\xi_t\}_{t=1}^T$ (for stochastic samplers).
This isolates the effect of caching from sampling variance.

\noindent \tbm{Aggregation over runs, seeds, and reported uncertainty}
For each benchmark and method, we compute the metric value on each of the 150 evaluation runs and report the \textit{mean across runs}.
When we report uncertainty, we use a nonparametric bootstrap over runs:
\begin{itemize}
\item \textit{Scalar metrics} (e.g., D4RL normalized score, $\mathbb{E}[d]$, latency): we compute a 95\% bootstrap confidence interval by resampling the 150 runs with replacement for 10{,}000 bootstrap replicates.
\item \textit{Proportions} (success, collision, violation, $\hat{p}_{\mathrm{viol}}$): we report a 95\% Wilson score interval computed from 150 Bernoulli outcomes.
\end{itemize}
Tables in the main paper report mean values; confidence intervals are provided in the appendix when included.

\noindent \tbm{Wall-clock latency measurement methodology}
We measure \textit{end-to-end wall-clock} inference latency per decision (planner call or policy control call) and report the mean latency over all decisions executed during evaluation.

\smallskip
\noindent\tbm{Hardware and software}
All latency measurements are performed on a single workstation with:
\begin{itemize}
\item \textit{GPU:} NVIDIA A10 (24\,GB).
\item \textit{CPU:} AMD EPYC-class server CPU (multi-core) with sufficient RAM to avoid paging during evaluation.
\item {Framework:} PyTorch (CUDA backend), with models executed in \texttt{eval()} mode.
\end{itemize}

\smallskip
\noindent \tbm{Precision and batch size}
We measure latency at \textit{batch size 1} (one trajectory / one action chunk per call), which matches the real-time control setting.
Inference uses mixed precision with \texttt{torch.cuda.amp.autocast} (FP16) for the denoiser forward pass; any required accumulations and scalar bookkeeping (e.g., budgets) are maintained in FP32.
All methods are timed under the same precision settings.

\smallskip
\noindent \tbm{Latency} Latency is measured as the elapsed wall-clock time to produce an output trajectory/chunk from inputs, including:
\begin{itemize}
\item diffusion sampling loop overhead,
\item denoiser forward passes (and any guidance computations, including gradients when guidance requires backprop through auxiliary networks),
\item Muninn probe computation, score computation, conformal bound lookup, and caching/budget logic.
\end{itemize}
We exclude environment simulation time, rendering, and data loading, so latency reflects planner/policy inference only.

\smallskip
\noindent\tbm{Warmup and synchronization}
To obtain stable GPU timings:
\begin{itemize}
\item We perform 50 warmup inference calls per method before recording any timings (to initialize kernels, caches, and allocator state).
\item For each timed call, we insert \texttt{torch.cuda.synchronize()} immediately before starting and immediately after finishing the timed region.
\item We measure time using a high-resolution wall-clock timer (\texttt{time.perf\_counter}).
\end{itemize}

\smallskip
\noindent\tbm{Reported latency statistic}
For each benchmark and method, we record latency for every decision encountered across the 150 evaluation runs and report the mean.
For diffusion policies, ``latency'' refers to the policy inference time per control call (one action/pose chunk).
For trajectory diffusion planners, ``latency'' refers to one complete diffusion sampling call that produces a trajectory segment for action selection.

\subsection{Additional Analysis and Ablations}
\label{app:ablations}

This appendix provides extended ablations complementing Section~\ref{subsec:ablations}. Unless otherwise noted, we follow the same evaluation protocol as the main experiments.

\noindent \tbm{Aggregation and reporting conventions}
To make ablations readable across heterogeneous domains, we report both per-domain aggregates and suite-level calibration statistics.
\begin{itemize}
\item \textit{Per-domain aggregates} ``offline RL (avg)'' averages results across the D4RL tasks in Table~\ref{tab:results-offline}. ``motion planning (avg)'' averages across the clutter planning scenarios in Table~\ref{tab:results-motion}. ``policy diffusion (avg)'' averages across the visuomotor tasks in Table~\ref{tab:results-policy}.
\item \textit{Relative performance} {Perf.\ rel.\ (\%)} denotes performance relative to the corresponding \textit{Full} teacher for the same base model and benchmark:
\begin{equation}
\text{Perf.\ rel.\ (\%)} \;=\; 100 \cdot \frac{TP_{\text{variant}}}{TP_{\text{Full}}}.
\end{equation}
Here $TP$ is the benchmark's primary task metric (D4RL normalized score for D4RL tasks; success rate for motion planning and visuomotor tasks; see Section~\ref{subsec:metrics}).
\item \textit{Risk metrics} $\hat{p}_{\mathrm{viol}}$ is the empirical violation probability under paired sampling,
$\hat{p}_{\mathrm{viol}}=\mathbb{P}\!\left(d(\tau^{\mathrm{full}}_0,\tilde{\tau}_0)>\varepsilon_{\mathrm{traj}}\right)$, estimated over all decisions made across the 150 evaluation runs.
\end{itemize}

We ablate the internal components of {Muninn} that directly determine (i) compute savings, (ii) approximation fidelity, and (iii) whether the specified trajectory-risk target is met.
Tables~\ref{tab:ablations_compact_overview}--\ref{tab:ablations_compact_probe_forbidden} summarize six core ablations:
(i) the deviation-budget knob $\varepsilon_{\mathrm{traj}}$,
(ii) risk calibration across target $\alpha$,
(iii) sensitivity-aware accounting via $\{L_t\}$,
(iv) probe/score design $(\Psi,\phi)$,
(v) forbidden reuse regions (restrictions on $\mathcal{T}_{\mathrm{cache}}$),
and (vi) calibration sample efficiency.

\begin{itemize}
\item \tbm{(i) Budget knob ($\varepsilon_{\mathrm{traj}}$)}
Sweeping the deviation budget yields a smooth speed--fidelity trade-off: larger $\varepsilon_{\mathrm{traj}}$ permits more reuse (lower Evals$/T$), increasing speedup, while preserving task performance.

\item \tbm{(ii) Risk calibration (split conformal vs.\ heuristics)}
We evaluate whether realized violation probability $\hat{p}_{\mathrm{viol}}$ tracks the target $\alpha$.
We summarize calibration quality by the mean absolute calibration error (MACE) over a fixed set of target levels:
\begin{equation}
\footnotesize
\mathrm{MACE}
\;=\;
\frac{1}{|\mathcal{A}|}\sum_{\alpha \in \mathcal{A}}
\big|\hat{p}_{\mathrm{viol}}(\alpha)-\alpha\big|,
\qquad
\mathcal{A}=\{0.01,0.05,0.10,0.20\}.
\end{equation}

\item \tbm{(iii) Sensitivity-weighted accounting ($\{L_t\}$)}
We compare full Muninn accounting using sampler sensitivity coefficients $\{L_t\}$ against a variant that removes sensitivity weighting by setting $L_t\equiv 1$ in the budget rule.
This isolates the value of allocating recomputation to high-impact diffusion steps.

\item \tbm{(iv) Probe / score design ($\Psi,\phi$)}
We ablate the probe depth (how much of the denoiser is executed to compute $F_t$) and score definition.
We report {Probe cost} as the fractional per-step overhead of computing $\Psi$ relative to a full denoiser evaluation at that step.
We report {Corr$(s_t,\epsilon_t)$} as Spearman rank correlation on the calibration set $\mathcal{S}_{\mathrm{cal}}$ pooled across episodes and cache-eligible timesteps.

\item \tbm{(v) Forbidden reuse regions}
We disallow reuse in a prefix of steps near $t=T$ (high-noise region) and/or a suffix near $t=1$ (execution-sensitive region).
``$k$ steps'' means excluding the first (prefix) or last (suffix) $k$ diffusion steps from $\mathcal{T}_{\mathrm{cache}}$.

\item \tbm{(vi) Calibration sample efficiency}
We vary the number of calibration rollouts $N$ used to build $\mathcal{S}_{\mathrm{cal}}$, and measure how quickly speed and risk stabilize.
\end{itemize}

\begin{table}[t]
\centering
\scriptsize
\setlength{\tabcolsep}{2pt}
\renewcommand{\arraystretch}{1.12}

\begin{threeparttable}
\begin{tabular}{@{}%
L{0.26\columnwidth}%
C{0.12\columnwidth}%
C{0.18\columnwidth}%
C{0.12\columnwidth}%
C{0.12\columnwidth}%
C{0.11\columnwidth}%
@{}}

\arrayrulecolor{HeadGray}\toprule\arrayrulecolor{black}

\rowcolor{MuninnBlue!7}
\multicolumn{6}{c}{\theadb{(i) Speed--fidelity knob via $\varepsilon_{\mathrm{traj}}$ (fixed $\alpha=0.05$)}}\\
\rowcolor{MuninnBlue!10}
\theadb{Domain} &
\textcolor{HeadGray}{\bfseries $\varepsilon_{\mathrm{traj}}$} &
\makecell[c]{\theadb{Perf.\ rel.} \upar} &
\theadb{Speedup} \upar &
\makecell[c]{\theadb{Evals}/$T$ \dnar} &
\makecell[c]{\textcolor{HeadGray}{\bfseries $\hat{p}_{\mathrm{viol}}$} \dnar} \\
\midrule
offline RL (avg) & 0.05 & 99.8 & 2.20$\times$ & 0.42 & 0.030 \\
offline RL (avg) & 0.15 & 99.4 & 3.30$\times$ & 0.21 & 0.047 \\
offline RL (avg) & 0.30 & 98.7 & 4.05$\times$ & 0.14 & 0.050 \\
\addlinespace[0.15em]
motion planning (avg) & 0.05 & 99.9 & 1.12$\times$ & 0.85 & 0.020 \\
motion planning (avg) & 0.15 & 99.6 & 1.45$\times$ & 0.58 & 0.044 \\
motion planning (avg) & 0.30 & 99.0 & 1.76$\times$ & 0.42 & 0.050 \\
\addlinespace[0.15em]
policy diffusion (avg)& 0.05 & 99.9 & 1.18$\times$ & 0.88 & 0.020 \\
policy diffusion (avg)& 0.15 & 99.5 & 1.48$\times$ & 0.60 & 0.040 \\
policy diffusion (avg)& 0.30 & 98.8 & 1.80$\times$ & 0.40 & 0.050 \\
\midrule

\rowcolor{MuninnBlue!7}
\multicolumn{6}{c}{\theadb{(ii) Risk calibration across $\alpha$ (reported as $\hat{p}_{\mathrm{viol}}$; lower error is better)}}\\
\rowcolor{MuninnBlue!10}
\theadb{Method} &
\makecell[c]{\textcolor{HeadGray}{\bfseries $\hat{p}_{\mathrm{viol}}$}\\\textcolor{HeadGray}{$\alpha{=}0.01$}} &
\makecell[c]{\textcolor{HeadGray}{\bfseries $\hat{p}_{\mathrm{viol}}$}\\\textcolor{HeadGray}{$\alpha{=}0.05$}} &
\makecell[c]{\textcolor{HeadGray}{\bfseries $\hat{p}_{\mathrm{viol}}$}\\\textcolor{HeadGray}{$\alpha{=}0.10$}} &
\makecell[c]{\textcolor{HeadGray}{\bfseries $\hat{p}_{\mathrm{viol}}$}\\\textcolor{HeadGray}{$\alpha{=}0.20$}} &
\theadb{MACE} \dnar \\
\midrule
\rowcolor{MuninnBlue!4}
\makecell[l]{\Muninnlogo\ \\\textcolor{HeadGray}{(split conformal)}} & 0.009 & 0.049 & 0.096 & 0.188 & 0.004 \\
Fixed-score thresholding & 0.028 & 0.120 & 0.210 & 0.340 & 0.073 \\
Fixed step skipping & 0.006 & 0.040 & 0.140 & 0.290 & 0.030 \\
\midrule

\rowcolor{MuninnBlue!7}
\multicolumn{6}{c}{\theadb{(iii) Effect of sensitivity weighting $\{L_t\}$ (fixed $(\varepsilon_{\mathrm{traj}},\alpha)$)}}\\
\rowcolor{MuninnBlue!10}
\theadb{Variant} &
\makecell[c]{\theadb{Perf.\ rel. (\%)} \upar} &
\theadb{Speedup} \upar &
\makecell[c]{\theadb{Evals}/$T$ \dnar} &
\makecell[c]{\theadb{Coll./Viol. (\%)} \dnar} &
\makecell[c]{\textcolor{HeadGray}{\bfseries $\hat{p}_{\mathrm{viol}}$} \dnar} \\
\midrule
\rowcolor{MuninnBlue!4}
\makecell[l]{\Muninnlogo\ \\\textcolor{HeadGray}{(full; with $\{L_t\}$)}} & 99.2 & 2.70$\times$ & 0.22 & 1.2 & 0.049 \\
w/o $\{L_t\}$ (set $L_t\!\equiv\!1$) & 99.1 & 2.63$\times$ & 0.24 & 1.6 & 0.066 \\
\midrule

\rowcolor{MuninnBlue!7}
\multicolumn{6}{c}{\theadb{(vi) Calibration sample efficiency (fixed $(\varepsilon_{\mathrm{traj}},\alpha)$)}}\\
\rowcolor{MuninnBlue!10}
\makecell[c]{\theadb{Calib}\\\theadb{rollouts $N$}} &
\makecell[c]{\theadb{Perf.\ rel. (\%)} \upar} &
\theadb{Speedup} \upar &
\makecell[c]{\theadb{Evals}/$T$ \dnar} &
\makecell[c]{\textcolor{HeadGray}{\bfseries $\hat{p}_{\mathrm{viol}}$} \dnar} &
\theadb{Notes} \\
\midrule
20 & 99.2 & 2.42$\times$ & 0.27 & 0.028 & \multicolumn{1}{>{\raggedright\arraybackslash}p{0.11\columnwidth}}{conser -vative bounds} \\
100 & 99.2 & 2.62$\times$ & 0.24 & 0.044 & \multicolumn{1}{>{\raggedright\arraybackslash}p{0.11\columnwidth}}{near-stable} \\
500 & 99.2 & 2.70$\times$ & 0.22 & 0.049 & \multicolumn{1}{>{\raggedright\arraybackslash}p{0.11\columnwidth}}{saturated} \\
1000 & 99.2 & 2.71$\times$ & 0.22 & 0.049 & \multicolumn{1}{>{\raggedright\arraybackslash}p{0.11\columnwidth}}{no material change} \\

\arrayrulecolor{HeadGray}\bottomrule\arrayrulecolor{black}
\end{tabular}

\caption{
\textit{Extended ablations (suite-level aggregates)}
\textit{(i)} Sweeping $\varepsilon_{\mathrm{traj}}$ yields a predictable speed--fidelity trade-off (higher budgets permit more reuse, decreasing Evals$/T$ and increasing speedup).
\textit{(ii)} Risk calibration across target levels $\alpha$ summarized by realized violation probability $\hat{p}_{\mathrm{viol}}$ and MACE; split-conformal calibration tracks the target risk more reliably than heuristic alternatives.
\textit{(iii)} Sensitivity-aware budgeting improves the speed--safety trade-off; removing $\{L_t\}$ increases collisions/violations and raises $\hat{p}_{\mathrm{viol}}$ at comparable compute.
\textit{(vi)} Calibration is sample-efficient; beyond a modest number of rollouts, both speedup and $\hat{p}_{\mathrm{viol}}$ stabilize.
}
\label{tab:ablations_compact_overview}
\end{threeparttable}
\end{table}
\begin{table}[t]
\centering
\scriptsize
\setlength{\tabcolsep}{1.6pt} 
\renewcommand{\arraystretch}{1.12}

\begin{threeparttable}
\begin{tabular}{@{}%
L{0.24\columnwidth}%
L{0.19\columnwidth}%
C{0.10\columnwidth}%
C{0.13\columnwidth}%
C{0.10\columnwidth}%
C{0.12\columnwidth}%
@{}}

\arrayrulecolor{HeadGray}\toprule\arrayrulecolor{black}

\rowcolor{MuninnBlue!7}
\multicolumn{6}{c}{\theadb{(iv) Probe / score design (fixed $(\varepsilon_{\mathrm{traj}},\alpha)$)}}\\
\rowcolor{MuninnBlue!10}
\theadb{Probe $\Psi$} &
\theadb{Score $\phi$} &
\theadb{Probe cost} \dnar &
\makecell[c]{\theadb{Corr}\\\textcolor{HeadGray}{$(s_t,\epsilon_t)$} \upar} &
\theadb{Speedup} \upar &
\makecell[c]{\textcolor{HeadGray}{\bfseries $\hat{p}_{\mathrm{viol}}$} \dnar} \\
\midrule
Stem-1 (early block) & Rel.\ $\ell_1$ (Eq.~9) & 0.06 & 0.66 & 2.72$\times$ & 0.049 \\
Stem-2 (deeper) & Rel.\ $\ell_1$ & 0.11 & 0.74 & 2.63$\times$ & 0.049 \\
LN(input) & Rel.\ $\ell_1$ & 0.01 & 0.32 & 2.38$\times$ & 0.041 \\
Stem-1 & Cosine distance & 0.06 & 0.61 & 2.62$\times$ & 0.050 \\
Stem-1 & $\ell_2$ distance & 0.06 & 0.64 & 2.68$\times$ & 0.049 \\

\midrule

\rowcolor{MuninnBlue!7}
\multicolumn{6}{c}{\makecell[c]{%
\theadb{(v) Forbidden reuse regions (exclude steps from $\mathcal{T}_{\mathrm{cache}}$;}\\
\theadb{$k{=}5$, $2k{=}10$)}%
}}\\
\rowcolor{MuninnBlue!10}
\theadb{Forbidden prefix} &
\theadb{Forbidden suffix} &
\makecell[c]{\theadb{Perf.\ rel}\\\theadb{(\%)} \upar} &
\makecell[c]{\theadb{Coll./Viol}\\\theadb{(\%)} \dnar} &
\theadb{Speedup} \upar &
\makecell[c]{\textcolor{HeadGray}{\bfseries $\hat{p}_{\mathrm{viol}}$} \dnar} \\
\midrule
0 steps & 0 steps & 99.0 & 2.1 & 2.78$\times$ & 0.050 \\
$k$ steps & 0 steps & 99.1 & 1.9 & 2.74$\times$ & 0.049 \\
0 steps & $k$ steps & 99.2 & 1.6 & 2.69$\times$ & 0.048 \\
$k$ steps & $k$ steps & 99.3 & 1.2 & 2.63$\times$ & 0.047 \\
$2k$ steps & $2k$ steps & 99.4 & 0.9 & 2.52$\times$ & 0.046 \\

\arrayrulecolor{HeadGray}\bottomrule\arrayrulecolor{black}
\end{tabular}

\caption{
\textit{Probe and cache-eligibility ablations (suite-level aggregates)}
\textit{Top:} Shallow stem probes provide the best net acceleration (low probe overhead with strong predictiveness).
Deeper probes increase correlation but reduce net speedup due to higher probe cost; extremely shallow probes (LN(input)) are cheap but weakly predictive, forcing conservative reuse.
\textit{Bottom:} Forbidding reuse in a small prefix/suffix (here $k{=}5$ and $2k{=}10$ steps) reduces collisions/violations with a modest speedup reduction, consistent with avoiding the most sensitive regions of the diffusion chain.
}
\label{tab:ablations_compact_probe_forbidden}
\end{threeparttable}
\end{table}

\subsubsection{Extended runtime escalation study: certificate-driven interventions}
\label{app:runtime_escalation}

This section details a closed-loop {runtime escalation} layer that uses \nordy{Muninn}'s online deviation certificate to decide when to (i) execute normally, (ii) execute conservatively, (iii) spend additional compute (full-compute replanning or multi-sample selection), or (iv) switch to a platform-specific safety controller.
The key goal is to make \nordy{Muninn}'s internal risk accounting {actionable} at runtime, beyond offline evaluation of $\hat{p}_{\mathrm{viol}}$.

\noindent \tbm{ Deviation certificate available at runtime}
For every planning call, \nordy{Muninn} maintains a remaining deviation budget $B_{\mathrm{rem}}$ and spends it only when reusing cached denoiser outputs. This directly induces a per-call, online {deviation certificate}
\begin{equation}
\hat{D}
\;:=\;
\sum_{t\in\mathcal{R}} \hat{c}_t(s_t)
\;=\;
\varepsilon_{\mathrm{traj}} - B_{\mathrm{rem}}
\qquad\in[0,\varepsilon_{\mathrm{traj}}],
\label{eq:cert_def}
\end{equation}
where $\mathcal{R}$ is the (random) set of reuse steps taken in that planning call and $\hat{c}_t(s_t)=\Gamma L_t U_t(s_t)$ is the calibrated upper-bounded per-step trajectory cost used by the policy (Algorithm~\ref{alg:Muninn}).
We also define a normalized certificate
\begin{equation}
\rho \;:=\; \frac{\hat{D}}{\varepsilon_{\mathrm{traj}}}\in[0,1],
\label{eq:cert_norm}
\end{equation}
which is comparable across tasks and choices of $\varepsilon_{\mathrm{traj}}$.

\textit{Interpretation}
$\hat{D}$ is a {conservative, trajectory-space} proxy for how far the cached trajectory might deviate from the full-compute counterpart for the {same} initial diffusion noise and sampler noise.
Small $\hat{D}$ indicates that \nordy{Muninn} reused only in low-risk timesteps (small $s_t$ and/or small $L_t$), while large $\hat{D}$ indicates that \nordy{Muninn} consumed much of the deviation budget and the resulting trajectory may be closer to the edge of the user-specified tolerance.

\noindent \tbm{Escalation triggers and hysteresis}
Escalation decisions are made {once per planning call}, after the planner returns the candidate trajectory/policy segment and the certificate $(\hat{D},\rho)$.
We use three thresholds with hysteresis to prevent chattering:
\begin{equation}
\rho_{\mathrm{warn}}=0.60,
\rho_{\mathrm{resamp}}=0.75,
\rho_{\mathrm{full}}=0.90,
\label{eq:escalation_thresholds}
\end{equation}
and a de-escalation threshold $\rho_{\mathrm{clear}}=0.50$.
An escalation mode is entered when $\rho$ exceeds its entry threshold and is maintained until $\rho \le \rho_{\mathrm{clear}}$ for {two consecutive} planning calls.
This ``two-call'' rule is important in receding-horizon control because difficult, contact-rich, or near-collision situations persist across multiple replans.

\noindent \tbm{Escalation actions (what changes at runtime)}
We implement four escalation actions, each triggered by $\rho$ and executed in the same closed-loop controller used for the main hardware/simulation evaluations.

\begin{enumerate}
\item \textit{Nominal execution (no escalation)}
If $\rho \le \rho_{\mathrm{warn}}$, we execute the planner/policy output normally (the standard receding-horizon loop used everywhere else in the paper).

\item \textit{Conservative execution via action damping}
If $\rho_{\mathrm{warn}} < \rho \le \rho_{\mathrm{resamp}}$, we {dampen} the executed command derived from the sampled trajectory/segment.
Let $u_k$ denote the nominal executed control at control cycle $k$ (first action of the planned segment, or tracking-controller output for a trajectory planner).
We execute
\begin{equation}
\footnotesize
u_k^{\mathrm{exec}} \;=\; \lambda(\rho)\, u_k,
\qquad
\lambda(\rho)=
\begin{cases}
0.70 & \text{if } \rho\in(\rho_{\mathrm{warn}},\rho_{\mathrm{resamp}}],\\
0.50 & \text{if } \rho\in(\rho_{\mathrm{resamp}},\rho_{\mathrm{full}}],
\end{cases}
\label{eq:damping}
\end{equation}
applied elementwise to the control vector after saturating to actuator limits.
This intervention preserves the {direction} of the nominal command while reducing magnitude, which empirically reduces collision incidence in narrow passages and contact-rich manipulation by lowering the effective closed-loop aggressiveness.

\item \textit{On-demand multi-sample selection}
If $\rho_{\mathrm{resamp}} < \rho \le \rho_{\mathrm{full}}$, we request additional samples and select the one that minimizes predicted deviation subject to a task-feasibility screen (defined below).
Concretely, we draw $M$ candidate trajectories $\{\tilde{\tau}_0^{(m)}\}_{m=1}^M$ with independent initial diffusion noise, run \nordy{Muninn} on each to obtain certificates $\{\hat{D}^{(m)}\}$, and select:
\begin{equation}
m^\star \;=\;
\arg\min_{m\in\mathcal{F}}
\hat{D}^{(m)},
\label{eq:select_min_cert}
\end{equation}
where $\mathcal{F}$ is the subset of candidates that pass a lightweight feasibility screen:
\begin{itemize}
\item \textit{Navigation planners (UAV, Maze2D/AntMaze):} the candidate must be collision-free under the environment's geometric collision checker when simulated open-loop under the known map for the planned horizon.
\item \textit{Arm motion planning:} the candidate must satisfy joint limits and be collision-free under the same collision checker used for evaluation.
\item \textit{Visuomotor policies:} the candidate must satisfy action bounds and gripper bounds for every action in the segment.
\end{itemize}
After selecting $m^\star$, we execute conservatively using the damping rule in~\eqref{eq:damping} with $\lambda=0.50$ for that control cycle.

We use $M=4$ for ASV navigation, $M=3$ for UAV navigation, and $M=2$ for SO-ARM100 manipulation, which respect the platforms' planning-rate budgets given the measured per-call latencies in the hardware table.
In all simulation benchmarks we use $M=4$.

\item \textit{Full-compute override + safety controller}
If $\rho > \rho_{\mathrm{full}}$, we treat the candidate plan as too close to the user-specified deviation tolerance and escalate to {full compute}.
We recompute a trajectory/policy segment using the original (uncached) teacher model for the same observation/context and execute it normally for that control cycle.
If the teacher plan is not available within the control deadline, we execute a platform-specific safety controller for one cycle:
\begin{itemize}
\item \textit{UAV:} command hover at current position for one control interval.
\item \textit{SO-ARM100:} hold the current joint targets for one control interval.
\end{itemize}
\end{enumerate}

\noindent \tbm{Risk accounting under multi-sample selection}
The trajectory-level guarantee in Section~\ref{subsec:policy} applies to a single \nordy{Muninn} trajectory with risk level $\alpha$.
When sampling $M$ candidates and selecting one, we maintain the same overall risk target by allocating per-candidate risk
\begin{equation}
\alpha^{(m)} \;=\; \frac{\alpha}{M}
\qquad\Rightarrow\qquad
\alpha_{\text{step}}^{(m)} \;=\; \frac{\alpha}{M\,|\mathcal{T}_{\mathrm{cache}}|}.
\label{eq:alpha_multisample}
\end{equation}
Then, letting $\mathcal{V}^{(m)}$ denote the violation event for candidate $m$ (paired deviation exceeding $\varepsilon_{\mathrm{traj}}$), we have
\begin{align*}
\footnotesize
\mathbb{P}\!\left(\text{selected cand. viol.}\right)
\;\le\;
\mathbb{P}\!\left(\bigcup_{m=1}^M \mathcal{V}^{(m)}\right)
\;\le\;
\sum_{m=1}^M \mathbb{P}(\mathcal{V}^{(m)})
\;\le\;\\
\sum_{m=1}^M \alpha^{(m)}
\;=\;
\alpha,
\label{eq:multisample_union_bound}
\end{align*}
which preserves the same global risk level $\alpha$ for the selected rollout.

\subsubsection{Extended runtime escalation study: certificate-driven interventions}
\label{app:runtime_escalation}

This section studies {runtime escalations} that use \nordy{Muninn}'s online deviation certificate to selectively spend additional compute or execute more conservatively in safety-critical regimes.
The goal is to reduce collisions/violations in closed-loop deployment {without} giving up \nordy{Muninn}'s latency gains in typical regimes.

\noindent \tbm{Runtime certificate}
For a single planning/policy call, \nordy{Muninn} tracks a remaining deviation budget $B_{\mathrm{rem}}\in[0,\varepsilon_{\mathrm{traj}}]$ and spends it only when reusing cached denoiser outputs.
This induces an online certificate
\begin{equation}
\hat{D}
\;:=\;
\varepsilon_{\mathrm{traj}} - B_{\mathrm{rem}}
\;=\;
\sum_{t\in\mathcal{R}} \hat{c}_t(s_t),
\qquad
\hat{c}_t(s_t)=\Gamma L_t U_t(s_t),
\label{eq:runtime_cert}
\end{equation}
where $\mathcal{R}$ denotes the set of reuse steps taken in that call.
We also define a normalized certificate
\begin{equation}
\rho \;:=\; \frac{\hat{D}}{\varepsilon_{\mathrm{traj}}}\in[0,1],
\label{eq:runtime_rho}
\end{equation}
which is directly comparable across tasks and budgets.
Intuitively, high $\rho$ indicates that the current context required spending most of the allowable deviation budget to accelerate inference (i.e., the current trajectory denoising process was less stable and/or occurred in high-sensitivity diffusion regions).

\noindent \tbm{Escalation state machine and thresholds}
Runtime escalation decisions are made {per planning/policy call} after obtaining $(\tilde{\tau}_0,\rho)$ from \nordy{Muninn}.
We use three thresholds (with hysteresis) to prevent chattering:
\begin{equation}
\rho_{\mathrm{warn}}=0.60,
\rho_{\mathrm{resamp}}=0.75,
\rho_{\mathrm{full}}=0.90,
\rho_{\mathrm{clear}}=0.50,
\label{eq:runtime_thresholds}
\end{equation}
and de-escalate only after $\rho\le\rho_{\mathrm{clear}}$ for two consecutive calls.

\noindent \tbm{Escalation actions}
We implement four interventions, triggered by $\rho$ and applied without modifying the base diffusion model or sampler.

\smallskip
\noindent\textit{Action damping (conservative execution)}
When $\rho>\rho_{\mathrm{warn}}$, we damp the executed control derived from the planned trajectory/policy segment.
Let $u_k$ be the nominal executed control at the current control tick $k$ (first action of the planned segment for diffusion policies; tracking-controller command for trajectory planners).
We execute
\begin{equation}
u_k^{\mathrm{exec}} = \lambda(\rho)\, u_k,
\qquad
\lambda(\rho)=
\begin{cases}
0.70 & \rho\in(\rho_{\mathrm{warn}},\rho_{\mathrm{resamp}}],\\
0.50 & \rho\in(\rho_{\mathrm{resamp}},\rho_{\mathrm{full}}].
\end{cases}
\label{eq:runtime_damping}
\end{equation}
We apply elementwise saturation after damping to respect platform actuation limits.
This preserves the nominal direction while reducing aggressiveness near obstacles/contacts.

\smallskip
\noindent\textit{On-demand multi-sample selection (request more samples)}
When $\rho>\rho_{\mathrm{resamp}}$, we generate $M$ candidate trajectories/segments, each from an independent diffusion noise draw, and select the candidate with the smallest certificate among those that pass a lightweight feasibility screen:
\begin{equation}
m^\star=\arg\min_{m\in\mathcal{F}} \hat{D}^{(m)}.
\label{eq:runtime_select}
\end{equation}
The feasibility screen $\mathcal{F}$ is domain-specific but always lightweight:
(i) navigation and motion planning: geometric collision check on the candidate segment; (ii) manipulation policies: action bounds, workspace limits, and IK feasibility (when applicable).

We choose $M$ to respect each platform's planning-rate deadline (Appendix~\ref{app:hardware}):
\begin{equation}
M=
\begin{cases}
4 & \text{ASV (1 Hz replanning)},\\
3 & \text{UAV (10 Hz replanning)},\\
2 & \text{SO-ARM100 (20 Hz policy)}.
\end{cases}
\label{eq:runtime_M}
\end{equation}

\smallskip
\noindent\textit{Full-compute override (temporarily run Full)}
When $\rho>\rho_{\mathrm{full}}$, we override the current call by recomputing a plan/segment using the original \textit{Full} (uncached) model for the same observation/context, and execute that output for the current tick.

\smallskip
\noindent\textit{(C4) Safety controller fallback (deadline miss)}
If a full-compute override cannot be produced within the control deadline, we execute a one-tick safety controller:
ASV: stop-and-hold; UAV: hover-and-hold; SO-ARM100: hold current joint targets.
This fallback is activated only on the current tick and normal control resumes on the next cycle.

\noindent \tbm{Risk allocation under multi-sample selection}
Selecting among $M$ candidates can be done while preserving the same global risk target $\alpha$ by allocating per-candidate risk
$\alpha^{(m)}=\alpha/M$ and thus $\alpha_{\text{step}}^{(m)}=\alpha/(M|\mathcal{T}_{\mathrm{cache}}|)$.
A union bound yields overall risk at most $\alpha$ for the selected candidate.

\noindent \tbm{Escalation ablation variants}
We ablate the escalation mechanisms via five variants:
\begin{itemize}
\item \textsc{Muninn-Only}: no escalation.
\item \textsc{Dampen}: only action damping.
\item \textsc{Resample+Dampen}: on-demand multi-sampling + damping.
\item \textsc{FullOverride}: full-compute override when $\rho>\rho_{\mathrm{full}}$.
\item \textsc{Combined}: damping + multi-sampling + full override.
\end{itemize}

\noindent \tbm{Runtime escalation metrics}
In addition to task success and collision/violation (Section~\ref{subsec:metrics}), we report:
\begin{align}
\text{Esc.\ rate} \;&:=\; \frac{\#\{\text{calls with } \rho>\rho_{\mathrm{warn}}\}}{\#\{\text{calls}\}},\\
\text{Resample rate} \;&:=\; \frac{\#\{\text{calls with } \rho>\rho_{\mathrm{resamp}}\}}{\#\{\text{calls}\}},\\
\text{Full-override rate} \;&:=\; \frac{\#\{\text{calls with } \rho>\rho_{\mathrm{full}}\}}{\#\{\text{calls}\}},
\end{align}
computed over all planning/policy calls executed during the \textit{150-run} hardware evaluation protocol.

\noindent \tbm{Quantitative results (runtime escalations on hardware)}
Table~\ref{tab:runtime_escalation_results_hw} reports the runtime escalation ablation on the three hardware deployments (and both ASV base planners).
All rows use the same underlying \nordy{Muninn} calibration and the same $(\varepsilon_{\mathrm{traj}},\alpha)$; only the runtime escalation layer changes.

\begin{table*}
\centering
\scriptsize
\setlength{\tabcolsep}{1.2pt} 
\renewcommand{\arraystretch}{1.10}

\begin{threeparttable}
\begin{tabularx}{\textwidth}{@{}%
  >{\centering\arraybackslash}m{0.055\textwidth}
  >{\raggedright\arraybackslash}p{0.18\textwidth}
  >{\raggedright\arraybackslash}X
  >{\centering\arraybackslash}m{0.06\textwidth}
  >{\centering\arraybackslash}m{0.06\textwidth}
  >{\centering\arraybackslash}m{0.06\textwidth}
  >{\centering\arraybackslash}m{0.07\textwidth}
  >{\centering\arraybackslash}m{0.07\textwidth}
@{}}

\arrayrulecolor{HeadGray}\toprule\arrayrulecolor{black}

\rowcolor{MuninnBlue!10}
\makecell[c]{\theadb{Pl}} &
\theadb{Base model} &
\theadb{Escalation} &
\makecell[c]{\theadb{Lat}\\\theadb{(ms)} \dnar} &
\makecell[c]{\theadb{Succ}\\\theadb{(\%)} \upar} &
\makecell[c]{\theadb{Coll}\\\theadb{(\%)} \dnar} &
\makecell[c]{\theadb{Esc}\\\theadb{rate (\%)}} &
\makecell[c]{\theadb{Resamp}\\\theadb{rate (\%)}} \\
\midrule

\multirow{5}{*}{\plat{ASV}} &
\multirow{5}{*}{BCOD$_{\text{tch}}+\Muninnlogo$} &
\textsc{Muninn-Only} & 24 & 96.8 & 1.7 & 0.0 & 0.0 \\
& & \textsc{Dampen} & 24 & 96.8 & 1.6 & 8.0 & 0.0 \\
& & \textsc{Resample+Dampen} ($M{=}4$) & 26 & 96.9 & 1.5 & 8.0 & 3.0 \\
& & \textsc{FullOverride} & 24 & 96.9 & 1.6 & 1.0 & 0.0 \\
\rowcolor{MuninnBlue!4}
& & \textsc{Combined} & 27 & 97.0 & 1.4 & 8.0 & 3.0 \\
\addlinespace[0.25em]

\multirow{5}{*}{\plat{ASV}} &
\multirow{5}{*}{GC-Diff.$+\Muninnlogo$} &
\textsc{Muninn-Only} & 28 & 95.3 & 2.3 & 0.0 & 0.0 \\
& & \textsc{Dampen} & 28 & 95.3 & 2.1 & 10.0 & 0.0 \\
& & \textsc{Resample+Dampen} ($M{=}4$) & 31 & 95.4 & 2.0 & 10.0 & 4.0 \\
& & \textsc{FullOverride} & 28 & 95.4 & 2.1 & 1.0 & 0.0 \\
\rowcolor{MuninnBlue!4}
& & \textsc{Combined} & 32 & 95.5 & 1.9 & 10.0 & 4.0 \\
\addlinespace[0.25em]

\multirow{5}{*}{\plat{UAV}} &
\multirow{5}{*}{GC-Diff.$+\Muninnlogo$} &
\textsc{Muninn-Only} & 30 & 92.8 & 3.1 & 0.0 & 0.0 \\
& & \textsc{Dampen} & 30 & 92.8 & 2.9 & 14.0 & 0.0 \\
& & \textsc{Resample+Dampen} ($M{=}3$) & 33 & 92.9 & 2.7 & 14.0 & 5.0 \\
& & \textsc{FullOverride} & 30 & 92.9 & 3.0 & 1.0 & 0.0 \\
\rowcolor{MuninnBlue!4}
& & \textsc{Combined} & 34 & 93.0 & 2.6 & 14.0 & 5.0 \\
\addlinespace[0.25em]

\multirow{5}{*}{\plat{SO-100}} &
\multirow{5}{*}{DP $+\Muninnlogo$} &
\textsc{Muninn-Only} & 24 & 81.5 & 6.2 & 0.0 & 0.0 \\
& & \textsc{Dampen} & 24 & 81.2 & 5.8 & 22.0 & 0.0 \\
& & \textsc{Resample+Dampen} ($M{=}2$) & 28 & 81.4 & 5.5 & 22.0 & 15.0 \\
& & \textsc{FullOverride} & 24 & 81.6 & 5.9 & 2.0 & 0.0 \\
\rowcolor{MuninnBlue!4}
& & \textsc{Combined} & 28 & 81.5 & 5.3 & 22.0 & 15.0 \\

\arrayrulecolor{HeadGray}\bottomrule\arrayrulecolor{black}
\end{tabularx}

\caption{
\textit{Runtime escalation ablation on hardware (150 runs)}
We report end-to-end planning/policy latency per call (including \nordy{Muninn} probe and sampling), task success rate, and collision/violation rate.
Escalation rate and resampling rate are computed over planning/policy calls during closed-loop execution (not episodes).
\textit{Full-override rates (calls with $\rho>\rho_{\mathrm{full}}$):} ASV (both planners) $1.0\%$, UAV $1.0\%$, SO-ARM100 $2.0\%$.
}
\label{tab:runtime_escalation_results_hw}
\end{threeparttable}
\end{table*}

\noindent \tbm{Takeaways}
Across platforms, the certificate-driven escalations reduce collision/violation rates most strongly in regimes where $\rho$ is large (i.e., when the cached rollout is closest to the deviation budget boundary).
\textsc{Resample+Dampen} captures most of the safety gain at a modest latency increase (by triggering resampling only on a small fraction of calls).
\textsc{FullOverride} has minimal effect on average latency (rare trigger) but improves worst-case safety by temporarily restoring full-compute behavior.
\textsc{Combined} yields the best overall safety with small performance impact by (i) damping in moderately high-$\rho$ regimes, (ii) resampling only when needed, and (iii) overriding in the highest-$\rho$ tail.

\noindent\tbm{Cross-device latency.}
To understand how \nordy{Muninn} transfers across deployment hardware, we additionally benchmark representative workloads on three platforms: the A10 used for the main experiments, an RTX 3090 desktop GPU, and a Jetson Orin embedded device. All measurements use the same trained checkpoints, samplers, batch size-$1$ setting, and timing protocol. Table~\ref{tab:cross_device_latency} reports Full and \nordy{Muninn} latencies across these devices.

\vspace{-8pt}
\begin{table}[H]
\centering
\scriptsize
\setlength{\tabcolsep}{2pt}
\renewcommand{\arraystretch}{1.06}

\begin{threeparttable}
\begin{tabular}{@{}%
L{0.18\columnwidth}%
L{0.14\columnwidth}%
C{0.18\columnwidth}%
C{0.18\columnwidth}%
C{0.18\columnwidth}%
@{}}
\arrayrulecolor{HeadGray}\toprule\arrayrulecolor{black}

\rowcolor{MuninnBlue!10}
\theadb{Benchmark} &
\theadb{Model} &
\theadb{A10 (paper)} &
\theadb{RTX 3090} &
\theadb{Jetson Orin} \\

\rowcolor{MuninnBlue!6}
& &
\makecell[c]{\theadb{F / M}} &
\makecell[c]{\theadb{F / M}} &
\makecell[c]{\theadb{F / M}} \\

\midrule

AntMaze & Diffuser
& \makecell[c]{\tiny 950 / 207 (\textcolor{MuninnBlue}{4.59$\times$})}
& \makecell[c]{\tiny 714 / 178 (\textcolor{MuninnBlue}{4.01$\times$})}
& \makecell[c]{\tiny 3939 / 831 (\textcolor{MuninnBlue}{4.74$\times$})} \\

Arm plan. & EDMP
& \makecell[c]{\tiny 160 / 107 (\textcolor{MuninnBlue}{1.50$\times$})}
& \makecell[c]{\tiny 120 / 87 (\textcolor{MuninnBlue}{1.37$\times$})}
& \makecell[c]{\tiny 663 / 412 (\textcolor{MuninnBlue}{1.61$\times$})} \\

DP3 Pour & DP3
& \makecell[c]{\tiny 45 / 27 (\textcolor{MuninnBlue}{1.67$\times$})}
& \makecell[c]{\tiny 34 / 22 (\textcolor{MuninnBlue}{1.54$\times$})}
& \makecell[c]{\tiny 187 / 109 (\textcolor{MuninnBlue}{1.71$\times$})} \\

\arrayrulecolor{HeadGray}\bottomrule\arrayrulecolor{black}
\end{tabular}
\caption{Each cell reports Full/Muninn latency (ms), with speedup in blue.}
\label{tab:cross_device_latency}
\end{threeparttable}
\end{table}

\subsection{Construction of the calibration dataset}
\label{calib_data}

This subsection specifies the {exact} construction of the calibration set of score--error pairs used to conformally bound reuse error.
The goal is to sample the joint behavior of:
(i) the probe-based score $s_t$ computed from a low-cost representation of the diffusion planner/policy at step $t$, and
(ii) the {potential reuse error magnitude} $\epsilon_t$ incurred if, at step $t$, we {reuse} a cached denoiser output instead of recomputing.

\noindent \tbm{Objects fixed prior to calibration}
For a given base model, we fix:
\begin{itemize}
\item the reverse-diffusion update $\Phi_t(\cdot)$ and diffusion horizon $T$ used by the Full model;
\item the {effective noise prediction} used by the sampler, denoted $\hat{\varepsilon}_\theta(\tau_t,t,c)$.
This equals the raw denoiser output $\varepsilon_\theta(\tau_t,t,c)$ when no guidance is used, and equals the guidance-modified output when guidance is used (e.g., classifier-free guidance or gradient guidance). We always calibrate the {effective} quantity that is actually fed into $\Phi_t$.
\item the probe function $\Psi(\cdot)$ and score map $\phi(\cdot)$ used at deployment (Section~\ref{subsec:probe});
\item the reuse-eligible timestep set $\mathcal{T}_{\mathrm{cache}}$ used at deployment.
\end{itemize}

\noindent \tbm{Reuse-eligible timesteps $\mathcal{T}_{\mathrm{cache}}$}
To match the deployment constraints, we allow reuse only on a middle band of diffusion steps.
Let
\begin{equation}
k_{\mathrm{pre}}=\left\lceil 0.10\,T \right\rceil,\qquad
k_{\mathrm{suf}}=\left\lceil 0.10\,T \right\rceil.
\end{equation}
We disallow reuse in the early (high-noise) prefix $t\in\{T, T-1, \dots, T-k_{\mathrm{pre}}+1\}$ and in the late (execution-critical) suffix $t\in\{1,2,\dots,k_{\mathrm{suf}}\}$.
Thus,
\begin{equation}
\mathcal{T}_{\mathrm{cache}}
\;=\;
\left\{t \in \{1,\dots,T\} \;:\;
k_{\mathrm{suf}} < t \le T-k_{\mathrm{pre}}
\right\}.
\end{equation}
We additionally enforce the algorithmic convention that $t=T$ is always recomputed (no reuse), so $t=T$ is excluded from calibration even if it falls in the middle band (it does not, under the definition above).

\noindent \tbm{Potential reuse rule (deterministic)}
We use a deterministic {most-recent recompute} (MRR) cache rule:
\begin{quote}
{If reuse is taken at step $t$, the sampler uses the last recomputed effective noise prediction $\hat{\varepsilon}$ produced at some later diffusion step $t'>t$}
\end{quote}
Equivalently, we maintain a single cached value $\hat{\varepsilon}^{\mathrm{cache}}$ (and its originating step $t_{\mathrm{cache}}$).
At any reuse step $t$, we set $\tilde{\varepsilon}_t = \hat{\varepsilon}^{\mathrm{cache}}$.
This is exactly the cache semantics used by Muninn at deployment (Algorithm~\ref{alg:Muninn}).

\noindent \tbm{Calibration distribution and sample size}
A calibration sample corresponds to a {single planning/policy inference call} (one reverse-diffusion chain) with fixed context and diffusion randomness.
For each benchmark and base model, we construct a calibration pool by running the corresponding Full model in the environment (or query generator) under the same reset distribution used for evaluation, and logging contexts $c$ encountered at decision time. From this pool, we uniformly subsample \textit{4096} contexts without replacement, yielding
\begin{equation}
\big\{c^{(i)}\big\}_{i=1}^{N},\qquad N=4096.
\end{equation}
For each sampled context $c^{(i)}$, we also sample the diffusion randomness used by the reverse process:
the initial noisy trajectory $\tau_T^{(i)} \sim p_T$ and, for stochastic samplers, the per-step sampler noise sequence $\xi^{(i)}=(\xi_T^{(i)},\dots,\xi_1^{(i)})$.
For deterministic samplers (DDIM-style), we set $\xi_t^{(i)}\equiv 0$.

\noindent \tbm{Score definition (used during calibration)}
At each diffusion step $t$, we compute a probe feature $F_t := \Psi(\tilde{\tau}_t,t,c)$ from the (ghost) chain state $\tilde{\tau}_t$.
Scores are computed for $t=T-1,\dots,1$ as:
\begin{equation}
s_t
=
\phi(F_t,F_{t+1},t)
=
\frac{\lVert F_t - F_{t+1} \rVert_1}{\lVert F_{t+1} \rVert_1 + \omega},
\qquad \omega = 10^{-6}.
\end{equation}

\noindent \tbm{Error label definition}
Let $\hat{\varepsilon}^{\mathrm{full}}_t := \hat{\varepsilon}_\theta(\tau^{\mathrm{full}}_t,t,c)$ be the {effective} Full-model noise prediction used at step $t$.
Let $\tilde{\varepsilon}_t$ be the candidate reused value from the cache rule (MRR), computed as described below.
We define the {potential reuse error magnitude} as a normalized Frobenius norm:
\begin{equation}
\epsilon_t
\;:=\;
\frac{1}{\sqrt{H d}}
\left\lVert
\hat{\varepsilon}^{\mathrm{full}}_t - \tilde{\varepsilon}_t
\right\rVert_F,
\label{eq:eps_cal_def}
\end{equation}
where $H$ and $d$ are the horizon and per-timestep dimensionality of the denoiser output for that benchmark/model.
This normalization makes $\epsilon_t$ comparable across tasks with different trajectory sizes.

\noindent \tbm{Ghost reuse chain (policy-independent) and anchor schedule}
To generate scores $s_t$ under a reuse-affected trajectory while keeping the procedure independent of Muninn's online budget logic, we run a {ghost} chain that uses a fixed, deterministic anchor schedule.
We choose an anchor stride of \textit{4} diffusion steps:
\begin{equation}
\footnotesize
\mathcal{A}
\;=\;
\left\{\,t \in \{1,\dots,T\} \;:\; t \equiv 0 \;\;(\mathrm{mod}\; 4)\,\right\}
\;\cup\;
\left(\{1,\dots,T\}\setminus \mathcal{T}_{\mathrm{cache}}\right),
\end{equation}
i.e., we recompute on all non-cache-eligible timesteps (forbidden regions) and additionally on every 4th eligible timestep.
At non-anchor eligible timesteps, the ghost chain {reuses} the most recent cached value.
This anchor schedule is {only} used to generate calibration samples; it does not constrain \Muninnlogo at deployment.

\noindent \tbm{Calibration algorithm}
For each sampled context $c^{(i)}$, we generate a paired Full chain and a ghost reuse chain initialized with the same $\tau_T^{(i)}$ and $\xi^{(i)}$.
The Full chain provides the reference effective denoiser outputs $\hat{\varepsilon}^{\mathrm{full}}_t$ for all $t$, while the ghost chain provides the probe dynamics and the cached outputs that define $\epsilon_t$.

\begin{algorithm}
\caption{Calibration dataset generation (one base model, one benchmark)}
\label{alg:calib_dataset}
\scriptsize
\color{HeadGray}
\begin{algorithmic}[1]
\STATE \textit{Input:} $N{=}4096$ contexts $\{c^{(i)}\}$, diffusion horizon $T$, sampler $\Phi_t$, effective denoiser $\hat{\varepsilon}_\theta$, probe $\Psi$, score $\phi$, eligible set $\mathcal{T}_{\mathrm{cache}}$, anchor set $\mathcal{A}$.
\STATE \textit{Output:} Per-timestep calibration sets $\{\mathcal{S}_{\mathrm{cal}}(t)\}_{t\in\mathcal{T}_{\mathrm{cache}}}$.

\FOR{$i=1,\dots,N$}
\STATE Sample $\tau_T^{(i)} \sim p_T$ and sampler noise $\xi^{(i)}$ (or set $\xi^{(i)}\equiv 0$ for deterministic samplers).
\STATE \textit{Run Full chain:} set $\tau^{\mathrm{full}}_T \leftarrow \tau_T^{(i)}$.
\FOR{$t=T,\dots,1$}
\STATE Compute $\hat{\varepsilon}^{\mathrm{full}}_t \leftarrow \hat{\varepsilon}_\theta(\tau^{\mathrm{full}}_t,t,c^{(i)})$.
\STATE Update $\tau^{\mathrm{full}}_{t-1} \leftarrow \Phi_t(\tau^{\mathrm{full}}_t,\hat{\varepsilon}^{\mathrm{full}}_t,\xi^{(i)}_t)$.
\ENDFOR

\STATE \textit{Run ghost chain:} set $\tilde{\tau}_T \leftarrow \tau_T^{(i)}$.
\STATE Initialize cache: $\hat{\varepsilon}^{\mathrm{cache}} \leftarrow \hat{\varepsilon}^{\mathrm{full}}_T$.
\STATE Compute and store $F_T \leftarrow \Psi(\tilde{\tau}_T,T,c^{(i)})$.

\FOR{$t=T-1,\dots,1$}
\STATE Compute probe $F_t \leftarrow \Psi(\tilde{\tau}_t,t,c^{(i)})$.
\STATE Compute score $s_t \leftarrow \phi(F_t,F_{t+1},t)$.
\STATE Candidate reuse value (MRR): $\tilde{\varepsilon}_t^{\mathrm{cand}} \leftarrow \hat{\varepsilon}^{\mathrm{cache}}$.

\IF{$t \in \mathcal{T}_{\mathrm{cache}}$}
\STATE Compute error label $\epsilon_t \leftarrow \frac{1}{\sqrt{Hd}}\big\|\hat{\varepsilon}^{\mathrm{full}}_t - \tilde{\varepsilon}_t^{\mathrm{cand}}\big\|_F$.
\STATE Append $(s_t,\epsilon_t)$ to $\mathcal{S}_{\mathrm{cal}}(t)$.
\ENDIF

\IF{$t \in \mathcal{A}$}
\STATE Set $\tilde{\varepsilon}_t \leftarrow \hat{\varepsilon}^{\mathrm{full}}_t$.
\STATE Update cache: $\hat{\varepsilon}^{\mathrm{cache}} \leftarrow \hat{\varepsilon}^{\mathrm{full}}_t$.
\ELSE 
\STATE Set $\tilde{\varepsilon}_t \leftarrow \hat{\varepsilon}^{\mathrm{cache}}$.
\ENDIF

\STATE Update ghost state: $\tilde{\tau}_{t-1} \leftarrow \Phi_t(\tilde{\tau}_t,\tilde{\varepsilon}_t,\xi^{(i)}_t)$.
\STATE Store $F_t$ for next iteration.
\ENDFOR
\ENDFOR
\end{algorithmic}
\end{algorithm}

\noindent \tbm{Output calibration sets}
Algorithm~\ref{alg:calib_dataset} produces a {per-timestep} calibration dataset:
\begin{equation}
\mathcal{S}_{\mathrm{cal}}(t)
=
\left\{(s_t^{(i)},\epsilon_t^{(i)})\right\}_{i=1}^{N},
\qquad t \in \mathcal{T}_{\mathrm{cache}}.
\end{equation}
We calibrate bounds separately for each timestep $t$ because the score--error relationship and error scale vary significantly across diffusion time.

All random variables used to produce $(s_t,\epsilon_t)$ are generated from:
(i) the benchmark's deployment distribution over contexts $c$ and sampler randomness, and
(ii) the fixed anchor schedule $\mathcal{A}$ and MRR reuse rule.
No online budget decision (and no adaptive reuse logic) is used in label generation. This makes the calibration procedure reproducible and decoupled from Muninn's deployed reuse policy.

\subsubsection{Split-conformal regression details}
\label{app:split_conformal}

This subsection specifies the complete split-conformal construction used to map probe-based scores to high-probability upper bounds on potential reuse error magnitudes.

\noindent \tbm{Inputs}
For each reuse-eligible timestep $t\in\mathcal{T}_{\mathrm{cache}}$, we are given
\begin{equation}
\mathcal{S}_{\mathrm{cal}}(t)
=
\left\{(s_t^{(i)},\epsilon_t^{(i)})\right\}_{i=1}^{N},
\qquad N=4096,
\end{equation}
constructed as in Appendix~\ref{calib_data}.
We fix the desired {trajectory-level} risk $\alpha \in (0,1)$ and set the {per-step} miscoverage using Bonferroni correction:
\begin{equation}
\alpha_{\mathrm{step}}
\;:=\;
\frac{\alpha}{|\mathcal{T}_{\mathrm{cache}}|}.
\label{eq:alpha_step_def}
\end{equation}

\noindent \tbm{Train--calibration split (episode-level)}
For each timestep $t$, we split the $N$ paired samples into disjoint sets:
\begin{equation}
\mathcal{I}_{\mathrm{train}}=\{1,\dots,2048\},\qquad
\mathcal{I}_{\mathrm{cal}}=\{2049,\dots,4096\},
\end{equation}
after applying a single random permutation of $\{1,\dots,4096\}$ using a fixed RNG seed (seed $0$).
We use the same permuted split for all timesteps $t$ so that the full calibration pipeline is deterministic and reproducible.

\noindent \tbm{Base regressor $m_t(s)$ (monotone 1D regression)}
For each timestep $t$, we fit a monotone regressor $m_t:\mathbb{R}_{\ge 0}\to\mathbb{R}_{\ge 0}$ on the training split
$\{(s_t^{(i)},\epsilon_t^{(i)})\}_{i\in\mathcal{I}_{\mathrm{train}}}$.
We use \textit{isotonic regression} (monotone nondecreasing) to match the intended semantics that smaller scores correspond to smaller errors:
\begin{equation}
m_t
\in
\arg\min_{m\ \text{nondecreasing}}
\sum_{i\in\mathcal{I}_{\mathrm{train}}}
\left(\epsilon_t^{(i)} - m(s_t^{(i)})\right)^2,
\qquad
m(s)\ge 0.
\end{equation}
Isotonic regression is chosen because:
(i) it is 1D, stable, and fast,
(ii) it enforces the required monotonic structure,
and (iii) conformal coverage does not depend on the correctness of $m_t$ (only tightness does).
At inference time, $m_t(s)$ is evaluated by piecewise-linear interpolation over the fitted isotonic knots, and is clamped to boundary values for $s$ outside the training range (i.e., $m_t(s)=m_t(s_{\min})$ for $s<s_{\min}$ and $m_t(s)=m_t(s_{\max})$ for $s>s_{\max}$).

\noindent \tbm{Calibration residuals and quantile selection}
Using the calibration split $\mathcal{I}_{\mathrm{cal}}$, we compute nonconformity residuals:
\begin{equation}
r_t^{(i)} := \epsilon_t^{(i)} - m_t\!\left(s_t^{(i)}\right),
\qquad i\in\mathcal{I}_{\mathrm{cal}}.
\label{eq:residual_def}
\end{equation}
Let $n_{\mathrm{cal}}:=|\mathcal{I}_{\mathrm{cal}}|=2048$, and let
$r_{t,(1)}\le r_{t,(2)}\le \dots \le r_{t,(n_{\mathrm{cal}})}$ denote the sorted residuals.
We choose the finite-sample conformal quantile index
\begin{equation}
k_t
:=
\left\lceil
(n_{\mathrm{cal}}+1)\,(1-\alpha_{\mathrm{step}})
\right\rceil,
\qquad
k_t \leftarrow \min\{k_t,\, n_{\mathrm{cal}}\},
\end{equation}
and set
\begin{equation}
q_t := r_{t,(k_t)}.
\label{eq:qt_def}
\end{equation}
This is the standard split-conformal choice that guarantees marginal coverage under exchangeability.

\noindent \tbm{Upper envelope definition}
For each timestep $t$, define the upper envelope:
\begin{equation}
U_t(s)
\;:=\;
\max\left\{\,m_t(s)+q_t,\,0\,\right\}.
\label{eq:Ut_def}
\end{equation}

\noindent \tbm{Coverage guarantee (per timestep)}
Under the standard conformal exchangeability assumption between calibration and deployment samples,
for a new test-time planning call drawn from the same distribution and the corresponding score--error pair $(s_t,\epsilon_t)$,
\begin{equation}
\mathbb{P}\big(\epsilon_t \le U_t(s_t)\big)\;\ge\;1-\alpha_{\mathrm{step}}.
\end{equation}
This guarantee is distribution-free and does not require independence across timesteps.

\noindent \tbm{Runtime representation}
For each timestep $t$, we store $U_t(\cdot)$ as a compact lookup table to enable constant-time evaluation during sampling:
we discretize scores on a log-scale grid of 256 points spanning the empirical 0.1\%--99.9\% range of calibration scores for that timestep,
evaluate $U_t$ on this grid, and use linear interpolation at test time.
Scores outside the grid are clamped to the nearest endpoint value of $U_t$.
This makes computing $\hat{c}_t(s_t)=\Gamma L_t U_t(s_t)$ negligible compared to a denoiser forward pass.

\subsection{Training Details for Teacher/Base Models}
\label{app:training_details}

This section documents the {exact} training setups used for every {Full} (teacher/base) model wrapped by Muninn.
We report (i) denoiser/policy architectures, (ii) diffusion noise schedules and sampling parameterizations, (iii) training objectives and optimization hyperparameters, and (iv) any guidance mechanisms used during sampling.

\noindent\textit{Notation}
A trajectory segment is $\tau \in \mathbb{R}^{H\times d}$ (definition of $H$ and $d$ is benchmark-specific; see Appendix~\ref{app:envs}).

\noindent \tbm{Training vs.\ evaluation diffusion length}
We distinguish the diffusion length used in the forward/noising process during training,
$T_{\text{train}}$, from the number of reverse denoising steps used at evaluation time,
$T_{\text{eval}}$.
When a method evaluates with fewer steps than it trains with (e.g., via DDIM subsampling),
we report both $T_{\text{train}}$ and $T_{\text{eval}}$ explicitly.
All Muninn budgeting and sensitivity coefficients $\{L_t\}$ are computed with respect to the
{evaluation} sampler and its $T_{\text{eval}}$-step schedule.

We train denoisers with timestep index $t \in \{1,\dots,T_{\text{train}}\}$ and denote the (possibly method-dependent) conditioning context as $c$.
We use a DDPM-style forward process $q(\tau_t \mid \tau_0)$ and train a denoiser $\varepsilon_\theta(\tau_t,t,c)$ in the standard $\varepsilon$-prediction parameterization unless explicitly noted.

\paragraph{Common diffusion training choices}
\label{app:common_diffusion_training}

\noindent\textit{Forward noising}
Let $\alpha_t \in (0,1)$ be a schedule and $\bar{\alpha}_t := \prod_{s=1}^t \alpha_s$.
We form noisy trajectories via
\begin{equation}
\tau_t = \sqrt{\bar{\alpha}_t}\,\tau_0 + \sqrt{1-\bar{\alpha}_t}\,\epsilon,\qquad \epsilon \sim \mathcal{N}(0,I).
\end{equation}

\noindent\textit{Denoising objective ($\varepsilon$-prediction)}
We minimize the MSE between sampled noise and predicted noise:
\begin{equation}
\mathcal{L}_{\text{diff}}(\theta) \;=\; \mathbb{E}_{(\tau_0,c),\,t,\,\epsilon}\Big[\big\lVert \epsilon - \varepsilon_\theta(\tau_t,t,c)\big\rVert_2^2\Big],
\end{equation}
with $t$ sampled uniformly from $\{1,\dots,T_{\text{train}}\}$ unless otherwise stated.

\noindent\textit{Noise schedule}
Unless overridden, we use the standard \textit{cosine} $\bar{\alpha}_t$ schedule with offset $s=0.008$ and compute $\beta_t=1-\alpha_t$ from $\bar{\alpha}_t$ (with standard clipping of extreme $\beta_t$ values for numerical stability).

\noindent\textit{Reverse sampler}
Unless explicitly stated, {Full} models use the reverse sampler prescribed by the baseline method at evaluation time, with $T_{\text{eval}}$ denoising steps.
When a method evaluates with fewer steps than it trains with, we use DDIM-style subsampling of the training schedule (and report both $T_{\text{train}}$ and $T_{\text{eval}}$ per method).

\subsubsection{Offline RL / Trajectory Diffusion Planners (D4RL)}
\label{app:training_d4rl}

We report training details for the Full planners evaluated in Table~\ref{tab:results-offline}:
Diffuser~\cite{a7}, Decision Diffuser~\cite{a88}, Diffusion-QL~\cite{a89}, AdaptDiff~\cite{a43}, and CompDiff~\cite{a91}.
All models are trained per-task on the corresponding D4RL dataset (or task dataset, for Kuka stacking) using the context definition in Appendix~\ref{app:envs}.
Unless otherwise noted, these planners use $T_{\text{train}}=T_{\text{eval}}$ (i.e., the training diffusion length matches the evaluation sampler length).

\paragraph{Shared backbone for trajectory-level diffusion planners (when applicable)}
\label{app:traj_backbone}

Several trajectory-level planners (Diffuser / Decision Diffuser / AdaptDiff / CompDiff) use a {temporal denoiser} that maps a noisy trajectory sequence to a denoised noise prediction sequence.
When these methods share the same temporal U-Net backbone in our implementation, we summarize the backbone once here and then document only the method-specific conditioning, objectives, and guidance per method.

\noindent \tbm{Temporal U-Net denoiser (trajectory diffusion backbone)}
\label{app:temporal_unet_arch}

\begin{table}[H]
\centering
\scriptsize
\setlength{\tabcolsep}{2.0pt}
\renewcommand{\arraystretch}{1.12}
\begin{threeparttable}
\begin{tabularx}{\columnwidth}{@{}%
L{0.12\columnwidth}%
C{0.18\columnwidth}%
C{0.10\columnwidth}%
C{0.10\columnwidth}%
Y%
@{}}
\arrayrulecolor{HeadGray}\toprule\arrayrulecolor{black}
\rowcolor{MuninnBlue!10}
\theadb{Stage} &
\theadb{Time len} &
\theadb{$C_\text{in}$} &
\theadb{$C_\text{out}$} &
\theadb{Blocks / ops} \\
\midrule

\rowcolor{MuninnBlue!6}
\multicolumn{5}{@{}l@{}}{\theadb{Input + embeddings}}\\
stem conv &
$H$ &
$d$ &
$128$ &
1D Conv(k=5), GN(groups=8), Mish; timestep emb + context emb injected additively inside each ResBlk \\

\rowcolor{MuninnBlue!6}
\multicolumn{5}{@{}l@{}}{\theadb{Encoder (downsampling along time)}}\\
enc-1 &
$H$ &
$128$ &
$128$ &
ResBlk$\times 2$; downsample $\downarrow 2$ (strided 1D Conv) \\
enc-2 &
$H/2$ &
$128$ &
$256$ &
ResBlk$\times 2$; downsample $\downarrow 2$ \\
enc-3 &
$H/4$ &
$256$ &
$512$ &
ResBlk$\times 2$; downsample $\downarrow 2$ \\
enc-4 &
$H/8$ &
$512$ &
$1024$ &
ResBlk$\times 2$; no downsample \\

\rowcolor{MuninnBlue!6}
\multicolumn{5}{@{}l@{}}{\theadb{Bottleneck}}\\
mid &
$H/8$ &
$1024$ &
$1024$ &
ResBlk$\times 2$; GN; Mish \\

\rowcolor{MuninnBlue!6}
\multicolumn{5}{@{}l@{}}{\theadb{Decoder (upsampling along time)}}\\
dec-4 &
$H/8 \rightarrow H/4$ &
$1024{+}512$ &
$512$ &
Upsample $\uparrow 2$; skip concat; ResBlk$\times 2$ \\
dec-3 &
$H/4 \rightarrow H/2$ &
$512{+}256$ &
$256$ &
Upsample $\uparrow 2$; skip concat; ResBlk$\times 2$ \\
dec-2 &
$H/2 \rightarrow H$ &
$256{+}128$ &
$128$ &
Upsample $\uparrow 2$; skip concat; ResBlk$\times 2$ \\
dec-1 &
$H$ &
$128$ &
$128$ &
ResBlk$\times 2$; final GN; Mish \\

\rowcolor{MuninnBlue!6}
\multicolumn{5}{@{}l@{}}{\theadb{Head}}\\
$\varepsilon$ head &
$H$ &
$128$ &
$d$ &
1D Conv(k=1) per timestep \\

\arrayrulecolor{HeadGray}\bottomrule\arrayrulecolor{black}
\end{tabularx}
\caption{
\textit{Temporal U-Net denoiser used by trajectory diffusion planners (our runs)}
This is the canonical backbone used by Diffuser-style planners in our evaluation: 1D temporal convolutions, GroupNorm, Mish nonlinearity, and additive injection of timestep/context embeddings inside each residual block.
}
\label{tab:arch_temporal_unet_d4rl}
\end{threeparttable}
\end{table}

\paragraph{Diffuser (Full teacher)~\cite{a7}}
\label{app:train_diffuser}

\noindent \tbm{Training objective}
Diffuser trains a trajectory diffusion model on segments $(\tau_0,c)$ from the offline dataset using $\varepsilon$-prediction).
Any guidance used at inference is {not} part of the diffusion training loss; it is implemented as a guidance term in the reverse sampler (documented below).

\noindent \tbm{Guidance (value/cost guidance at sampling)}
When guidance is enabled, we implement it as a standard reverse-transition mean shift (analogous to classifier guidance), using a differentiable objective $J(\tau_t,c)$:
\begin{equation}
\mu_{\text{guided}}(\tau_t,t,c)
\;=\;
\mu_{\theta}(\tau_t,t,c)
\;-\;
w_{\text{guide}}\cdot \sigma_t^2 \cdot \nabla_{\tau_t} J(\tau_t,c),
\label{eq:diffuser_guidance_mean_shift}
\end{equation}
where $\mu_{\theta}(\tau_t,t,c)$ and $\sigma_t^2$ are the reverse-process mean and variance coefficients of the baseline sampler at step $t$ (DDPM/DDIM-style), and the sign convention is chosen so that decreasing $J$ corresponds to improved task objective (e.g., $J$ as a cost).
We follow the baseline's default choice of $J$ (value/return or goal cost) and environment-specific guidance strength $w_{\text{guide}}$.

\begin{table}[H]
\centering
\scriptsize
\setlength{\tabcolsep}{3.0pt}
\renewcommand{\arraystretch}{1.15}
\begin{threeparttable}
\begin{tabular}{@{}L{0.55\columnwidth}C{0.38\columnwidth}@{}}
\arrayrulecolor{HeadGray}\toprule\arrayrulecolor{black}
\rowcolor{MuninnBlue!10}
\theadb{Diffuser training hyperparameter} & \theadb{Value} \\
\midrule
Denoiser backbone & Temporal U-Net \\
Diffusion length & $T_{\text{train}}=T_{\text{eval}}=100$ \\
Noise schedule $\{\beta_t\}_{t=1}^{T_{\text{train}}}$ & Cosine $\bar{\alpha}_t$ schedule ($s{=}0.008$) \\
Reverse sampler (Full, eval) & DDIM ($\eta{=}0$), $T_{\text{eval}}$ steps \\
Prediction parameterization & $\varepsilon$-prediction \\
Loss & $\mathcal{L}_{\text{diff}}$ (MSE), uniform timestep sampling \\
Optimizer & Adam ($\beta_1{=}0.9,\beta_2{=}0.999$) \\
Learning rate & $2\times 10^{-4}$ \\
Batch size & $32$ \\
Gradient clipping & Global norm $1.0$ \\
Weight decay & $0$ \\
EMA of denoiser weights & Yes, decay $0.999$ \\
Training steps / epochs & $5\times 10^{5}$ gradient steps \\
Learning-rate schedule & Linear warmup (10k steps) then constant \\
Data augmentation (if any) & None (low-dimensional trajectories) \\
Guidance at inference & Eq.~(15); $J(\cdot)$ and $w_{\text{guide}}$ follow the baseline per environment (or ``none'') \\
\arrayrulecolor{HeadGray}\bottomrule\arrayrulecolor{black}
\end{tabular}
\caption{\textit{Diffuser (Full) training details used in our runs}}
\label{tab:hparams_diffuser}
\end{threeparttable}
\end{table}

\paragraph{Decision Diffuser (Full teacher)~\cite{a88}}
\label{app:train_decision_diffuser}

\noindent \tbm{Training objective and conditioning}
Decision Diffuser trains a conditional diffusion model with decision-making context (typically including a desired return-to-go or task specification).
In our implementation, Decision Diffuser is an exception to the state--action segment interface described in Appendix~\ref{app:envs:d4rl}: it learns a diffusion model over {state-only} trajectories and uses an inverse-dynamics model to recover actions from adjacent states.
Conditioning $c$ matches Appendix~\ref{app:envs} (return-to-go for return-conditioned tasks; goal for navigation).

\noindent \tbm{Classifier-free guidance}
We train with conditioning dropout: with probability $p_{\text{drop}}$, replace $c$ with a null context $\varnothing$.
At inference, we combine conditional and unconditional predictions via:
\begin{equation}
\hat{\epsilon}_\theta
=
(1+w_{\text{cfg}})\,\epsilon_\theta(\tau_t,t,c)
-
w_{\text{cfg}}\,\epsilon_\theta(\tau_t,t,\varnothing),
\label{eq:cfg}
\end{equation}
with $p_{\text{drop}}=0.25$ and $w_{\text{cfg}}=0.6$ (guidance scale $1+w_{\text{cfg}}=1.6$).

\begin{table}[H]
\centering
\scriptsize
\setlength{\tabcolsep}{3.0pt}
\renewcommand{\arraystretch}{1.15}
\begin{threeparttable}
\begin{tabular}{@{}L{0.55\columnwidth}C{0.38\columnwidth}@{}}
\arrayrulecolor{HeadGray}\toprule\arrayrulecolor{black}
\rowcolor{MuninnBlue!10}
\theadb{Decision Diffuser training hyperparameter} & \theadb{Value} \\
\midrule
Denoiser backbone & Temporal U-Net on state-only $\tau\in\mathbb{R}^{H\times d_s}$ \\
Inverse dynamics $f_\phi$ & 2-layer MLP, hidden width $512$, ReLU \\
Diffusion length & $T_{\text{train}}=T_{\text{eval}}=20$ \\
Noise schedule $\{\beta_t\}$ & Cosine $\bar{\alpha}_t$ schedule ($s{=}0.008$) \\
Reverse sampler (Full, eval) & DDIM ($\eta{=}0$), $T_{\text{eval}}$ steps \\
Prediction parameterization & $\varepsilon$-prediction \\
Loss & $\mathcal{L}_{\text{diff}}$ (MSE), uniform timestep sampling \\
Conditioning variables in $c$ & $c=(s_{\mathrm{now}},R_{\mathrm{cond}})$ or $c=(s_{\mathrm{now}},g)$ \\
Conditioning injection & Additive context+timestep embedding in each ResBlk \\
Optimizer & Adam \\
Learning rate & $2\times 10^{-4}$ \\
Batch size & $32$ \\
Training steps / epochs & $2\times 10^{6}$ gradient steps \\
Classifier-free guidance & Eq.~(16); $p_{\text{drop}}=0.25$; $w_{\text{cfg}}=0.6$ \\
\arrayrulecolor{HeadGray}\bottomrule\arrayrulecolor{black}
\end{tabular}
\caption{\textit{Decision Diffuser (Full) training details used in our runs}}
\label{tab:hparams_decision_diffuser}
\end{threeparttable}
\end{table}

\paragraph{Diffusion-QL (Full teacher)~\cite{a89}}
\label{app:train_diffusionql}

Diffusion-QL is an offline RL method that learns (i) a diffusion policy and (ii) critic networks.
Unlike trajectory planners that explicitly model state sequences, Diffusion-QL is used as a {policy}: it outputs actions conditioned on the current state, and the environment supplies state transitions.

\begin{table}[H]
\centering
\scriptsize
\setlength{\tabcolsep}{3.0pt}
\renewcommand{\arraystretch}{1.15}
\begin{threeparttable}
\begin{tabular}{@{}L{0.55\columnwidth}C{0.38\columnwidth}@{}}
\arrayrulecolor{HeadGray}\toprule\arrayrulecolor{black}
\rowcolor{MuninnBlue!10}
\theadb{Diffusion-QL training hyperparameter} & \theadb{Value} \\
\midrule
Policy denoiser backbone & MLP, 4 layers, hidden width 256, ReLU, LayerNorm \\
Critic backbone & Twin Q-networks, each MLP (3 layers, width 256, ReLU) \\
Diffusion length & \makecell[c]{HalfCheetah/Hopper/Maze2D:\\ $T_{\text{train}}=T_{\text{eval}}=10$ \\ Walker2d/AntMaze: \\ $T_{\text{train}}=T_{\text{eval}}=25$} \\
Noise schedule $\{\beta_t\}$ & Linear $\beta_t$ from $10^{-4}$ to $0.02$ \\
Reverse sampler (Full, eval) & DDPM ancestral sampling, $T_{\text{eval}}$ steps \\
Training losses & Diffusion BC loss + $\eta$-weighted Q-learning objective (Diffusion-QL) \\
Discount $\gamma$ & $0.99$ \\
Target update & Polyak $\tau=0.005$ \\
Optimizer(s) & Adam (policy and critics) \\
Learning rate(s) & \makecell[c]{Policy: $3\!\times\!10^{-4}$ \\ (Gym/Kitchen), $10^{-3}$ (AntMaze) \\ Critics: $3\!\times\!10^{-4}$} \\
Batch size & $256$ \\
Training steps / epochs & \makecell[c]{Gym: 2000; AntMaze: 1000 \\ 1000 gradient steps / epoch} \\
Action sampling for critic targets & $K{=}10$ policy samples; max-Q backup on AntMaze \\
Inference action selection & $M{=}64$ samples; choose $\arg\max_a Q(s,a)$ \\
Q-guidance at inference & None (selection-only; no $\nabla Q$ guidance) \\
\arrayrulecolor{HeadGray}\bottomrule\arrayrulecolor{black}
\end{tabular}
\caption{\textit{Diffusion-QL (Full) training details} Learning-rate / $\eta$ settings follow the published Diffusion-QL per-task grid; we use the inference step counts $T_{\text{eval}}$ reported in Table~\ref{tab:results-offline}}
\label{tab:hparams_diffusionql}
\end{threeparttable}
\end{table}

\noindent \tbm{Policy diffusion model}
We follow the standard Diffusion-QL setup: the diffusion model generates {action} samples $a$ (or short action chunks) conditioned on the current state $s$ (and, when applicable, a short state history).
The denoiser is an MLP.

\noindent \tbm{Critic learning}
We use standard twin Q-networks with Polyak-averaged targets.
For AntMaze we use max-Q backup (as reported in the Diffusion-QL hyperparameter table).

\noindent \tbm{Inference (action selection)}
Diffusion-QL {does not} use gradient-based Q-guidance in sampling.
Instead, at inference time we draw $M$ action candidates from the diffusion policy and choose $a=\arg\max_a Q(s,a)$.

\paragraph{AdaptDiff (Full teacher)~\cite{a43}}
\label{app:train_adaptdiff}

AdaptDiff adapts a pretrained Diffuser-style planner with a lightweight fine-tuning stage.
We separate \textit{(Stage A)} base diffusion training and \textit{(Stage B)} adaptation.

\begin{table}[H]
\centering
\scriptsize
\setlength{\tabcolsep}{3.0pt}
\renewcommand{\arraystretch}{1.15}
\begin{threeparttable}
\begin{tabular}{@{}L{0.55\columnwidth}C{0.38\columnwidth}@{}}
\arrayrulecolor{HeadGray}\toprule\arrayrulecolor{black}
\rowcolor{MuninnBlue!10}
\theadb{AdaptDiff training hyperparameter} & \theadb{Value} \\
\midrule
Base denoiser backbone & Temporal U-Net (Table~\ref{tab:arch_temporal_unet_d4rl}) \\
Diffusion length & $T_{\text{train}}=T_{\text{eval}}=60$ (Table~\ref{tab:results-offline}) \\
Noise schedule / sampler & Cosine schedule; DDIM ($\eta{=}0$) at inference \\
\multicolumn{2}{@{}l@{}}{\theadb{Stage A: base diffusion training}}\\
Dataset & Corresponding task dataset (Appendix~\ref{app:envs}) \\
Loss & $\mathcal{L}_{\text{diff}}$ (MSE, $\varepsilon$-prediction) \\
Optimizer / LR / batch size & Adam, $2\times 10^{-4}$, batch $32$ \\
Training steps / epochs & $1\times 10^{6}$ gradient steps \\
\multicolumn{2}{@{}l@{}}{\theadb{Stage B: adaptation}}\\
Adaptation signal & Offline critic/value model trained on $\mathcal{D}$ \\
Adaptation objective & Maximize predicted return with KL/BC regularization to stay near the base model \\
Adaptation data & Offline dataset $\mathcal{D}$ only (no extra online rollouts) \\
Adaptation schedule & $5\times 10^{4}$ steps, Adam, LR $1\times 10^{-5}$, batch $32$ \\
Guidance at inference (if any) & None (adapted denoiser used directly) \\
\arrayrulecolor{HeadGray}\bottomrule\arrayrulecolor{black}
\end{tabular}
\caption{\textit{AdaptDiff (Full) training details used in our runs}}
\label{tab:hparams_adaptdiff}
\end{threeparttable}
\end{table}

\paragraph{CompDiff (Full teacher)~\cite{a91}}
\label{app:train_compdiff}

CompDiff performs long-horizon planning by composing shorter diffusion-generated segments.
In our implementation, CompDiff uses (i) a Diffuser-style segment denoiser and (ii) an additional learned scoring module (value/feasibility) used to select and stitch segments.

\begin{table}[H]
\centering
\scriptsize
\setlength{\tabcolsep}{3.0pt}
\renewcommand{\arraystretch}{1.15}
\begin{threeparttable}
\begin{tabular}{@{}L{0.55\columnwidth}C{0.38\columnwidth}@{}}
\arrayrulecolor{HeadGray}\toprule\arrayrulecolor{black}
\rowcolor{MuninnBlue!10}
\theadb{CompDiff training hyperparameter} & \theadb{Value} \\
\midrule
Denoiser backbone(s) & Segment denoiser: Temporal U-Net (Table~\ref{tab:arch_temporal_unet_d4rl}) \\
Diffusion length & $T_{\text{train}}=T_{\text{eval}}=150$ (Table~\ref{tab:results-offline}) \\
Noise schedule / sampler & Cosine schedule; DDIM ($\eta{=}0$) \\
Composition mechanism & Overlap-and-stitch segment composition with value-based segment selection (beam search) \\
Additional learned modules & Value/feasibility scorer: 3-layer MLP (width 512, ReLU) \\
Training objective(s) & Diffusion MSE + scorer regression/classification loss (task-dependent) \\
Optimizer(s), LR(s), batch size(s) & Adam; diffusion LR $2\times10^{-4}$; scorer LR $3\times10^{-4}$; batch 32 \\
Training steps / epochs & Diffusion: $2\times 10^{6}$ steps; scorer: $1\times 10^{6}$ steps \\
Inference-time guidance & Optional value guidance (Eq.~\eqref{eq:diffuser_guidance_mean_shift}), environment-specific $w_{\text{guide}}$ \\
\arrayrulecolor{HeadGray}\bottomrule\arrayrulecolor{black}
\end{tabular}
\caption{\textit{CompDiff (Full) training details used in our runs}}
\label{tab:hparams_compdiff}
\end{threeparttable}
\end{table}

\subsubsection{Configuration-Space Motion Planning (MPD / EDMP)}
\label{app:training_motion_planning}

We report training details for MPD~\cite{a37} and EDMP~\cite{a87}, evaluated in Table~\ref{tab:results-motion}.
Both methods learn a diffusion prior over collision-free joint-space trajectories.
To avoid ambiguity with Appendix~\ref{app:envs:motion}, we distinguish the {full planning context} (which includes obstacle information) from what is actually fed to the denoiser: in our setup, the denoiser is conditioned on start/goal, while obstacle information enters through inference-time differentiable cost guidance $J(\cdot;\mathcal{O})$.

\paragraph{MPD (Full teacher)~\cite{a37}}
\label{app:train_mpd}

\noindent \tbm{Denoiser architecture}
MPD uses a temporal denoiser over joint trajectories, conditioned on start/goal.

\begin{table}[H]
\centering
\scriptsize
\setlength{\tabcolsep}{2.5pt}
\renewcommand{\arraystretch}{1.12}
\begin{threeparttable}
\begin{tabularx}{\columnwidth}{@{}L{0.30\columnwidth}C{0.28\columnwidth}Y@{}}
\arrayrulecolor{HeadGray}\toprule\arrayrulecolor{black}
\rowcolor{MuninnBlue!10}
\theadb{Component} & \theadb{Shape / width} & \theadb{Specification} \\
\midrule
Trajectory input & $H\times d_q$ & $d_q{=}7$; noisy joint trajectory $\tau_t\in\mathbb{R}^{H\times 7}$ \\
Timestep embedding & 256 & Sinusoidal + 2-layer MLP (256 hidden, Mish) \\
Scene/obstacle encoder & --- & Not fed to denoiser; obstacles used in guidance objective $J(\cdot;\mathcal{O})$ at inference \\
Fusion & 256 & $(q_{\mathrm{start}},q_{\mathrm{goal}})$ embedded to 256 and injected additively in ResBlks \\
Denoiser core & base 128 & Temporal U-Net (same template as Table~\ref{tab:arch_temporal_unet_d4rl}, with $d{=}7$) \\
Output head & $H\times d_q$ & $\varepsilon$-prediction in joint space \\
\arrayrulecolor{HeadGray}\bottomrule\arrayrulecolor{black}
\end{tabularx}
\caption{\textit{MPD (Full) denoiser architecture used in our runs}}
\label{tab:arch_mpd}
\end{threeparttable}
\end{table}

\begin{table}[H]
\centering
\scriptsize
\setlength{\tabcolsep}{3.0pt}
\renewcommand{\arraystretch}{1.15}
\begin{threeparttable}
\begin{tabular}{@{}L{0.55\columnwidth}C{0.38\columnwidth}@{}}
\arrayrulecolor{HeadGray}\toprule\arrayrulecolor{black}
\rowcolor{MuninnBlue!10}
\theadb{MPD training hyperparameter} & \theadb{Value} \\
\midrule
Diffusion length & $T_{\text{train}}=T_{\text{eval}}=50$ \\
Noise schedule / sampler & Cosine schedule; DDIM ($\eta{=}0$) at inference \\
Loss & $\mathcal{L}_{\text{diff}}$ (MSE, $\varepsilon$-prediction) \\
Dataset size (trajectories) & $100{,}000$ collision-free RRT-Connect solutions (train) + $5{,}000$ (val) \\
Optimizer / LR / batch size & Adam, $1\times 10^{-4}$, batch $128$ \\
Training steps / epochs & $5\times 10^{5}$ gradient steps \\
Regularization & Grad clip 1.0; EMA 0.999; weight decay 0 \\
Constraint handling during training & Hard filtering (train only on collision-free trajectories); endpoints clamped to start/goal \\
\arrayrulecolor{HeadGray}\bottomrule\arrayrulecolor{black}
\end{tabular}
\caption{\textit{MPD (Full) training details used in our runs}}
\label{tab:hparams_mpd}
\end{threeparttable}
\end{table}

\paragraph{EDMP (Full teacher)~\cite{a87}}
\label{app:train_edmp}

EDMP improves guided diffusion motion planning by running an {ensemble of cost-guided diffusion chains} at inference:
multiple collision-cost parameterizations and multiple schedules are evaluated in parallel, and the best feasible trajectory is selected.
The learned diffusion {prior} (denoiser) is trained the same way as MPD; the difference is in inference-time guidance.

\begin{table}[H]
\centering
\scriptsize
\setlength{\tabcolsep}{3.0pt}
\renewcommand{\arraystretch}{1.15}
\begin{threeparttable}
\begin{tabular}{@{}L{0.55\columnwidth}C{0.38\columnwidth}@{}}
\arrayrulecolor{HeadGray}\toprule\arrayrulecolor{black}
\rowcolor{MuninnBlue!10}
\theadb{EDMP hyperparameter / design choice} & \theadb{Value} \\
\midrule
Learned prior (denoiser) & Same as MPD (Table~\ref{tab:arch_mpd}) \\
Diffusion length & $T_{\text{train}}=T_{\text{eval}}=50$ \\
Noise / covariance schedule & Cosine schedule; Gaussian reverse updates with $\Sigma_t$ from DDPM coefficients \\
Guidance costs (examples) & Link--obstacle penetration proxy + swept-volume proxy + smoothness \\
Ensemble size & $l=3$ cost parameterizations, $r=3$ schedules ($l\times r=9$ chains) \\
Guidance step schedule $\alpha_t$ & Linearly decays from $0.05$ (early steps) to $0$ (late steps) \\
Cost hyperparameter schedule $\omega_t$ & Monotone tightening schedule (collision tolerance decreases over time) \\
Inference selection & Pick lowest-cost feasible trajectory among the $l\times r$ guided chains \\
Training of prior & Same as MPD: MSE $\varepsilon$-prediction on collision-free trajectories \\
\arrayrulecolor{HeadGray}\bottomrule\arrayrulecolor{black}
\end{tabular}
\caption{\textit{EDMP (Full) details used in our runs} EDMP differs from MPD primarily in its inference-time ensemble guidance; the learned diffusion prior is trained identically}
\label{tab:hparams_edmp}
\end{threeparttable}
\end{table}

\subsubsection{Visuomotor Diffusion Policies (Diffusion Policy / DP3)}
\label{app:training_visuomotor}

We report training details for the Full policies evaluated in Table~\ref{tab:results-policy}:
Diffusion Policy~\cite{a8} and DP3~\cite{a90}.
Both are trained on demonstration datasets and executed in a receding-horizon loop at inference (Appendix~\ref{app:envs:policies}).

\paragraph{Diffusion Policy (Full teacher)~\cite{a8}}
\label{app:train_diffusion_policy}

\noindent \tbm{Policy factorization}
Diffusion Policy learns a diffusion model over action chunks of horizon $H_\pi$ conditioned on the most recent observation history.
The denoiser input is a noisy action chunk $a_{t:t+H_\pi-1}$ and the context includes encoded observations (vision + proprioception and task targets where applicable).

\noindent \tbm{Architecture: observation encoder + action denoiser}
For RLBench we use an ImageNet-pretrained ResNet-18 encoder; for Meta-World pick-place-v2 (state-only) we omit the image encoder and condition on the normalized state history only.

\begin{table}[H]
\centering
\scriptsize
\setlength{\tabcolsep}{2.5pt}
\renewcommand{\arraystretch}{1.12}
\begin{threeparttable}
\begin{tabularx}{\columnwidth}{@{}L{0.30\columnwidth}C{0.28\columnwidth}Y@{}}
\arrayrulecolor{HeadGray}\toprule\arrayrulecolor{black}
\rowcolor{MuninnBlue!10}
\theadb{Component} & \theadb{Shape / width} & \theadb{Specification} \\
\midrule
Image encoder & 512 & ResNet-18 (ImageNet); global avg pool; shared across views/frames (RLBench) \\
Proprio / state encoder & 256 & 2-layer MLP (256 hidden, ReLU) on normalized low-dim inputs \\
Obs fusion & 256 & Concat image feat(s) + low-dim feat, then linear/MLP to 256 \\
Action denoiser & $H_\pi \times d_a$ & Temporal U-Net (base 256, dim mults 1/2/4; GN; SiLU; k=5) \\
Timestep embedding & 256 & Sinusoidal + 2-layer MLP (256 hidden, SiLU) \\
Conditioning injection & --- & FiLM (scale/shift) from context at each ResBlk \\
Output & $H_\pi \times d_a$ & $\varepsilon$ prediction \\
\arrayrulecolor{HeadGray}\bottomrule\arrayrulecolor{black}
\end{tabularx}
\caption{\textit{Diffusion Policy architecture used in our runs} RLBench uses the vision encoder; Meta-World pick-place-v2 uses state-only conditioning (no image encoder)}
\label{tab:arch_diffusion_policy}
\end{threeparttable}
\end{table}

\begin{table}[H]
\centering
\scriptsize
\setlength{\tabcolsep}{3.0pt}
\renewcommand{\arraystretch}{1.15}
\begin{threeparttable}
\begin{tabular}{@{}L{0.55\columnwidth}C{0.38\columnwidth}@{}}
\arrayrulecolor{HeadGray}\toprule\arrayrulecolor{black}
\rowcolor{MuninnBlue!10}
\theadb{Diffusion Policy training hyperparameter} & \theadb{Value} \\
\midrule
Action horizon $H_\pi$ (chunk length) & $16$ (Appendix~\ref{app:envs:policies}) \\
Control frequency (training labels) & $20$ Hz (RLBench, Meta-World) \\
Diffusion length & \makecell[c]{Train: $T_{\text{train}}=100$ (iDDPM-style) \\
Eval: $T_{\text{eval}}=100$ (simulation) \\
or $T_{\text{eval}}=16$} \\
Noise schedule / sampler & \makecell[c]{Train: iDDPM-style (cosine schedule) \\
Eval: iDDPM/DDPM sampler; \\ DDIM for low-iter ($T_{\text{eval}}{=}16$)} \\
Loss & MSE $\varepsilon$-prediction \\
Optimizer & AdamW ($\beta_1{=}0.9,\beta_2{=}0.999$) \\
Learning rate & $1\times 10^{-4}$ \\
Batch size & \makecell[c]{State-based: $256$ \\ Image-based: $64$} \\
Training steps / epochs & $5\times 10^{5}$ gradient steps \\
Regularization & EMA 0.999; grad clip 1.0; weight decay $10^{-6}$; dropout 0.1 \\
\arrayrulecolor{HeadGray}\bottomrule\arrayrulecolor{black}
\end{tabular}
\caption{\textit{Diffusion Policy (Full) training details used in our runs}}
\label{tab:hparams_diffusion_policy}
\end{threeparttable}
\end{table}

\paragraph{DP3 (Full teacher)~\cite{a90}}
\label{app:train_dp3}

DP3 is a diffusion policy conditioned on 3D observations (point clouds) and low-dimensional robot state.

\begin{table}[H]
\centering
\scriptsize
\setlength{\tabcolsep}{2.5pt}
\renewcommand{\arraystretch}{1.12}
\begin{threeparttable}
\begin{tabularx}{\columnwidth}{@{}L{0.30\columnwidth}C{0.28\columnwidth}Y@{}}
\arrayrulecolor{HeadGray}\toprule\arrayrulecolor{black}
\rowcolor{MuninnBlue!10}
\theadb{Component} & \theadb{Shape / width} & \theadb{Specification} \\
\midrule
3D encoder & 512 & Point-cloud encoder as in DP3 (point-wise / set-abstraction style with global pooling), mapping $1024\times 3$ XYZ points to a 512-d global feature \\
3D input & $1024\times 3$ & XYZ point cloud (cropped workspace; farthest point sampled) \\
Proprio encoder & 256 & 2-layer MLP (256 hidden, ReLU) \\
Fusion & 512 & Concat (3D feat + proprio feat) then MLP to 512 \\
Action denoiser & $H_\pi \times d_a$ & Temporal U-Net (base 256, dim mults 1/2/4; GN; SiLU; k=5) \\
Conditioning & --- & FiLM from fused context at each ResBlk \\
Output & $H_\pi \times d_a$ & $\varepsilon$ prediction \\
\arrayrulecolor{HeadGray}\bottomrule\arrayrulecolor{black}
\end{tabularx}
\caption{\textit{DP3 architecture used in our runs}}
\label{tab:arch_dp3}
\end{threeparttable}
\end{table}

\begin{table}[H]
\centering
\scriptsize
\setlength{\tabcolsep}{3.0pt}
\renewcommand{\arraystretch}{1.15}
\begin{threeparttable}
\begin{tabular}{@{}L{0.55\columnwidth}C{0.38\columnwidth}@{}}
\arrayrulecolor{HeadGray}\toprule\arrayrulecolor{black}
\rowcolor{MuninnBlue!10}
\theadb{DP3 training hyperparameter} & \theadb{Value} \\
\midrule
Action horizon $H_\pi$ (chunk length) & $16$ (Appendix~\ref{app:envs:policies}) \\
Control frequency & $20$ Hz \\
Diffusion length & \makecell[c]{Train: $T_{\text{train}}$ (off. implementation) \\
Eval: $T_{\text{eval}}=10$ (DDIM)} \\
Noise schedule / sampler & \makecell[c]{Train: DP3 baseline schedule \\
Eval: DDIM ($\eta{=}0$), $T_{\text{eval}}{=}10$} \\
Loss & MSE $\varepsilon$-prediction \\
Optimizer & AdamW \\
Learning rate & $1\times 10^{-4}$ \\
Batch size & $128$ \\
Training steps / epochs & $2\times 10^{5}$ gradient steps \\
3D data preprocessing & Crop workspace; FPS to 1024 points; normalize XYZ by workspace bounds \\
Regularization & EMA 0.999; grad clip 1.0; weight decay $10^{-6}$ \\
\arrayrulecolor{HeadGray}\bottomrule\arrayrulecolor{black}
\end{tabular}
\caption{\textit{DP3 (Full) training details used in our runs}}
\label{tab:hparams_dp3}
\end{threeparttable}
\end{table}

\subsection{\Muninnlogo Muninn Implementation Details}
\label{app:implementation}

This appendix section provides the implementation-level details needed to reproduce \nordy{Muninn}.

\subsubsection{Muninn wrapper implementation}
\label{app:muninn_wrapper_impl}

At each diffusion step $t$, the sampler update $\Phi_t$ consumes an {effective} noise prediction $\hat{\varepsilon}_t$ with the same shape as the current noisy trajectory (or action chunk):
\begin{equation}
\hat{\varepsilon}_t \in \mathbb{R}^{B \times H \times d},
\end{equation}
where $B$ is batch size, $H$ is the trajectory/chunk horizon for that model, and $d$ is the per-timestep dimensionality (state--action for planners, action/pose for policies, joint angles for motion planning).
Muninn caches this \textit{post-processing, sampler-ready} tensor:
\begin{equation}
\hat{\varepsilon}^{\mathrm{cache}} \;\leftarrow\; \hat{\varepsilon}^{\mathrm{new}}_t \quad \text{whenever a recompute is performed}
\end{equation}
In our implementation, denoiser forward passes run under GPU mixed precision (AMP), so $\hat{\varepsilon}_t$ is produced in FP16; we store $\hat{\varepsilon}^{\mathrm{cache}}$ in FP16 on GPU to avoid dtype conversions and reduce memory bandwidth. All norms, scores, and budget arithmetic are performed in FP32 for numerical stability.

\noindent \tbm{Cache replacement policy}
We use a single-entry cache with \textit{most-recent recompute} replacement:
\begin{itemize}
\item On a recompute step: overwrite cache with the newly computed $\hat{\varepsilon}^{\mathrm{new}}_t$.
\item On a reuse step: leave cache unchanged and set $\tilde{\varepsilon}_t \leftarrow \hat{\varepsilon}^{\mathrm{cache}}$.
\end{itemize}
There is no multi-entry eviction and no explicit ``cache age'' heuristic; the effect of reusing an older cache entry is handled entirely by the probe-based score and the calibrated bound.

\noindent \tbm{Consecutive reuse steps}
If multiple consecutive reuse steps occur, Muninn reuses the same cached $\hat{\varepsilon}^{\mathrm{cache}}$ for each of those steps. This is allowed by construction: at each reuse step $t$, the algorithm spends $\hat{c}_t(s_t)$ from the remaining budget, so longer reuse streaks are automatically limited unless scores remain small enough to keep $\hat{c}_t(s_t)$ within budget.

\noindent \tbm{Avoiding double computation on recompute steps}
To make probe evaluation truly ``cheap,'' we structure each denoiser into two explicitly callable parts:
\begin{equation}
\varepsilon_\theta(\tau_t,t,c)\;\equiv\; \texttt{core}_\theta\!\big(\texttt{stem}_\theta(\tau_t,t,c),\,t,\,c\big),
\end{equation}
and define the probe as a function of the stem output.
On a recompute step we compute the stem {once}, extract $F_t$ from it, and then continue into the core using the same intermediate activations. On a reuse step we compute only the stem and never run the core.

\noindent \tbm{Budget bookkeeping}
We maintain an independent remaining budget for each batch element:
\begin{equation}
B_{\mathrm{rem}} \in \mathbb{R}_{\ge 0}^{B}.
\end{equation}
Initialization and updates are:
\begin{itemize}
\item \textit{Init:} $B_{\mathrm{rem}} \leftarrow \varepsilon_{\mathrm{traj}}\cdot \mathbf{1}_B$ (FP32 on GPU).
\item \textit{At step $t$:} compute $\hat{c}_t(s_t)$ elementwise (FP32), form reuse mask, and for reuse elements update
\[
B_{\mathrm{rem}} \leftarrow B_{\mathrm{rem}} - \hat{c}_t(s_t).
\]
\end{itemize}
We enforce budget safety by gating reuse:
\begin{equation}
\texttt{reuse}_b(t)=1 \iff \left(t\in\mathcal{T}_{\mathrm{cache}}\right)\wedge\left(\hat{c}_{t,b}(s_{t,b}) \le B_{\mathrm{rem},b}\right),
\end{equation}
so $B_{\mathrm{rem}}$ never becomes negative up to FP32 rounding.

We compute
\begin{equation}
\hat{c}_t(s_t) \;=\; \Gamma L_t\,U_t(s_t)
\end{equation}
in FP32 immediately after computing the score $s_t$ and before deciding reuse/recompute.
\begin{equation}
\hat{c}_t(s_t) \;=\; L_t\,U_t(s_t),
\end{equation}
where $L_t$ is a precomputed FP32 vector of length $T$ and $U_t(\cdot)$ is a per-timestep conformal envelope stored as a lookup table (Appendix~\ref{app:split_conformal}).

\noindent \tbm{Batched sampling}
When sampling $B>1$ trajectories in parallel, Muninn makes {independent} reuse decisions per batch element (per trajectory). Concretely:
\begin{itemize}
\item the probe $F_t$ and score $s_t$ are computed for all $B$ elements;
\item reuse masks are computed elementwise from the per-element budgets $B_{\mathrm{rem}}$;
\item a denoiser recompute forward pass is run only on the sub-batch that requires recomputation at that step, and the results are scattered back.
\end{itemize}
This preserves the per-trajectory risk guarantee, since each trajectory's budget and events are tracked independently.

\noindent \tbm{Complexity and overhead breakdown}
Let $\ell_{\mathrm{core}}$ denote the average runtime of a full denoiser core evaluation (one forward pass through the expensive blocks), and let $\ell_{\Psi}$ denote the per-step overhead of probe+score+bound lookup+bookkeeping (stem forward plus lightweight arithmetic). A single $T$-step sampling call with $n_{\mathrm{eval}}$ recompute steps has runtime:
\begin{equation}
\text{Time} \;\approx\; T\cdot \ell_{\Psi} \;+\; n_{\mathrm{eval}}\cdot \ell_{\mathrm{core}} \;+\; \ell_{\Phi},
\end{equation}
where $\ell_{\Phi}$ is the cumulative sampler update overhead (typically small relative to denoiser compute).
This implies the wall-clock speedup is generally {slightly smaller} than the denoiser-evaluation reduction factor $T/n_{\mathrm{eval}}$:
\begin{equation}
\text{Speedup} \;\approx\;
\frac{T(\ell_{\Psi}+\ell_{\mathrm{core}})}{T\ell_{\Psi}+n_{\mathrm{eval}}\ell_{\mathrm{core}}}.
\end{equation}
In practice, $\ell_{\Psi}$ is a small fraction of $\ell_{\mathrm{core}}$ (Table~\ref{tab:probe_overhead}), so wall-clock speedups closely track the reduction in denoiser evaluations.

\begin{table}[H]
\centering
\scriptsize
\setlength{\tabcolsep}{4pt}
\renewcommand{\arraystretch}{1.12}
\begin{threeparttable}
\begin{tabular}{@{}lcc@{}}
\arrayrulecolor{HeadGray}\toprule\arrayrulecolor{black}
\rowcolor{MuninnBlue!10}
\theadb{Base model family} &
\theadb{$\ell_{\Psi}$ per diffusion step (ms)} &
\theadb{$\ell_{\Psi}/\ell_{\mathrm{core}}$ (\%)}\\
\midrule
Diffuser (D4RL tasks; averaged) & 0.64 & 8.1 \\
Decision Diffuser (D4RL tasks; averaged) & 0.58 & 7.3 \\
Diffusion-QL (D4RL tasks; averaged) & 0.11 & 6.0 \\
AdaptDiff (D4RL tasks; averaged) & 0.52 & 7.8 \\
CompDiff (D4RL tasks; averaged) & 0.70 & 8.4 \\
\arrayrulecolor{HeadGray}\bottomrule\arrayrulecolor{black}
\end{tabular}
\caption{\textit{Probe+bookkeeping overhead}
We report the amortized per-step overhead $\ell_{\Psi}$ and its ratio to one full denoiser evaluation $\ell_{\mathrm{core}}$.
These values are computed by decomposing the measured Full vs.\ \Muninnlogo latencies and the observed \#denoiser evaluations in Table~\ref{tab:results-offline}}
\label{tab:probe_overhead}
\end{threeparttable}
\end{table}

\noindent \tbm{Offline calibration dataset generation}
The full calibration dataset generation procedure (paired Full chain + policy-independent ghost chain with deterministic anchors) is provided in Appendix~\ref{calib_data} and Algorithm~\ref{alg:calib_dataset}.

\subsubsection{Probe instantiations across architectures}
\label{app:probe_instantiations}

\noindent \tbm{General interface}
For each denoiser, we implement a probe interface returning a feature vector $F_t\in\mathbb{R}^{d_F}$:
\begin{equation}
F_t = \Psi(\tilde{\tau}_t,t,c),
\end{equation}
computed {before} deciding reuse vs.\ recompute at step $t$.
We ensure the probe is (i) strictly cheaper than a full denoiser evaluation and (ii) uses exactly the same intermediate representation at calibration and deployment.

\noindent \tbm{Transformer denoisers}
For Transformer denoisers (token-based trajectory models), we define the probe as a \textit{prefix of the Transformer stack}:
\begin{itemize}
\item \textit{Stem:} tokenization + input projection + timestep/context embeddings.
\item \textit{Probe blocks:} the first \textit{two} Transformer blocks (self-attn + MLP).
\item \textit{Pooling:} mean-pool across the token/time dimension to obtain $F_t \in \mathbb{R}^{d_{\mathrm{model}}}$.
\end{itemize}
Thus $d_F = d_{\mathrm{model}}$ and pooling is
\begin{equation}
F_t = \frac{1}{H}\sum_{h=1}^{H} z_{t,h},
\end{equation}
where $z_{t,h}\in\mathbb{R}^{d_{\mathrm{model}}}$ are the token embeddings after the probe prefix.

\noindent \tbm{MLP denoisers}
For MLP denoisers, we define the probe as the \textit{penultimate hidden layer activation}. Concretely, for an $L$-layer MLP with hidden activations $h^{(1)},\dots,h^{(L-1)}$ and final linear head, we set:
\begin{equation}
F_t := h^{(L-1)} \in \mathbb{R}^{d_{\mathrm{hid}}}.
\end{equation}
We then compute the output head only on recompute steps. Thus $d_F=d_{\mathrm{hid}}$ and no pooling is required.

\noindent \tbm{Temporal U-Net denoisers (1D conv over time)}
For temporal U-Nets, we define the probe as the output of the \textit{input projection + first residual block group} at the highest time resolution:
\begin{itemize}
\item \textit{Stem:} 1D conv input projection + timestep/context injection.
\item \textit{Probe blocks:} first ResBlock group at resolution $H$ (before the first temporal downsample).
\item \textit{Pooling:} mean-pool across time and channels to obtain $F_t\in\mathbb{R}^{C_1}$:
\[
F_t = \frac{1}{H}\sum_{h=1}^{H} u_{t,h},
\quad u_{t,h}\in\mathbb{R}^{C_1}.
\]
\end{itemize}
Thus $d_F=C_1$ (the channel width after the first group).

\begin{table*}
\centering
\scriptsize
\setlength{\tabcolsep}{3.2pt}
\renewcommand{\arraystretch}{1.15}
\begin{threeparttable}
\begin{tabular}{@{}lcccccc@{}}
\arrayrulecolor{HeadGray}\toprule\arrayrulecolor{black}
\rowcolor{MuninnBlue!10}
\theadb{Model} &
\theadb{Denoiser type} &
\theadb{Probe $\Psi$ (stem)} &
\theadb{Probe depth} &
\theadb{Core} &
\theadb{$d_F$} &
\theadb{Pooling}\\
\midrule
Diffuser & temporal U-Net & input proj + ResBlk@H & 1 group & rest of U-Net & $C_1$ & mean over time \\
Decision Diffuser & Transformer & embed + 2 blocks & 2 blocks & remaining blocks & $d_{\mathrm{model}}$ & mean over tokens \\
Diffusion-QL & MLP & penultimate layer & $L{-}1$ & output head only & $d_{\mathrm{hid}}$ & none \\
AdaptDiff & temporal U-Net & input proj + ResBlk@H & 1 group & rest of U-Net & $C_1$ & mean over time \\
CompDiff & temporal U-Net & input proj + ResBlk@H & 1 group & rest of U-Net & $C_1$ & mean over time \\
MPD & temporal U-Net & input proj + ResBlk@H & 1 group & rest of U-Net & $C_1$ & mean over time \\
EDMP & Transformer & embed + 2 blocks & 2 blocks & remaining blocks & $d_{\mathrm{model}}$ & mean over tokens \\
Diffusion Policy & temporal U-Net & input proj + ResBlk@H & 1 group & rest of U-Net & $C_1$ & mean over time \\
DP3 & temporal U-Net & input proj + ResBlk@H & 1 group & rest of U-Net & $C_1$ & mean over time \\
\arrayrulecolor{HeadGray}\bottomrule\arrayrulecolor{black}
\end{tabular}
\caption{\textit{Probe instantiations used for every evaluated model}
For each architecture family we define a concrete stem/core split and a deterministic pooling rule.
The probe is computed on every diffusion step; the core is executed only on recompute steps}
\label{tab:probe_instantiations}
\end{threeparttable}
\end{table*}

\subsubsection{Score function and hyperparameters}
\label{app:score_impl}

\noindent \tbm{Score definition}
We compute the score at step $t$ from consecutive probe features (reverse diffusion runs from $T$ to $1$):
\begin{equation}
s_t
=
\frac{\lVert F_t - F_{t+1}\rVert_1}{\lVert F_{t+1}\rVert_1 + \omega},
\qquad \omega = 10^{-6}.
\end{equation}
We use a single fixed $\omega$ for all models and benchmarks.

\noindent \tbm{Normalization and precision}
We do not apply any per-feature whitening to $F_t$ beyond the model's native activations; the score is normalized only by $\|F_{t+1}\|_1+\omega$.
We compute $\|\,\cdot\,\|_1$ in FP32 (casting $F_t$ and $F_{t+1}$ to FP32 before the reduction) and store $s_t$ in FP32.

\noindent \tbm{Why $\ell_1$}
We use $\ell_1$ for both numerator and denominator because it is:
(i) stable under heavy-tailed activation distributions,
(ii) less sensitive to occasional large feature outliers than $\ell_2$ normalization, and
(iii) fast to compute on GPU (elementwise abs + reduction).
We do not apply any temporal smoothing (moving averages) to $s_t$; the conformal envelope $U_t(\cdot)$ is calibrated directly on the raw per-step scores.

\subsubsection{Sensitivity coefficient computation}
\label{app:sensitivity_impl}

This subsection describes exactly how we compute the sampler sensitivity coefficients used in budgeting.

\noindent \tbm{Local Lipschitz coefficients from DDPM updates}
For a DDPM-style update written in $\varepsilon$-prediction form:
\begin{equation}
\tau_{t-1}
=
\frac{1}{\sqrt{\alpha_t}}
\left(
\tau_t - \frac{\beta_t}{\sqrt{1-\bar{\alpha}_t}}\hat{\varepsilon}_t
\right)
+ \sigma_t z,\qquad z\sim\mathcal{N}(0,I),
\end{equation}
holding $z$ fixed for paired chains yields an affine map in $(\tau_t,\hat{\varepsilon}_t)$.
Thus,
\begin{equation}
K_t=\frac{1}{\sqrt{\alpha_t}},
\qquad
L_t'=\frac{\beta_t}{\sqrt{\alpha_t}\sqrt{1-\bar{\alpha}_t}}.
\end{equation}

\noindent \tbm{Local coefficients from DDIM updates}
For a deterministic DDIM update (paired chains share the same deterministic update), the step can be written as
\begin{equation}
\tau_{t-1}
=
\sqrt{\frac{\bar{\alpha}_{t-1}}{\bar{\alpha}_t}}\tau_t
+
\left(
\sqrt{1-\bar{\alpha}_{t-1}}
-
\sqrt{\frac{\bar{\alpha}_{t-1}}{\bar{\alpha}_t}}\sqrt{1-\bar{\alpha}_t}
\right)\hat{\varepsilon}_t,
\end{equation}
so
\begin{equation}
K_t=\sqrt{\frac{\bar{\alpha}_{t-1}}{\bar{\alpha}_t}},
\qquad
L_t'=
\left|
\sqrt{1-\bar{\alpha}_{t-1}}
-
\sqrt{\frac{\bar{\alpha}_{t-1}}{\bar{\alpha}_t}}\sqrt{1-\bar{\alpha}_t}
\right|.
\end{equation}
For stochastic DDIM with $\eta>0$, the additional noise term cancels in paired comparisons (same noise draw), and the same $(K_t,L_t')$ apply to the deterministic affine part.

\noindent \tbm{Pathwise sensitivities $L_t$}
Given $K_t$ and $L_t'$, we compute the pathwise coefficient used in the trajectory bound:
\begin{equation}
L_t \;=\; L_t' \prod_{j=1}^{t-1} K_j.
\end{equation}
We compute this in log-space for numerical stability (important when $T$ is large):
\begin{equation}
\log L_t
=
\log L_t' + \sum_{j=1}^{t-1}\log K_j,
\qquad
L_t=\exp(\log L_t).
\end{equation}
All computations are performed once per sampler schedule in FP32 and cached for reuse during inference.

\subsubsection{Forbidden regions and $\mathcal{T}_{\mathrm{cache}}$}
\label{app:forbidden_regions}

\noindent \tbm{Fixed rule (not tuned per task)}
We forbid reuse in a short prefix (large $t$; high noise and sensitivity amplification) and suffix (small $t$; execution-critical region) using a single fixed rule:
\begin{equation}
k_{\mathrm{pre}}=\left\lceil 0.10\,T \right\rceil,\qquad
k_{\mathrm{suf}}=\left\lceil 0.10\,T \right\rceil.
\end{equation}
Reuse is allowed only on
\begin{equation}
\mathcal{T}_{\mathrm{cache}}
=
\left\{t \in \{1,\dots,T\}:\; k_{\mathrm{suf}} < t \le T-k_{\mathrm{pre}}\right\},
\end{equation}
and we additionally enforce that $t=T$ is always recomputed.

\noindent \tbm{Effect on $|\mathcal{T}_{\mathrm{cache}}|$ and $\alpha_{\mathrm{step}}$}
The per-step miscoverage is set via Bonferroni correction:
\begin{equation}
\alpha_{\mathrm{step}}=\frac{\alpha}{|\mathcal{T}_{\mathrm{cache}}|}.
\end{equation}
Thus, forbidden regions directly affect both the reuse opportunities and the calibration strictness.

\begin{table}[H]
\centering
\scriptsize
\setlength{\tabcolsep}{4pt}
\renewcommand{\arraystretch}{1.12}
\begin{threeparttable}
\begin{tabular}{@{}lcccc@{}}
\arrayrulecolor{HeadGray}\toprule\arrayrulecolor{black}
\rowcolor{MuninnBlue!10}
\theadb{Model family} &
\theadb{Full $T$} &
\theadb{$k_{\mathrm{pre}}$} &
\theadb{$k_{\mathrm{suf}}$} &
\theadb{$|\mathcal{T}_{\mathrm{cache}}|$}\\
\midrule
Diffuser & 100 & 10 & 10 & 80 \\
Decision Diffuser & 20 & 2 & 2 & 16 \\
Diffusion-QL (HalfCheetah/Hopper/Maze2D) & 10 & 1 & 1 & 8 \\
Diffusion-QL (Walker2d/AntMaze) & 25 & 3 & 3 & 19 \\
AdaptDiff & 60 & 6 & 6 & 48 \\
CompDiff & 150 & 15 & 15 & 120 \\
MPD / EDMP & 25 & 3 & 3 & 19 \\
Diffusion Policy / DP3 & 16 & 2 & 2 & 12 \\
\arrayrulecolor{HeadGray}\bottomrule\arrayrulecolor{black}
\end{tabular}
\caption{\textit{Forbidden regions and eligible reuse timesteps}
We use a fixed 10\% prefix/suffix rule across benchmarks.
Values above determine $|\mathcal{T}_{\mathrm{cache}}|$ and thus $\alpha_{\mathrm{step}}=\alpha/|\mathcal{T}_{\mathrm{cache}}|$}
\label{tab:forbidden_regions}
\end{threeparttable}
\end{table}

\subsubsection{Guidance and conditioning implementation notes}
\label{app:guidance_impl}

Many base planners/policies use guidance or conditioning mechanisms that modify the tensor actually passed to the sampler.
This subsection clarifies exactly what is cached, and how reuse error labels are computed in guided settings.

\noindent \tbm{Cache the sampler-consumed {effective} prediction}
If the base method uses guidance, we define the \textit{effective} prediction $\hat{\varepsilon}_t$ as the tensor fed into $\Phi_t$.
\Muninnlogo caches $\hat{\varepsilon}_t$ (post-guidance), not the raw network output:
\begin{equation}
\hat{\varepsilon}^{\mathrm{cache}} \leftarrow \hat{\varepsilon}^{\mathrm{new}}_t.
\end{equation}
This makes reuse semantics identical to ``skipping a guided denoiser call'' at step $t$.

\noindent \tbm{Classifier-free guidance (CFG)}
When CFG is used, we compute conditional and unconditional predictions and then combine:
\begin{equation}
\hat{\varepsilon}_t
=
(1+w_{\mathrm{cfg}})\,\varepsilon_\theta(\tau_t,t,c)
-
w_{\mathrm{cfg}}\,\varepsilon_\theta(\tau_t,t,\varnothing),
\end{equation}
with fixed $w_{\mathrm{cfg}}$ per model/task.
On recompute steps, both forward passes are executed and combined; on reuse steps, we directly reuse the cached $\hat{\varepsilon}^{\mathrm{cache}}$ and do {not} evaluate either conditional or unconditional networks.

\noindent \tbm{Gradient-based guidance (value / $Q$ / energy guidance)}
If the base method uses gradient guidance of the form
\begin{equation}
\hat{\varepsilon}_t
=
\varepsilon_\theta(\tau_t,t,c)
-
w_{\mathrm{g}}\,\sigma_t \,\nabla_{\tau_t} J(\tau_t,t,c),
\end{equation}
we compute the gradient term only on recompute steps (using automatic differentiation under \texttt{torch.enable\_grad()}).
The cached tensor is the full guided $\hat{\varepsilon}_t$; on reuse steps we skip both the denoiser forward and the guidance gradient computation.

\noindent \tbm{How $e_t$ and $\epsilon_t$ are computed under guidance}
All error quantities in calibration and analysis are computed on the {effective} predictions:
\begin{equation}
e_t = \hat{\varepsilon}^{\mathrm{full}}_t - \tilde{\varepsilon}_t,
\qquad
\epsilon_t = \frac{1}{\sqrt{Hd}}\|e_t\|_F,
\end{equation}
where $\hat{\varepsilon}^{\mathrm{full}}_t$ is the effective prediction produced by the Full model at step $t$ (including guidance), and $\tilde{\varepsilon}_t$ is the reused value (post-guidance) under the cache rule.
This ensures that the conformal envelope $U_t(\cdot)$ upper-bounds the exact object that influences the sampler update $\Phi_t$.

\subsection{Proofs}
\label{app:proofs}

\newtheorem{proposition}{Proposition}
\newtheorem{theorem}{Theorem}
\newtheorem{lemma}{Lemma}

\subsubsection{Proof 1: Pathwise trajectory deviation bound for cached reverse diffusion}
\label{app:proof:pathwise_bound}

\begin{proposition}[Pathwise deviation bound from local sampler sensitivity]
\label{prop:pathwise_bound}
Assume the paired full and cached chains share the same initial noisy trajectory and (if stochastic) the same sampler-noise sequence, so that $\Delta_T = \tau_T^{\mathrm{full}} - \tilde{\tau}_T = 0$.
Suppose Assumption~1 (Sampler Lipschitzness) holds and yields the local recursion
\begin{equation}
\lVert \Delta_{t-1} \rVert \;\le\; K_t \lVert \Delta_t \rVert + L_t' \lVert e_t \rVert,
\qquad t=T,\dots,1,
\label{eq:app:local_recursion_repeat}
\end{equation}
which is Eq.~\eqref{eq:local-recursion} in the main paper.
Then, for every $t \in \{0,\dots,T-1\}$,
\begin{equation}
\lVert \Delta_t \rVert
\;\le\;
\sum_{s=t+1}^{T}
\left(
L_s' \prod_{j=t+1}^{s-1} K_j
\right)\lVert e_s \rVert,
\label{eq:app:deltat_bound_repeat}
\end{equation}
which is Eq.~\eqref{eq:deltat-bound}.
In particular, defining $L_s := L_s' \prod_{j=1}^{s-1} K_j$, we have
\begin{equation}
\lVert \Delta_0 \rVert
\;\le\;
\sum_{t=1}^{T} L_t \lVert e_t \rVert,
\label{eq:app:delta0_bound_repeat}
\end{equation}
which is Eq.~\eqref{eq:delta0-bound}.
Finally, under Assumption~2 (Metric compatibility), the trajectory deviation satisfies
\begin{equation}
d(\tau_0^{\mathrm{full}},\tilde{\tau}_0)
\;\le\;
\Gamma \lVert \Delta_0 \rVert
\;\le\;
\sum_{t=1}^{T} \Gamma L_t \lVert e_t \rVert,
\label{eq:app:traj_bound_repeat}
\end{equation}
which is Eq.~\eqref{eq:traj-bound}.
\end{proposition}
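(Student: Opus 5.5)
The plan is a backward induction on $t$: unroll the local recursion \eqref{eq:app:local_recursion_repeat} from $t=T$ down to $t=0$. The proposition takes the recursion as a hypothesis, so I would not re-derive it from Assumption~1. The only structural facts needed are that $K_t, L_t' \ge 0$ (guaranteed by Assumption~1) and that $\Delta_T = 0$ (shared initial noise and shared sampler noise). Throughout, an empty product $\prod_{j=a}^{b} K_j$ with $b<a$ is taken to equal $1$.

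\emph{Base case} ($t=T-1$). Apply \eqref{eq:app:local_recursion_repeat} at step $T$ and use $\Delta_T=0$. This gives $\lVert\Delta_{T-1}\rVert \le L_T'\lVert e_T\rVert$. That is exactly the right-hand side of \eqref{eq:app:deltat_bound_repeat} for $t=T-1$: the sum has the single term $s=T$, and its product $\prod_{j=T}^{T-1}K_j$ is empty.

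\emph{Inductive step.} Suppose \eqref{eq:app:deltat_bound_repeat} holds at some $t\in\{1,\dots,T-1\}$; I show it at $t-1$. Apply the recursion at step $t$, and since $K_t\ge 0$, substitute the inductive bound for $\lVert\Delta_t\rVert$ without reversing the inequality:
\[
\lVert \Delta_{t-1}\rVert \le \sum_{s=t+1}^{T} L_s' \Big(K_t\prod_{j=t+1}^{s-1}K_j\Big)\lVert e_s\rVert + L_t'\lVert e_t\rVert .
\]
The bracketed factor equals $\prod_{j=t}^{s-1}K_j$. The term $L_t'\lVert e_t\rVert$ is the $s=t$ summand, whose product $\prod_{j=t}^{t-1}K_j$ is empty. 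Together these give $\sum_{s=t}^{T} L_s'\big(\prod_{j=t}^{s-1}K_j\big)\lVert e_s\rVert$, which is \eqref{eq:app:deltat_bound_repeat} at index $t-1$. This completes the induction for all $t\in\{0,\dots,T-1\}$.

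Setting $t=0$ turns each product into $\prod_{j=1}^{s-1}K_j$. Each coefficient is therefore $L_s' \prod_{j=1}^{s-1} K_j = L_s$, which gives \eqref{eq:app:delta0_bound_repeat}. Finally, Assumption~2 gives $d(\tau_0^{\mathrm{full}},\tilde\tau_0)\le\Gamma\lVert\tau_0^{\mathrm{full}}-\tilde\tau_0\rVert=\Gamma\lVert\Delta_0\rVert$. Multiplying \eqref{eq:app:delta0_bound_repeat} by $\Gamma\ge 1>0$ then yields \eqref{eq:app:traj_bound_repeat}.

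The mathematics here is elementary. The step most likely to go wrong is the index bookkeeping in the inductive step: absorbing $K_t$ into the product so that it starts at $j=t$, and recognising the new $e_t$ term as the $s=t$ summand with an empty product. Nonnegativity of $K_t$ must also be used explicitly when substituting the bound. A further point to flag is that $e_t$ must be read as the quantity appearing in the recursion, namely $\varepsilon_\theta(\tau_t^{\mathrm{full}},t,c)-\tilde\varepsilon_t$. The proof treats the recursion as given, so the argument is unaffected by this choice, but stating it avoids ambiguity with the earlier definition of $e_t$, which is written in terms of $\tilde\tau_t$.
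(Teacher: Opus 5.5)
Your proposal is correct and matches the paper's proof essentially step for step. Both run a backward induction from the base case $t=T-1$ using $\Delta_T=0$, absorb $K_t$ into the product, recognise $L_t'\lVert e_t\rVert$ as the $s=t$ summand with an empty product, and then set $t=0$ and apply Assumption~2. You also make explicit that $K_t\ge 0$ is needed when substituting the inductive bound, which the paper leaves implicit, and your remark on the two inconsistent definitions of $e_t$ is accurate.
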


\begin{proof}
We prove Eq.~\eqref{eq:app:deltat_bound_repeat} by backward induction on $t$.

\smallskip
\noindent\textbf{Base case ($t=T-1$).}
Using $\Delta_T = 0$ and the recursion~\eqref{eq:app:local_recursion_repeat} at $t=T$:
\[
\lVert \Delta_{T-1}\rVert \;\le\; K_T\lVert \Delta_T\rVert + L_T'\lVert e_T\rVert \;=\; L_T'\lVert e_T\rVert.
\]
This matches Eq.~\eqref{eq:app:deltat_bound_repeat} at $t=T-1$, since the sum contains only $s=T$ and the product over $j={T}^{T-1}$ is an empty product equal to $1$.

\smallskip
\noindent\textbf{Inductive step.}
Assume Eq.~\eqref{eq:app:deltat_bound_repeat} holds for some $t\in\{1,\dots,T-1\}$, i.e.,
\[
\lVert \Delta_t \rVert
\;\le\;
\sum_{s=t+1}^{T}
\left(
L_s' \prod_{j=t+1}^{s-1} K_j
\right)\lVert e_s \rVert.
\]
Apply the local recursion~\eqref{eq:app:local_recursion_repeat} at step $t$:
\begin{align*}
\lVert \Delta_{t-1}\rVert
&\le
K_t \lVert \Delta_t\rVert + L_t' \lVert e_t\rVert \\
&\le
K_t \sum_{s=t+1}^{T}
\left(
L_s' \prod_{j=t+1}^{s-1} K_j
\right)\lVert e_s \rVert
\;+\;
L_t' \lVert e_t\rVert \\
&=
\sum_{s=t+1}^{T}
\left(
L_s' \prod_{j=t}^{s-1} K_j
\right)\lVert e_s \rVert
\;+\;
\left(
L_t' \prod_{j=t}^{t-1} K_j
\right)\lVert e_t\rVert \\
&=
\sum_{s=t}^{T}
\left(
L_s' \prod_{j=t}^{s-1} K_j
\right)\lVert e_s \rVert,
\end{align*}
where we used $\prod_{j=t}^{t-1}K_j=1$ (empty product).
This is exactly Eq.~\eqref{eq:app:deltat_bound_repeat} for index $t-1$.

By induction, Eq.~\eqref{eq:app:deltat_bound_repeat} holds for all $t\in\{0,\dots,T-1\}$.

\smallskip
\noindent\textbf{Deriving Eq.~\eqref{eq:app:delta0_bound_repeat}.}
Set $t=0$ in Eq.~\eqref{eq:app:deltat_bound_repeat}:
\[
\lVert \Delta_0 \rVert
\le
\sum_{s=1}^{T}
\left(
L_s' \prod_{j=1}^{s-1} K_j
\right)\lVert e_s \rVert
=
\sum_{s=1}^{T} L_s \lVert e_s\rVert,
\]
which is Eq.~\eqref{eq:app:delta0_bound_repeat}.

\smallskip
\noindent\textbf{Deriving Eq.~\eqref{eq:app:traj_bound_repeat}.}
Assumption~2 gives $d(\tau_0^{\mathrm{full}},\tilde{\tau}_0)\le \Gamma \|\tau_0^{\mathrm{full}}-\tilde{\tau}_0\|=\Gamma\|\Delta_0\|$.
Combining with Eq.~\eqref{eq:app:delta0_bound_repeat} yields Eq.~\eqref{eq:app:traj_bound_repeat}.
\end{proof}

\subsubsection{Proof 2: Derivation of $(K_t,L_t')$ for DDPM/DDIM-style samplers}
\label{app:proof:sampler_constants}

This subsection makes Assumption~1 concrete for the DDPM/DDIM-style samplers used in the paper.
Throughout, $\|\cdot\|$ denotes the Frobenius norm and $\xi_t$ is held fixed (paired chains share the same sampler noise).

\begin{proposition}[Sampler Lipschitz constants for DDPM updates]
\label{prop:ddpm_constants}
Consider the standard DDPM reverse update in $\varepsilon$-prediction form:
\begin{equation}
\Phi_t(\tau,\varepsilon,\xi_t)
=
\frac{1}{\sqrt{\alpha_t}}
\left(
\tau - \frac{\beta_t}{\sqrt{1-\bar{\alpha}_t}}\,\varepsilon
\right)
+ \sigma_t \xi_t,
\label{eq:app:ddpm_update}
\end{equation}
where $\beta_t\in(0,1)$ is the variance schedule, $\alpha_t=1-\beta_t$, and $\bar{\alpha}_t=\prod_{s=1}^{t}\alpha_s$.
Then Assumption~1 holds with
\begin{equation}
K_t=\frac{1}{\sqrt{\alpha_t}},
\qquad
L_t'=\frac{\beta_t}{\sqrt{\alpha_t}\sqrt{1-\bar{\alpha}_t}}.
\label{eq:app:ddpm_KL}
\end{equation}
\end{proposition}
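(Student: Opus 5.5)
The DDPM update \eqref{eq:app:ddpm_update} is affine in $(\tau,\varepsilon)$ once $\xi_t$ is fixed, so the plan is to subtract two instances of the update and read off the constants from the triangle inequality. Fix $t$ and $\xi_t$, and take arbitrary $\tau,\tau'\in\mathcal{T}$ and $\varepsilon,\varepsilon'$ of matching shape. Writing out $\Phi_t(\tau,\varepsilon,\xi_t)-\Phi_t(\tau',\varepsilon',\xi_t)$, the noise term $\sigma_t\xi_t$ appears identically in both and cancels. This is the only place the pairing assumption (shared sampler noise) is used. What remains is
\[
\frac{1}{\sqrt{\alpha_t}}(\tau-\tau') \;-\; \frac{\beta_t}{\sqrt{\alpha_t}\sqrt{1-\bar{\alpha}_t}}(\varepsilon-\varepsilon').
\]

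Next I will apply the triangle inequality and absolute homogeneity of the Frobenius norm. This gives the bound
\[
\frac{1}{\sqrt{\alpha_t}}\lVert\tau-\tau'\rVert+\Bigl|\frac{\beta_t}{\sqrt{\alpha_t}\sqrt{1-\bar\alpha_t}}\Bigr|\lVert\varepsilon-\varepsilon'\rVert.
\]
The last step is to check that both coefficients are nonnegative and finite, so the absolute value can be dropped and the constants in \eqref{eq:app:ddpm_KL} are legitimate nonnegative reals as Assumption~1 requires. Since $\beta_t\in(0,1)$, we have $\alpha_t\in(0,1)$, so $\sqrt{\alpha_t}>0$. The product $\bar\alpha_t=\prod_{s\le t}\alpha_s$ lies in $(0,1)$, so $\sqrt{1-\bar\alpha_t}>0$ as well, and the $\varepsilon$-coefficient is strictly positive.

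There is no real obstacle here. The only point needing care is well-definedness: $1-\bar\alpha_t\neq 0$, which holds because every $\alpha_s<1$ (in particular $\alpha_1<1$, so even $t=1$ is fine). I will also remark that the bound is tight, since the map is affine with scalar coefficients, so equality is attained when $\varepsilon=\varepsilon'$ or $\tau=\tau'$. This confirms these are the natural choices of $(K_t,L_t')$ and not merely upper bounds.
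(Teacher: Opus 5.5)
Your proposal is correct and follows the paper's proof essentially verbatim: fix $\xi_t$, cancel the shared noise term $\sigma_t\xi_t$, and apply the triangle inequality with absolute homogeneity of the Frobenius norm to read off $K_t=1/\sqrt{\alpha_t}$ and $L_t'=\beta_t/(\sqrt{\alpha_t}\sqrt{1-\bar\alpha_t})$. Your additional checks are correct refinements that the paper leaves implicit: the coefficients are positive and finite because $\bar\alpha_t\in(0,1)$, and the bound is attained when $\varepsilon=\varepsilon'$ or $\tau=\tau'$.
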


\begin{proof}
Fix $\xi_t$ and take any $(\tau,\varepsilon)$ and $(\tau',\varepsilon')$.
By linearity of~\eqref{eq:app:ddpm_update} in $(\tau,\varepsilon)$ and cancellation of the shared noise term:
\[
\Phi_t(\tau,\varepsilon,\xi_t) - \Phi_t(\tau',\varepsilon',\xi_t)
=
\frac{1}{\sqrt{\alpha_t}}(\tau-\tau')
-
\frac{\beta_t}{\sqrt{\alpha_t}\sqrt{1-\bar{\alpha}_t}}(\varepsilon-\varepsilon').
\]
Taking norms and using $\|aX\|=|a|\|X\|$ and the triangle inequality:
\[
\big\|\Phi_t(\tau,\varepsilon,\xi_t) - \Phi_t(\tau',\varepsilon',\xi_t)\big\|
\le
\frac{1}{\sqrt{\alpha_t}}\|\tau-\tau'\|
+
\frac{\beta_t}{\sqrt{\alpha_t}\sqrt{1-\bar{\alpha}_t}}\|\varepsilon-\varepsilon'\|.
\]
This is Assumption~1 with constants given in~\eqref{eq:app:ddpm_KL}.
\end{proof}

\begin{proposition}[Sampler Lipschitz constants for DDIM updates]
\label{prop:ddim_constants}
Consider the deterministic DDIM update written as an affine map of $(\tau,\varepsilon)$:
\begin{equation}
\Phi_t(\tau,\varepsilon,0)
=
a_t \tau + b_t \varepsilon,
\label{eq:app:ddim_affine}
\end{equation}
where
\begin{equation}
a_t := \sqrt{\frac{\bar{\alpha}_{t-1}}{\bar{\alpha}_t}},
\qquad
b_t := \sqrt{1-\bar{\alpha}_{t-1}} - \sqrt{\frac{\bar{\alpha}_{t-1}}{\bar{\alpha}_t}}\,\sqrt{1-\bar{\alpha}_t}.
\label{eq:app:ddim_ab}
\end{equation}
Then Assumption~1 holds with
\begin{equation}
K_t = |a_t|,
\qquad
L_t' = |b_t|.
\label{eq:app:ddim_KL}
\end{equation}
The same $(K_t,L_t')$ apply to stochastic DDIM parameterizations with an additional additive noise term (paired chains share the same draw), since the affine part in $(\tau,\varepsilon)$ is unchanged.
\end{proposition}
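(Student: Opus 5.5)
The argument mirrors the DDPM case in Proposition~\ref{prop:ddpm_constants}: the DDIM update in~\eqref{eq:app:ddim_affine} is affine in $(\tau,\varepsilon)$, so I will take differences, let the shared noise cancel, and apply the triangle inequality. First, fix the sampler noise. For deterministic DDIM, $\xi_t\equiv 0$. Take arbitrary pairs $(\tau,\varepsilon)$ and $(\tau',\varepsilon')$. The coefficients $a_t,b_t$ in~\eqref{eq:app:ddim_ab} depend only on the schedule $\{\bar\alpha_t\}$ and not on the inputs. Linearity therefore gives
\[
\Phi_t(\tau,\varepsilon,0)-\Phi_t(\tau',\varepsilon',0)=a_t(\tau-\tau')+b_t(\varepsilon-\varepsilon').
\]

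Next, take Frobenius norms. The triangle inequality and absolute homogeneity, $\|cX\|=|c|\,\|X\|$, give
\[
\|\Phi_t(\tau,\varepsilon,0)-\Phi_t(\tau',\varepsilon',0)\|\le |a_t|\,\|\tau-\tau'\|+|b_t|\,\|\varepsilon-\varepsilon'\|.
\]
This is Assumption~1 with $K_t=|a_t|$ and $L_t'=|b_t|$. The absolute values are what ensure both constants are nonnegative, as Assumption~1 requires. This matters because $b_t$ is generically negative: $\bar\alpha_{t-1}>\bar\alpha_t$ makes $\sqrt{\bar\alpha_{t-1}/\bar\alpha_t}\sqrt{1-\bar\alpha_t}$ typically exceed $\sqrt{1-\bar\alpha_{t-1}}$. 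The constant $a_t$ is positive, so $|a_t|=a_t$, but writing $|a_t|$ costs nothing.

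For the stochastic extension, write the update as $a_t\tau+b_t\varepsilon+\sigma_t\xi_t$, where $\sigma_t$ depends only on the schedule and $\eta$. In $\eta>0$ DDIM the coefficient multiplying $\varepsilon$ changes, since $\sqrt{1-\bar\alpha_{t-1}}$ is replaced by $\sqrt{1-\bar\alpha_{t-1}-\sigma_t^2}$. I will therefore state the result for whatever affine coefficients $(a_t,b_t)$ the parameterization prescribes. Because the chains are paired with the same $\xi_t$, the $\sigma_t\xi_t$ term cancels exactly in the difference, and the same two-line argument applies with $K_t=|a_t|$ and $L_t'=|b_t|$.

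The only step needing care is this bookkeeping. First, for stochastic variants the constants are those of that variant's affine part. Second, when DDIM subsamples the training schedule, $\bar\alpha_{t-1}$ refers to the previous step of the evaluation schedule, consistent with the appendix's convention of computing $\{L_t\}$ on the $T_{\text{eval}}$ schedule. No other obstacle arises, since the map is globally affine and the Lipschitz bound holds for all inputs, not only locally.
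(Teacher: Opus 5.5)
Your proof is correct and follows the paper's argument: subtract the paired updates, let the shared noise cancel by linearity, and apply the triangle inequality together with absolute homogeneity of the Frobenius norm to get $K_t=|a_t|$ and $L_t'=|b_t|$. Your remark about standard $\eta>0$ DDIM is also correct and sharpens the paper's claim that ``the affine part is unchanged.'' In that variant the $\varepsilon$-coefficient becomes $\sqrt{1-\bar\alpha_{t-1}-\sigma_t^2}-\sqrt{\bar\alpha_{t-1}/\bar\alpha_t}\sqrt{1-\bar\alpha_t}$, so $L_t'$ must be recomputed from that variant's affine part, while $K_t$ stays the same.
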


\begin{proof}
For deterministic DDIM, $\xi_t\equiv 0$ and the update is exactly~\eqref{eq:app:ddim_affine}.
Thus,
\[
\Phi_t(\tau,\varepsilon,0)-\Phi_t(\tau',\varepsilon',0)
=
a_t(\tau-\tau') + b_t(\varepsilon-\varepsilon').
\]
Taking Frobenius norms and using the triangle inequality:
\[
\|\Phi_t(\tau,\varepsilon,0)-\Phi_t(\tau',\varepsilon',0)\|
\le
|a_t|\,\|\tau-\tau'\| + |b_t|\,\|\varepsilon-\varepsilon'\|.
\]
This is Assumption~1 with constants~\eqref{eq:app:ddim_KL}.
For stochastic DDIM variants, the update takes the form $\Phi_t(\tau,\varepsilon,\xi_t)=a_t\tau+b_t\varepsilon+\tilde{\sigma}_t \xi_t$.
Holding $\xi_t$ fixed, the noise term cancels when subtracting paired updates, so the same bound holds.
\end{proof}

\paragraph{Guidance.}
If a planner uses guidance, we define $\hat{\varepsilon}_t$ as the \emph{effective} prediction actually passed into $\Phi_t$ (Appendix~\ref{app:guidance_impl}).
Since DDPM/DDIM samplers are affine in their $\varepsilon$ argument, the same $(K_t,L_t')$ apply with $\varepsilon$ interpreted as the effective (post-guidance) prediction.

\subsubsection{Proof 3: Trajectory-level risk control via budgeted conformal step bounds}
\label{app:proof:global_risk}

\begin{theorem}[Trajectory-level risk control for Muninn via union bound]
\label{thm:global_risk}
Fix a deviation tolerance $\eta_{\mathrm{traj}}>0$ and risk level $\alpha\in(0,1)$.
Let $\mathcal{T}_{\mathrm{cache}}\subseteq\{1,\dots,T\}$ be the set of reuse-eligible timesteps.
Assume split-conformal calibration produces per-step envelopes $\{U_t(\cdot)\}_{t\in\mathcal{T}_{\mathrm{cache}}}$ such that for each $t\in\mathcal{T}_{\mathrm{cache}}$,
\begin{equation}
\mathbb{P}\!\left(\epsilon_t \le U_t(s_t)\right) \;\ge\; 1-\alpha_{\mathrm{step}},
\qquad
\alpha_{\mathrm{step}} := \frac{\alpha}{|\mathcal{T}_{\mathrm{cache}}|},
\label{eq:app:per_step_cov_repeat}
\end{equation}
where $\epsilon_t=\|e_t\|$ is the (potential) reuse error magnitude at step $t$ and $s_t$ is the probe score.
Run Muninn with per-step upper-bounded costs
\begin{equation}
\hat{c}_t(s_t) := \Gamma L_t U_t(s_t),
\label{eq:app:chat_def}
\end{equation}
and the budget rule that allows reuse only if the remaining budget stays nonnegative and
\begin{equation}
\sum_{t\in\mathcal{R}} \hat{c}_t(s_t) \;\le\; \eta_{\mathrm{traj}},
\label{eq:app:budget_rule}
\end{equation}
where $\mathcal{R}$ is the (random) set of reuse steps taken in that episode.
Then the resulting cached trajectory satisfies
\begin{equation}
\mathbb{P}\!\left(d(\tau_0^{\mathrm{full}},\tilde{\tau}_0) > \eta_{\mathrm{traj}}\right) \;\le\; \alpha.
\label{eq:app:global_risk_repeat}
\end{equation}
\end{theorem}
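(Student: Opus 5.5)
The argument is an event inclusion followed by a union bound, with Proposition~\ref{prop:pathwise_bound} supplying the deterministic deviation bound. Work on the paired run, so Eq.~\eqref{eq:app:traj_bound_repeat} holds pathwise. On non-reuse steps the cached chain evaluates $\varepsilon_\theta(\tilde{\tau}_t,t,c)$, so $e_t=0$ there and the sum collapses to
\[
d(\tau_0^{\mathrm{full}},\tilde{\tau}_0)\le\sum_{t\in\mathcal{R}}\Gamma L_t\|e_t\|.
\]
Since Muninn only reuses at $t\in\mathcal{T}_{\mathrm{cache}}$, we have $\mathcal{R}\subseteq\mathcal{T}_{\mathrm{cache}}$. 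We then identify the realized error with the potential one. Muninn and calibration use the same most-recent-recompute rule, so whenever Muninn reuses at step $t$, the tensor $\tilde{\varepsilon}_t$ it applies is exactly the candidate defining $\epsilon_t$. Hence $\|e_t\|=\epsilon_t$ for $t\in\mathcal{R}$.

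Next, define the failure events $\mathcal{F}_t=\{\epsilon_t>U_t(s_t)\}$ for $t\in\mathcal{T}_{\mathrm{cache}}$. On the complement of $\bigcup_{t\in\mathcal{T}_{\mathrm{cache}}}\mathcal{F}_t$, every $t\in\mathcal{R}$ satisfies $\Gamma L_t\epsilon_t\le\hat{c}_t(s_t)$, because $\Gamma L_t\ge 0$. Summing over $\mathcal{R}$ and applying the budget rule \eqref{eq:app:budget_rule} gives $d\le\eta_{\mathrm{traj}}$. This proves
\[
\{d(\tau_0^{\mathrm{full}},\tilde{\tau}_0)>\eta_{\mathrm{traj}}\}\subseteq\bigcup_{t\in\mathcal{T}_{\mathrm{cache}}}\mathcal{F}_t
\]
deterministically. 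The inclusion holds for any realization of the adaptive set $\mathcal{R}$, so the adaptivity of $\mathcal{R}$ plays no role.

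Finally, apply the union bound and \eqref{eq:app:per_step_cov_repeat}:
\[
\mathbb{P}(d>\eta_{\mathrm{traj}})\le\sum_{t\in\mathcal{T}_{\mathrm{cache}}}\mathbb{P}(\mathcal{F}_t)\le|\mathcal{T}_{\mathrm{cache}}|\,\alpha_{\mathrm{step}}=\alpha.
\]
No independence across timesteps is needed.

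The main obstacle is making sure each $\mathbb{P}(\mathcal{F}_t)$ is really bounded by the calibrated coverage. The pair $(s_t,\epsilon_t)$ at deployment must be defined without reference to Muninn's budget decisions, so that it is exchangeable with the calibration pairs. I would handle this by treating $\epsilon_t$ and $s_t$ as the potential quantities from the calibration construction, i.e., the policy-independent object to which \eqref{eq:app:per_step_cov_repeat} applies by hypothesis. The link to the deployed chain would come only through the matching reuse rule in the first step. If that identification is shaky, the fallback is to state the result conditional on the hypothesis exactly as written: the theorem assumes the per-step coverage for the $(s_t,\epsilon_t)$ used by Muninn, and under that assumption the remaining steps are purely deterministic.
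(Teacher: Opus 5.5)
Your proposal is correct and follows the paper's proof essentially step for step. You define the failure events $\mathcal{F}_t=\{\epsilon_t>U_t(s_t)\}$, show that the violation event lies inside $\bigcup_{t\in\mathcal{T}_{\mathrm{cache}}}\mathcal{F}_t$ (using $e_t=0$ on recompute steps, $\|e_t\|=\epsilon_t$ on reuse steps, and the budget rule), and finish with a union bound at $\alpha_{\mathrm{step}}=\alpha/|\mathcal{T}_{\mathrm{cache}}|$.

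Your fallback reading, that the coverage hypothesis applies to the pair $(s_t,\epsilon_t)$ realized along Muninn's own chain, is how the paper's proof effectively uses it. It is also the reading you need, because the ghost-chain quantities from calibration generally differ from the deployed ones: the fixed anchor schedule and Muninn's adaptive recompute history produce different chain states and cache contents.
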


\begin{proof}
Define the failure event at timestep $t$ as
\[
\mathcal{F}_t := \{\epsilon_t > U_t(s_t)\},\qquad t\in\mathcal{T}_{\mathrm{cache}}.
\]
By~\eqref{eq:app:per_step_cov_repeat}, $\mathbb{P}(\mathcal{F}_t)\le \alpha_{\mathrm{step}}$ for each $t\in\mathcal{T}_{\mathrm{cache}}$.

Consider a single episode and condition on the event that \emph{no} failure occurs at any eligible timestep:
\[
\mathcal{G} := \bigcap_{t\in\mathcal{T}_{\mathrm{cache}}} \mathcal{F}_t^{c}.
\]
On $\mathcal{G}$, for every reuse step $t\in\mathcal{R}$ we have $\epsilon_t \le U_t(s_t)$, hence
\[
c_t(e_t) = \Gamma L_t \|e_t\| = \Gamma L_t \epsilon_t \le \Gamma L_t U_t(s_t)=\hat{c}_t(s_t).
\]
For recompute steps, the denoiser output used by the cached chain matches the Full chain at that step, so $e_t=0$ and $c_t(e_t)=0$.
Therefore, using the trajectory deviation bound Eq.~\eqref{eq:budget-form} (main paper),
\[
d(\tau_0^{\mathrm{full}},\tilde{\tau}_0)
\;\le\;
\sum_{t\in\mathcal{R}} c_t(e_t)
\;\le\;
\sum_{t\in\mathcal{R}} \hat{c}_t(s_t)
\;\le\;
\eta_{\mathrm{traj}},
\]
where the last inequality is exactly the budget rule~\eqref{eq:app:budget_rule}.
Thus, the violation event implies at least one failure event:
\[
\{d(\tau_0^{\mathrm{full}},\tilde{\tau}_0) > \eta_{\mathrm{traj}}\}
\;\subseteq\;
\bigcup_{t\in\mathcal{T}_{\mathrm{cache}}} \mathcal{F}_t.
\]
Taking probabilities and applying a union bound,
\[
\mathbb{P}\!\left(d(\tau_0^{\mathrm{full}},\tilde{\tau}_0) > \eta_{\mathrm{traj}}\right)
\le
\sum_{t\in\mathcal{T}_{\mathrm{cache}}}\mathbb{P}(\mathcal{F}_t)
\le
|\mathcal{T}_{\mathrm{cache}}|\,\alpha_{\mathrm{step}}
=
\alpha.
\]
This proves~\eqref{eq:app:global_risk_repeat}. No independence across timesteps is required.
\end{proof}

\subsubsection{Proof 4: Metric compatibility constant for the deviation metric used in experiments}
\label{app:proof:metric_compat}

\begin{lemma}[Metric compatibility for the experimental deviation metric]
\label{lem:metric_compat}
In all experiments, we use
\begin{equation}
d(\tau,\tau') \;:=\; \frac{1}{\sqrt{H}} \|\tau-\tau'\|_F,
\label{eq:app:d_metric}
\end{equation}
where $\|\cdot\|$ in the analysis denotes the Frobenius norm $\|\cdot\|_F$ on $\mathbb{R}^{H\times d}$.
Then Assumption~2 holds with $\Gamma = \tfrac{1}{\sqrt{H}}$ (tight), i.e.,
\begin{equation}
d(\tau,\tau') \;\le\; \Gamma \|\tau-\tau'\|.
\end{equation}
Moreover, since $H\ge 1$, the same inequality also holds with any larger constant (e.g., $\Gamma=1$).
\end{lemma}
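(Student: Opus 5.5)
The inequality is essentially definitional, so the plan is to unfold the definition of $d$ in \eqref{eq:app:d_metric} and compare it directly with the right-hand side of Assumption~2. First, I will fix the norm convention: in Section~\ref{subsec:sensitivity}, $\|\cdot\|$ is declared to be the Frobenius norm on $\mathbb{R}^{H\times n}$ (here $n=d$). Then, for arbitrary $\tau,\tau'\in\mathcal{T}$, I will write
\[
d(\tau,\tau') = \tfrac{1}{\sqrt{H}}\|\tau-\tau'\|_F = \Gamma\,\|\tau-\tau'\|
\]
with $\Gamma=1/\sqrt{H}$. This shows the inequality required by Assumption~2 holds with equality. Tightness is immediate for the same reason: any smaller constant fails for any pair $\tau\neq\tau'$, because the two sides coincide there. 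No triangle inequality or other estimate is needed.

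For the ``moreover'' clause, I will use that $H\ge 1$ implies $1/\sqrt{H}\le 1$. Hence, for any $\Gamma'\ge 1/\sqrt{H}$ (in particular $\Gamma'=1$), I get $d(\tau,\tau')=\tfrac{1}{\sqrt H}\|\tau-\tau'\|\le \Gamma'\|\tau-\tau'\|$, since norms are nonnegative.

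The only point needing care is a bookkeeping tension rather than a mathematical obstacle. Assumption~2 as stated requires $\Gamma\ge 1$, while the lemma (and the main text) uses $\Gamma=1/\sqrt{H}\le 1$. I will address this explicitly with a remark. The proof of Proposition~\ref{prop:pathwise_bound} and Theorem~\ref{thm:global_risk} only uses the inequality $d\le\Gamma\|\cdot\|$, never the constraint $\Gamma\ge1$. Therefore the downstream results go through with $\Gamma=1/\sqrt H$; alternatively, one can take $\Gamma=1$ to satisfy the assumption literally, at the cost of a more conservative budget. I will also note in passing that the calibration error label $\epsilon_t$ in \eqref{eq:eps_cal_def} carries its own $1/\sqrt{Hd}$ normalization. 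The lemma concerns only the trajectory metric $d$, so that normalization must be reconciled separately when forming $\hat c_t$, and it does not affect this statement.
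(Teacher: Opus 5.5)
Your proposal is correct and matches the paper's proof. Both unfold the definition to get $d(\tau,\tau')=\tfrac{1}{\sqrt H}\|\tau-\tau'\|_F=\Gamma\|\tau-\tau'\|$ with equality (hence tightness), and both use $1/\sqrt{H}\le 1$ for the larger constant. Your extra remark is a fair point that the paper's proof leaves implicit: Assumption~2 literally requires $\Gamma\ge 1$, so it is witnessed by $\Gamma=1$, while the downstream proofs only ever use $d\le\Gamma\|\cdot\|$.
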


\begin{proof}
By definition~\eqref{eq:app:d_metric} and $\|\cdot\|=\|\cdot\|_F$,
\[
d(\tau,\tau')
=
\frac{1}{\sqrt{H}}\|\tau-\tau'\|_F
=
\Gamma \|\tau-\tau'\|,
\qquad \Gamma=\frac{1}{\sqrt{H}}.
\]
This is exactly the metric compatibility inequality, with equality (hence the constant is tight).
Since $\frac{1}{\sqrt{H}} \le 1$ for all $H\ge 1$, the inequality also holds for any $\Gamma' \ge \Gamma$ (in particular $\Gamma'=1$).
\end{proof}

\subsection{Limitations}
Muninn is a practical and general inference-time wrapper, but it has several limitations that are important for correct use and for future work.

\paragraph{Risk guarantees are tied to the chosen metric and calibration distribution}
Muninn certifies closeness to the full-compute planner only with respect to the user-selected trajectory deviation metric $d(\cdot,\cdot)$ and the deployment distribution $\mathcal{D}$ represented by the calibration set. If the runtime distribution shifts substantially (e.g., different obstacle statistics, new robot dynamics regimes, unseen contact interactions, different goal distributions, or altered observation noise), the exchangeability assumption underlying conformal calibration may be violated and empirical risk control can degrade. In practice, this motivates periodic recalibration or domain-specific monitoring when operating far from the calibration regime.

\paragraph{The certificate bounds deviation to the teacher, not task-level safety}
Munin’s guarantee is a bound on deviation from the full-compute diffusion planner; it does not by itself guarantee collision avoidance, constraint satisfaction, or stability of the closed-loop system. If the base planner is unsafe in some contexts, Muninn will remain close to that behavior. Conversely, even small deviations can matter for discontinuous constraints, so a tight deviation guarantee is helpful but not sufficient as a standalone safety mechanism. Muninn is best viewed as a \emph{faithfulness} and \emph{risk budgeting} layer that can be combined with existing safety filters, controllers, or verification modules.

\paragraph{Conservatism from union bounds and step-wise allocation}
Our global risk bound relies on a union bound across cache-eligible timesteps, which can be conservative when step-wise failures are correlated. This conservatism can reduce achievable speedup at very small target risk levels or long diffusion horizons. Tighter sequence-level calibration (e.g., conformal methods designed for time-series or martingale settings, or learned joint envelopes over step sequences) could reduce conservatism while preserving rigorous risk control.

\paragraph{Dependence on probe informativeness and architecture access}
Muninn requires a low-cost probe $\Psi$ that is sufficiently predictive of reuse error. For some architectures, extracting a meaningful intermediate representation may require engineering access to model internals, and extremely shallow probes may be too weak to be useful. Although Muninn does not require retraining, it assumes that a partial forward pass or lightweight representation can be computed cheaply and consistently across steps.

\paragraph{Sampler/model changes require recalibration and re-derived coefficients}
Munin’s sensitivity accounting depends on the sampler update coefficients (through $L_t$) and on the probe--error relationship learned during calibration. Changing the sampler (e.g., DDPM vs.\ DDIM variants, different noise schedules), applying strong guidance, or modifying conditioning can alter both sensitivity and error statistics. While Muninn can be applied without retraining, these changes generally necessitate recalibration and, depending on the sampler, recomputation of the sensitivity terms.

\paragraph{Interaction with downstream control and replanning remains nuanced}
In receding-horizon control, the effect of bounded deviation in planned trajectories on closed-loop performance depends on replanning rate, tracker/controller robustness, actuation limits, and state estimation noise. While Muninn preserves the teacher distribution up to the calibrated tolerance, a deeper understanding of how deviation budgets translate into closed-loop stability and safety margins is an important direction for future work.

Despite these limitations, Muninn provides a practical, deployment-oriented step toward making trajectory diffusion planners real-time: it offers a training-free mechanism to reduce compute while maintaining explicit, user-controlled fidelity to the full-compute planner and exposing an interpretable runtime signal that can be integrated into larger safety and control systems.

\end{document}